\definecolor{Black}  {RGB}{0,0,0}
\tikzstyle{none}=[inner sep=0pt]
\tikzstyle{rn}=[circle,fill=Red,draw=Black,line width=0.8 pt]
\tikzstyle{gn}=[circle,fill=White,draw=Black,line width=0.8 pt]
\tikzstyle{yn}=[circle,fill=Yellow,draw=Black,line width=0.8 pt]
\tikzstyle{simple}=[circle,fill=White,draw=Black]
\tikzstyle{newstyle1}=[circle,fill=Black,draw=Black,line width=0.3 pt,inner sep=0pt]
\tikzstyle{simple2}=[-,dashed,draw=Black]
\tikzstyle{simpledotted}=[-,dotted,draw=Black]
\tikzstyle{simple}=[-,draw=Black,line width=2.000]
\tikzstyle{arrow}=[-,draw=Black,postaction={decorate},decoration={markings,mark=at position .5 with {\arrow{>}}},line width=2.000]
\tikzstyle{tick}=[-,draw=Black,postaction={decorate},decoration={markings,mark=at position .5 with {\draw (0,-0.1) -- (0,0.1);}},line width=2.000]
\tikzstyle{newstyle2}=[-latex,draw=Black]
\tikzstyle{newstyle3}=[->,dotted,draw=Black]
\tikzstyle{newstyle6}=[->,dotted,draw=Black]
\def\mvd{\ensuremath{\rightarrow\!\!\!\rightarrow}}  
\newcommand{\f}{\ensuremath{f_{\textsf{\tiny MEM}}}}
\newcommand{\g}{\ensuremath{f_{\textsf{\tiny EQ}}}}
\newenvironment{remark}[1]{\par\noindent\underline{Remark:}\space#1}{}
\newenvironment{proof}[1]{\textit{Proof.}\space#1}{\unskip\nobreak\hfil\penalty50
   \hskip2em\hbox{}\nobreak\hfil
   \ding{111}%
   \parfillskip=0pt \finalhyphendemerits=0
    \medskip\goodbreak\noindent\ignorespacesafterend}
    \theoremstyle{plain}
\newtheorem{theoremaux}{Theorem}
\newenvironment{theorem}{\begin{theoremaux}}{\end{theoremaux}\parfillskip=0pt\finalhyphendemerits=0\goodbreak\noindent\ignorespacesafterend}
\newtheorem{lemmaaux}[theoremaux]{Lemma}
\newenvironment{lemma}{\begin{lemmaaux}}{\end{lemmaaux}\parfillskip=0pt\finalhyphendemerits=0\goodbreak\noindent\ignorespacesafterend}
\newtheorem{corollaryaux}[theoremaux]{Corollary}
\newenvironment{corollary}{\begin{corollaryaux}}{\end{corollaryaux}\parfillskip=0pt\finalhyphendemerits=0\goodbreak\noindent\ignorespacesafterend}
\theoremstyle{definition}
\newtheorem{definitionaux}[theoremaux]{Definition}
\newenvironment{definition}{\begin{definitionaux}}{\end{definitionaux}\parfillskip=0pt\finalhyphendemerits=0\goodbreak\noindent\ignorespacesafterend}
\begin{document}

\title{New Steps on the Exact Learning of CNF}

\author[1]{Montserrat Hermo}
\author[2]{Ana Ozaki}
\affil[1]{Languages and Information Systems, Univ. of the Basque Country, Spain}
\affil[2]{Department of Computer Science, Univ. of Bras\'{i}lia, Brazil}

\maketitle
\begin{abstract}
A major problem in computational learning theory  
is whether the class of formulas in conjunctive normal form (CNF) is efficiently learnable.
Although it is known that this class cannot be polynomially learned using either
membership or equivalence queries alone, it is open whether CNF can be
polynomially learned using both types of queries. 
One of the most important results concerning a restriction of the class CNF
is that propositional Horn formulas are polynomial time learnable 
 in Angluin's exact
learning model with membership and equivalence queries.   
In this work we push this boundary and show that the class of 
multivalued dependency formulas (MVDF) is polynomially learnable from interpretations.
 We then provide a
notion of reduction between learning problems in Angluin's model, showing that a transformation
of the algorithm suffices to efficiently learn multivalued database dependencies from data relations. We also
show via reductions that our main result extends well known previous results and allows us to find
alternative solutions for them. 
\end{abstract}


\section{Introduction}

In the exact learning model, proposed by Angluin~\cite{DBLP:journals/iandc/Angluin87}, 
a learner tries to identify an abstract target set by posing queries 
to an oracle. The most successful protocol
  uses  membership and equivalence queries~\cite{kearns1994introduction}. 
The exact learning model is distinguished by many other machine learning techniques 
for being a purely deductive reasoning approach. 
 Since its proposal,  
a number of researchers have investigated which concept classes can be 
polynomially learned  and 
it is  known that    
algorithms in this model 
can be transformed into solutions    
for other well known settings such as the PAC~\cite{valiant1984theory,Angluin1988}
and the online machine learning~\cite{Littlestone1988} models extended with membership queries. 

Restrictions of CNF and DNF which have been proved to be polynomially learnable 
with membership and equivalence queries include: monotone DNF (DNF formulas with no
negated variables)~\cite{Angluin1988}; $k$-clause CNF 
(CNF  formulas with at most   $k$ clauses)~\cite{Angluin1987} and 
 read-twice DNF 
(DNF where each variable occurs at most twice)~\cite{PillaipakkamnattRaghavan1995}.
The
CDNF class (boolean functions whose  CNF size is polynomial in its DNF size)~\cite{Bshouty1995} 
is also known to be learnable in polynomial time with both types of queries. 
Despite the intense effort to establish the complexity of 
learning the full classes of CNF and DNF, 
the complexity of these classes in  the exact learning model with both queries 
     remains open. 
It is known that these classes cannot be polynomially learned
using either membership or equivalence queries
alone~\cite{Angluin1988,DBLP:journals/ml/Angluin90} and  some advances in 
proving hardness of DNF with both queries appears in~\cite{Hellerstein:2002:ELD:509907.509976}.

A classical result concerning a restriction of the class 
CNF appears in~\cite{DBLP:journals/ml/AngluinFP92}, 
where propositional Horn formulas 
are proved to be polynomially learnable with membership and equivalence queries. 
In fact, Horn is a special case of a class called $k$-quasi-Horn:  
 clauses with at most $k$ unnegated literals. However, it is pointed out 
 by Angluin et. al~\cite{DBLP:journals/ml/AngluinFP92} that, even for $k = 2$, 
 learning the class of $k$-quasi-Horn formulas is as hard as learning CNF (Corollary 25 of~\cite{frazier1994matters}).
Thus, if exact learning CNF  is indeed intractable, 
the boundary of what can be learned in polynomial time with queries lies between 
$1$-quasi-Horn (or simply Horn) and $2$-quasi-Horn formulas. 
In this work we study the class of multivalued dependency formulas (MVDF)~\cite{Sagiv1981},  
 which (as we explain in the Preliminaries) 
is a 
natural restriction of $2$-quasi-Horn  and 
a non-trivial generalization of Horn.

Another motivation to study the complexity of learning MVDF is that 
this class is the logical theory behind 
multivalued   dependencies (MVD)~\cite{Sagiv1981,Balcazar2004}, in the sense that 
one can map a set of  multivalued
dependencies to a multivalued  dependency formula 
 preserving the logical consequence relation. 
A similar  equivalence between functional 
dependencies and propositional Horn formulas is given by the authors of~\cite{fagin1977functional}.  
Although data dependencies are usually determined from the semantic
attributes, they may not be known a priori by database designers.
 Discovering functional and multivalued 
 dependencies from examples of data relations using inductive 
  reasoning has been investigated 
 by~\cite{kantola1992discovering,mannila1994algorithms,huhtala1998efficient,Flach:1999:DDD:1216155.1216159}. 
Here we study this problem in Angluin's   model. In this paper, we give a
polynomial time algorithm that exactly learns multivalued dependencies formulas (MVDF) from 
interpretations. We then provide  
a formal notion of reduction for the exact learning model 
and use this notion to reduce the problem of learning MVD 
from data relations (and other problems below) to the problem of 
learning MVDF from interpretations.

\vspace{0.2cm}

{\bf Previous results.} A large part of the related work was already mentioned. We now  
  discuss some previous results which are extended by the present work.  
A  polynomial time algorithm for exact  learning (with membership and equivalence queries) 
propositional Horn from interpretations 
 was first presented by Angluin et. al~\cite{DBLP:journals/ml/AngluinFP92} (also, see~\cite{Arias2011}). 
One year later, Frazier and Pitt presented 
a polynomial time algorithm for exact  learning 
propositional Horn from entailments~\cite{DBLP:conf/icml/FrazierP93}. 
More recently, Lav\'{i}n proved polynomial time exact learnability of 
CRFMVF (resp., CRFMVD), which is 
a restriction 
of MVDF (resp., MVD)~\cite{Puente15}. 
Then, 
a polynomial time algorithm for exact learning the full class MVDF 
from $2$-quasi-Horn clauses was presented by the authors of~\cite{hermo2015exact}.

 \begin{figure}[h]
\begin{center}
\begin{tikzpicture}
	\begin{pgfonlayer}{nodelayer}
		\node [style=none] (0) at (-5.25, 4) {};
		\node [style=none] (1) at (-3, 3.5) {};
		\node [style=none] (2) at (-8.25, 3.5) {};
		\node [style=none] (3) at (-6, 3.5) {};
		\node [style=none] (4) at (-4.75, 3.5) {};
		\node [style=none] (5) at (-5.5, 4.25) {MVDF$_\mathcal{I}$};
		\node [style=none] (6) at (-5.75, 4) {};
		\node [style=none] (7) at (-4.75, 4) {};
		\node [style=none] (8) at (-6.25, 4) {};
		\node [style=none] (9) at (-8.75, 3.25) {HORN$_{\mathcal{I}}$};
		\node [style=none] (10) at (-2.5, 3.25) {CRFMVF$_{\mathcal{I}}$};
		\node [style=none] (11) at (-8.5, 3) {};
		\node [style=none] (12) at (-2.75, 3) {};
		\node [style=none] (13) at (-8.75, 2.5) {};
		\node [style=none] (14) at (-2.5, 2.5) {};
		\node [style=none] (15) at (-4.25, 3.25) {MVD$_{\mathcal{R}}$};
		\node [style=none] (16) at (-5.75, 3.25) {MVDF$_{\mathcal{Q}}$};
		\node [style=none] (17) at (-8.75, 2.25) {HORN$_{\mathcal{E}}$};
		\node [style=none] (18) at (-2.25, 2.25) {CRFMVD$_{\mathcal{R}}$};
		\node [style=none] (19) at (-7.25, 3) {MVDF$_{\mathcal{E}}$};
		\node [style=none] (20) at (-6, 4) {};
		\node [style=none] (21) at (-7, 3.25) {};
		\node [style=none] (22) at (-8.25, 2.5) {};
		\node [style=none] (23) at (-7.5, 2.75) {};
		\node [style=none] (24) at (-6.25, 3) {};
		\node [style=none] (25) at (-7, 2.75) {};
		\node [style=none] (26) at (-7, 3.5) {?};
		\node [style=none] (27) at (-7, 2.5) {};
		\node [style=none] (28) at (-6, 3) {};
		\node [style=none] (29) at (-6.25, 2.5) {?};
	\end{pgfonlayer}
	\begin{pgfonlayer}{edgelayer}
		\draw [style=newstyle2] (4.center) to (0.center);
		\draw [style=newstyle2] (3.center) to (6.center);
		\draw [style=newstyle2] (2.center) to (8.center);
		\draw [style=newstyle2] (13.center) to (11.center);
		\draw [style=newstyle2] (14.center) to (12.center);
		\draw [style=newstyle2] (12.center) to (14.center);
		\draw [style=newstyle6] (21.center) to (20.center);
		\draw [style=newstyle2] (22.center) to (23.center);
		\draw [style=newstyle2, bend left, looseness=1.00] (24.center) to (25.center);
		\draw [style=newstyle2] (0.center) to (4.center);
		\draw [style=newstyle6, bend right=45, looseness=0.75] (27.center) to (28.center);
		\draw [style=newstyle2] (1.center) to (7.center);
	\end{pgfonlayer}
\end{tikzpicture}
\caption{Reductions among learning problems}\label{fig:reductions}
\end{center}
\end{figure}
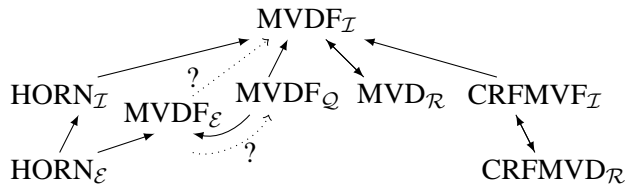

Figure~\ref{fig:reductions} shows the relationship among learning 
problems via reductions, where $C_{E} \rightarrow C'_{E'}$ means that:  
the problem of exactly learning (with membership and equivalence queries) the 
class $C$ from the examples $E$ is reducible in polynomial time to the problem of 
exactly learning the class $C'$ from $E'$. We use \Imc for interpretations, $\Emc$ for 
entailments, $\Qmc$ for $2$-quasi-Horn clauses and $\Rmc$ for data relations.  
As shown in Figure~\ref{fig:reductions}, the problem MVDF$_\Imc$, solved in 
the present work, extends previous results on the efficient learnability of 
data dependencies and their corresponding propositional formulas. 
Our positive result for  MVDF$_{\Imc}$ is a non-trivial extension of HORN$_\Imc$ (in~\cite{DBLP:journals/ml/AngluinFP92})
and CRFMVF$_{\Imc}$ (in~\cite{Puente15}) and 
allow us to prove for the first time the polynomial time 
learnability of the full class of multivalued dependencies from data relations (MVD$_{\Rmc}$). 
As shown in Figure~\ref{fig:reductions}, one can reduce HORN$_{\Emc}$ to HORN$_{\Imc}$. However, 
we did not find a way of reducing MVDF$_\Emc$ to MVDF$_\Imc$ and we leave open  
the  question of whether MVDF$_\Emc$ is polynomial time 
exactly learnable. 

\section{Preliminaries}

\paragraph{\upshape\textbf{Exact Learning}} Let $E$ be a set of examples (also called \emph{domain} or \emph{instance space}). 
A \emph{concept over} $E$ is a subset of $E$ and a \emph{concept class} is a set $C$  of  concepts over $E$.  
Each concept $c$ over $E$ induces a dichotomy of 
\emph{positive} and \emph{negative} examples, meaning that $e \in c$ 
is a positive example and $e \in E \setminus c$ is a negative example. 
For computational purposes, concepts need to be specified by some representation. 
So we define a \emph{learning framework}  to be a triple $(E,\Lmc,\mu)$, where 
$E$ is a set of examples,   
 $\Lmc$ is a set of \emph{concept representations}  
and $\mu$ is a surjective function from $\Lmc$ 
to a concept class $C$ of concepts over $E$.

Given a learning framework $\Fmf = (E,\Lmc,\mu)$, for each $l \in \Lmc$, 
denote by ${\sf MEM}_{l,\Fmf}$ the oracle that takes as input some $e \in E$ and 
returns `yes' if $e \in \mu(l)$ and `no' otherwise. 
A \emph{membership query} is a call to an oracle ${\sf MEM}_{l,\Fmf}$ with 
some $e \in E$ as input, for  $l \in \Lmc$ and $E$. 
Similarly, for every $l \in \Lmc$,
we denote by ${\sf EQ}_{l,\Fmf}$ the oracle that takes as input 
a concept representation $h \in \Lmc$ and returns `yes', if $\mu(h) = \mu(l)$, 
or a \emph{counterexample} $e \in \mu(h) \oplus \mu(l)$, otherwise.
An \emph{equivalence query} is a call to an oracle ${\sf EQ}_{l,\Fmf}$ with 
some $h \in \Lmc$ as input, for  $l \in \Lmc$ and $E$.  
We say that a learning framework $(E,\Lmc,\mu)$ is \emph{exactly learnable} if
there is an algorithm $A$ such that for any {target} $l\in \Lmc$ the algorithm $A$
always halts and outputs $l'\in\Lmc$ such that $\mu(l) = \mu(l')$ using 
membership and equivalence queries answered by the oracles ${\sf MEM}_{l,\Fmf}$ and
${\sf EQ}_{l,\Fmf}$, respectively.
A learning framework $(E,\Lmc,\mu)$ is
\emph{polynomial time} exactly learnable if it is exactly learnable by a deterministic algorithm $A$ 
such that at every step of computation the time used by $A$ up to that step is bounded by
a polynomial $p(|l|,|e|)$, where $l$ is the target and $e \in E$ is the largest
counterexample seen so far\footnote{We count each call to an oracle as one step of
computation. Also, we assume some natural notion of  length for an
example $e$ and a concept representation $l$, denoted by $|e|$ and $|l|$, respectively.}.
 
\paragraph{\upshape\textbf{    Multivalued Dependency Formulas}}
Let $V$ be a finite set of symbols, representing boolean variables. 
The logical constant \emph{true} is represented by $\mathbf{T}$ and the 
logical constant \emph{false} is represented by $\mathbf{F}$.
 A \emph{multivalued} (for short \emph{mvd}) \emph{clause}  is an implication $X \rightarrow Y \vee Z$, where 
 $X$,  $Y$ and $Z$ are pairwise disjoint conjunctions of variables from $V$ 
 and $X \cup Y \cup Z = V$. We note that some of $X,Y,Z$ may be empty. 
 An \emph{mvd formula} is a conjunction of mvd clauses.  
 A \emph{$k$-quasi-Horn clause} is a propositional clause containing at most $k$ unnegated literals. 
A \emph{$k$-quasi-Horn formula} is a conjunction of $k$-quasi-Horn clauses.   
A \emph{Horn clause} (resp., \emph{Horn formula}) is a   $k$-quasi-Horn clause 
(resp., \emph{$k$-quasi-Horn formula}) with $k=1$. 

\begin{remark}\label{rem:defs}\upshape
From the definition of an mvd clause  and a k-quasi-Horn clause  it is easy to see that:
\begin{enumerate} 
\item 
 any Horn clause is logically equivalent to a set of 2 mvd clauses. 
For instance, the Horn clause  
$135 \rightarrow 4$,
 is equivalent to: 
$\{12356 \rightarrow 4, 135 \rightarrow 4 \vee 26\}$; 
\item 
 any mvd clause is logically equivalent to a conjunction 
of 2-quasi-Horn clauses with size polynomial in the number of variables. For instance, the mvd clause 
$1 \rightarrow 23 \vee 456$,
by distribution, is equivalent to: 
$\{1 \rightarrow 2 \vee 4, 1 \rightarrow 2 \vee 5, 
1 \rightarrow 2 \vee 6, 1 \rightarrow 3 \vee 4, 1 \rightarrow 3 \vee 5, 1 \rightarrow 3 \vee 6\}$.
\end{enumerate}
 \end{remark}

 To simplify the notation, we treat sometimes conjunctions as sets and vice versa. 
Also, if for example $V = \{v_1, v_2, v_3, v_4, v_5, v_6\}$ is a set of variables and 
$\varphi =  (v_1 \rightarrow (v_2 \wedge v_3 ) \vee (v_4 \wedge v_5 \wedge v_6 )) 
\wedge ((v_2 \wedge v_3) \rightarrow  (v_1 \wedge v_5 \wedge v_6 ) \vee v_4 )$ 
is a formula then  
 we write $\varphi$ in this shorter way: 
 $\{1 \rightarrow 2 3 \vee 4 5 6, 2 3 \rightarrow 1 5 6 \vee 4\}$, 
 where conjunctions between variables are omitted and each propositional 
 variable $v_i \in V$ is mapped to $i \in \mathbb{N}$. 
 For the purposes of this paper, 
  we treat $X \rightarrow Y \vee Z$ and $X \rightarrow Z \vee Y$ as 
 distinct mvd clauses, where $Y$ and $Z$ are non-empty. For example, $12\rightarrow 34 \vee 56$ and $12\rightarrow 56 \vee 34$ 
 are counted as two  distinct mvd clauses.

An interpretation \Imc is a mapping from $V \cup \{\mathbf{T}, \mathbf{F}\}$ to $\{true,false\}$, where 
$\Imc (\mathbf{T}) = true$ and $\Imc (\mathbf{F}) = false$.
We denote by ${\sf true}(\Imc)$  the set of variables assigned to \emph{true} 
in $\Imc$. In the same way, let ${\sf false}(\Imc)$ be the set of variables assigned to \emph{false}  
in $\Imc$. Observe that ${\sf false}(\Imc) = V \setminus {\sf true}(\Imc)$. 
We follow the terminology provided by \cite{DBLP:journals/ml/AngluinFP92} and  say that 
an interpretation \Imc \emph{covers}  $X \rightarrow Y \vee Z $ if $X \subseteq {\sf true}(\Imc)$. 
An interpretation \Imc \emph{violates}  $X \rightarrow Y \vee Z $ if  
\Imc \emph{covers}  $X \rightarrow Y \vee Z $ 
and: (a) $Y$ and $Z$ are non-empty and there are $v \in Y$ and $w \in Z$ such that $v,w \in {\sf false}(\Imc)$;
or (b) one of $Y,Z$ is empty and there is $v \in Y\cup Z$ such that ${\sf false}(\Imc)= \{v\}$; or 
(c) ${\sf false}(\Imc)= \emptyset$ and $X \rightarrow Y \vee Z$ is the mvd clause $V \rightarrow \mathbf{F}$. 
If \Imc does not violate $X \rightarrow Y \vee Z $ then we write $\Imc\models X \rightarrow Y \vee Z $. 

Given two interpretations \Imc and $\Imc'$, we define $\Imc\cap\Imc'$ to be the interpretation  
such that ${\sf true}(\Imc\cap\Imc') = {\sf true}(\Imc) \cap {\sf true}(\Imc')$. 
If $\Smc$ is a sequence of interpretations and \Imc is an interpretation 
 occurring at position $i$ then 
we write $\Imc_i \in \Smc$. 
Also, we denote by $\Smc\cdot \Imc$ the result of appending \Imc to $\Smc$.
The learning MVDF from interpretations framework is  defined as 
 $\Fmf\text{(MVDF$_\Imc$)}=(E_\Imc,\Lmc_{\sf M},\mu_\Imc)$, where $E_\Imc$ 
is the set of all interpretations in the propositional variables $V$ under consideration, 
$\Lmc_{\sf M}$ is the set of all sets of mvd clauses that can be expressed in $V$ and, for every $\Tmc \in \Lmc_{\sf M}$, 
$\mu_\Imc(\Tmc) = \{\Imc \in E_\Imc\mid \Imc \models \Tmc\}$.

\section{Learning MVDF from Interpretations}
\label{sec:mvdf-interpretation}

In this section we present an  algorithm that learns the class MVDF from interpretations.   
More precisely, we show that the learning framework 
$\Fmf\text{(MVDF$_\Imc$)}$  
is polynomial time exactly learnable.

The learning algorithm for $\Fmf\text{(MVDF$_\Imc$)}$ is given by 
Algorithm \ref{alg:learner}. 
Algorithm \ref{alg:learner} 
 maintains a sequence \Pmf of interpretations which are 
positive examples for the target \Tmc and a sequence $\Lmf$ 
of interpretations which are negative examples (for the target \Tmc).
The learner's hypothesis \Hmc is constructed using both \Pmf and \Lmf. 
In order to learn all of  the mvd clauses in \Tmc, 
 we would like that mvd clauses induced by the 
 elements of \Pmf and \Lmf  approximate distinct mvd clauses in \Tmc. 
 This will happen if  at most polynomially many  elements in \Lmf violate
the same mvd clause in \Tmc. 
Overzealous refinement of a sequence of interpretations is 
a situation described by \cite{DBLP:journals/ml/AngluinFP92}. It may 
result in a loop   where we have several elements of the sequence 
violating  
the same clause in the target. 
We avoid this in Algorithm \ref{alg:learner} by (1) 
refining   negative counterexamples with elements of \Lmf (Line~\ref{line:cm})
and (2)  refining at most one (the first) element of  \Lmf per iteration (Line~\ref{line:replace}). 
We use the following notion,   provided by \cite{hermo2015exact},    
to
describe under which conditions the learner should 
refine either a negative counterexample or
an element of \Lmf. 

\begin{definition}
\label{def:good}
A pair $(\Imc, \Imc')$ of interpretations is a ${\sf good}{\sf Candidate}$ if: 
(i) ${\sf true}(\Imc \cap \Imc') \subset {\sf true}(\Imc)$; (ii) $\Imc \cap \Imc' \models \Hmc$; and 
(iii) $\Imc \cap \Imc' \not\models \Tmc$. 
 \end{definition}

 In the following we provide the main ideas of our proof (omitted proofs are given in full detail in the appendix). 
 If Algorithm~\ref{alg:learner} terminates, then it obviously has found a hypothesis \Hmc that is logically
equivalent to \Tmc, formulated with variables in $V$. It thus remains to show that Algorithm \ref{alg:learner} terminates in polynomial
time. 
In each iteration, one of the following three cases happens: 
\begin{enumerate}
\item a positive counterexample is 
added to the sequence \Pmf (Line~\ref{line:append-positive}); or 
\item  a negative example in \Lmf is replaced (Line~\ref{line:replace}); or 
\item a negative counterexample is appended to the sequence \Lmf (Line~\ref{line:append}). 
\end{enumerate}

\begin{algorithm} [h]
\begin{algorithmic}[1]
\State Let \Lmf be a sequence of negative examples and \Pmf a sequence of positive examples
\State
  Set $\Hmc_0 :=\{V\rightarrow \mathbf{F}\mid \Tmc \models V\rightarrow \mathbf{F}\}\cup 
 \{V\setminus\{v\}\rightarrow v\mid v \in V\text{ and }\Tmc \models V\setminus\{v\}\rightarrow v\} $   \label{line:hypo0}
\State Set $\Lmf :=\emptyset$, $\Pmf :=\emptyset$ and  $\Hmc := \Hmc_0$  \label{line:hypo} 
 
\While {$ \Hmc\not\equiv \Tmc$} 
\State Let \Imc be a counterexample \label{alg:pce}

\If {$\Imc\not\models \Hmc$} \label{line:decide}
\State Append \Imc to \Pmf \label{line:append-positive}
\Else  
\State Set $\Jmc:=$ \Call{RefineCounterexample}{$\Imc$,\Lmf} \label{line:cm}
\If {there is $\Imc_k \in \Lmf$ such that ${\sf goodCandidate}(\Imc_k, \Jmc)$} \label{line:cond}
\State Let $\Imc_i$ be the first in $\Lmf$ such that  ${\sf goodCandidate}(\Imc_i, \Jmc)$ \label{line:goodcandidate1}
\State Set $\Pmf':=$  \Call{UpdatePositiveExamples}{$  \Jmc,\Pmf,\Lmf$} and $\Pmf:=\Pmf'$ \label{line:updateseq}
\State Replace $\Imc_i \in \Lmf$ by $\Jmc$ \label{line:replace} 
\State Remove all $\Imc_j \in \Lmf\setminus\{\Jmc\}$ 
such that $\Imc_j \not\models$\Call{BuildClauses}{$\Jmc, \Pmf$} \label{line:remove-set}
\Else 
\State Append $\Jmc$ to $\Lmf$ \label{line:append}
\EndIf
\EndIf
\State Construct $\Hmc := \Hmc_0 \cup  \bigcup_{\Imc \in \Lmf}$\Call{BuildClauses}{$\Imc,\Pmf$} \label{line:transform}
\EndWhile
\end{algorithmic}
\caption{Learning algorithm for MVDF from Interpretations  \label{alg:learner}}
\end{algorithm}

\begin{algorithm} [h]
\begin{algorithmic}
\State Set $\Jmc:=\Imc$
\If {there is $\Imc_k \in \Lmf$ such that ${\sf goodCandidate}(\Imc, \Imc_k)$} \label{line:cond2}
\State Let $\Imc_i$ be the first in $\Lmf$ such that  ${\sf goodCandidate}(\Imc, \Imc_i)$ \label{line:goodcandidate2}
\State $\Jmc:=$\Call{RefineCounterexample}{$\Imc \cap \Imc_i$,\Lmf} \label{line:replace2}
\EndIf
\State \textbf{return}$(\Jmc)$
\end{algorithmic}
\caption{\textbf{Function} RefineCounterexample ($\Imc,\Lmf$) \label{alg:learning-mvdf-up}}
\end{algorithm}

To prove polynomial time learnability, we need to ensure that 
each iteration is done in polynomial time in the size of \Tmc and 
that the total number of iterations is also bounded. 
That is, the number of times  Cases 1, 2 and 3 happen is polynomial in the size of \Tmc. 
For Case 2 we note that 
each time a negative example is replaced, the number of variables 
assigned to true strictly decreases (Point (i) of Definition~\ref{def:good}). Then, 
Algorithm \ref{alg:learner} replaces each element of \Lmf at most $|V|$ times. 

Before we give a bound for  
Cases 1 and 3, we explain the bound on the number of recursive calls. 
We first note that in each recursive call 
of Function `RefineCounterexample'  (Algorithm \ref{alg:learning-mvdf-up})
the number of   variables 
assigned to true in a negative 
counterexample strictly decreases (Point (i) of Definition~\ref{def:good}).
This means that in each iteration of Algorithm \ref{alg:learner} the 
number of recursive calls of Function `RefineCounterexample' is 
at most $|V|$.  
To see the bound on the number of recursive calls of 
Function `UpdatePositiveExamples' (Algorithm~\ref{alg:remove}) we  use Lemma~\ref{lem:positive1}.   
By construction of $\Hmc_0$ (Line \ref{line:hypo0} of Algorithm \ref{alg:learner}) 
we can assume that all negative examples we deal with 
violate $X\rightarrow Y \vee Z \in \Tmc$ with $Y,Z$ non-empty\footnote{We note that   
one can easily check whether `$\Tmc\models V\rightarrow \mathbf{F}$' and 
`$\Tmc\models V\setminus\{v\}\rightarrow v$'   with 
 membership queries that receive interpretations as input.}. 
We write \Call{BuildClauses}{$\Imc, \Pmf$} for the set of mvd clauses returned as 
 output of Function `BuildClauses' 
(Algorithm~\ref{alg:build}) with \Imc and \Pmf as input.

\begin{lemma}
\label{lem:positive1}
Let $\Imc$ be a negative example. 
Let  \Call{BuildClauses}{$\Imc, \Pmf$}
$=\{ 
 {\sf true}(\Imc) \rightarrow Y_1 \vee Z_1,$ $ \ldots , {\sf true}(\Imc) \rightarrow Y_k \vee Z_k\} $.  
Then, for all $i, j$, such that $1 \leq i < j \leq k$, 
we have that:
$Y_i \cap Y_j = \emptyset \text{ and }  
\bigcup_{j=1}^k Y_j  = {\sf false}(\Imc).$ 
Moreover, for any ${\sf true}(\Imc)\rightarrow Y_i\vee Z_i$, $1\leq i \leq k$, 
we have that $Y_i$, $Z_i$ are non-empty.  
\end{lemma}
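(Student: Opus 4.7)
The strategy is to unfold the construction carried out by Function `BuildClauses' (Algorithm~\ref{alg:build}) on inputs $\Imc$ and $\Pmf$ and read the three claims directly off that construction. From the informal description of the algorithm given earlier in this section, `BuildClauses' groups the variables of ${\sf false}(\Imc)$ into blocks $Y_1, \ldots, Y_k$ using the positive examples in $\Pmf$: two false variables $v, w \in {\sf false}(\Imc)$ are placed in the same block precisely when no positive example whose true set contains ${\sf true}(\Imc)$ separates them on the false side. Each block $Y_i$ is then paired with its complement $Z_i = {\sf false}(\Imc) \setminus Y_i$ to form the candidate clause ${\sf true}(\Imc) \rightarrow Y_i \vee Z_i$.

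From this viewpoint, the two combinatorial identities $Y_i \cap Y_j = \emptyset$ for $i \neq j$ and $\bigcup_{j=1}^{k} Y_j = {\sf false}(\Imc)$ are immediate, since $Y_1, \ldots, Y_k$ are by construction the classes of an equivalence relation on ${\sf false}(\Imc)$ and hence partition it. Non-emptiness of each $Y_i$ follows from the same observation: a block of a partition always contains at least its generating element.

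The delicate point is the non-emptiness of $Z_i$, which amounts to $k \geq 2$. To establish this I would invoke the preprocessing in line~\ref{line:hypo0} of Algorithm~\ref{alg:learner}: $\Hmc_0$ already contains every target clause of the shape $V \rightarrow \mathbf{F}$ or $V \setminus \{v\} \rightarrow v$, so any negative example $\Imc$ reaching `BuildClauses' must violate a clause in $\Tmc$ whose right-hand side $Y \vee Z$ has both sides non-empty. In particular $|{\sf false}(\Imc)| \geq 2$, and I would argue that the positive examples accumulated in $\Pmf$ must expose at least one such separation on ${\sf false}(\Imc)$; otherwise $\Imc$ would be consistent with the current $\Hmc$ and would never have been classified as a negative counterexample. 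The main obstacle is precisely this final argument: one has to appeal carefully to the invariants maintained through lines~\ref{line:updateseq} and~\ref{line:remove-set} of Algorithm~\ref{alg:learner} to guarantee that $\Pmf$ is rich enough to witness a genuine separation of ${\sf false}(\Imc)$ whenever such a separation exists in $\Tmc$. The two partition properties and the non-emptiness of the $Y_i$'s are, by contrast, essentially definitional.
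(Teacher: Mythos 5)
Your partition claims ($Y_i \cap Y_j = \emptyset$, $\bigcup_j Y_j = {\sf false}(\Imc)$, each $Y_i \neq \emptyset$) are indeed essentially definitional and correspond to the part of the paper's induction that it dismisses with ``clearly $(\ast)$ holds'': Line~\ref{line:initial} starts from singletons, and Line~\ref{line:replace-function} only merges blocks, so a partition of ${\sf false}(\Imc)$ is maintained. (As an aside, your characterization of the blocks as ``$v,w$ in the same block precisely when no positive example separates them'' is not right -- merging is triggered by positive examples whose false sets \emph{straddle} two blocks; with $\Pmf$ empty nothing separates anything, yet the blocks are singletons -- but this does not affect the partition claims.)

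The genuine gap is exactly where you say it is: the non-emptiness of $Z_i$, i.e.\ that the merging never collapses everything into a single block. Your sketched route for it is the wrong one. You propose to argue that $\Pmf$ must be ``rich enough to expose a separation'' because otherwise $\Imc$ ``would be consistent with the current $\Hmc$ and would never have been classified as a negative counterexample,'' invoking the invariants of Lines~\ref{line:updateseq} and~\ref{line:remove-set}. But the lemma is a statement about an arbitrary negative example for $\Tmc$ (it is also applied to intersections such as $\Jmc$ that were never oracle counterexamples), richness of $\Pmf$ is irrelevant (the empty $\Pmf$ case is fine because $|{\sf false}(\Imc)|\geq 2$), and consistency with $\Hmc$ plays no role. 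The argument that actually works is purely semantic and needs an auxiliary invariant (the paper's Lemma~\ref{lem:positive3}): since $\Imc$ violates some $X' \rightarrow Y' \vee Z' \in \Tmc$ with $Y',Z'$ non-empty (guaranteed by the $\Hmc_0$ preprocessing, as you note), and every $\Pmc \in \Pmf$ is a model of $\Tmc$ covering $X' \subseteq {\sf true}(\Imc)$ whenever it triggers a merge, each such $\Pmc$ has ${\sf false}(\Pmc)$ entirely inside $Y'$ or entirely inside $Z'$; by induction every block $Y_i$ therefore lies inside $Y'$ or inside $Z'$, so merges never cross the $Y'/Z'$ boundary. If all of ${\sf false}(\Imc)$ were one block it would be contained in $Y'$ or in $Z'$, contradicting that $\Imc$ violates $X' \rightarrow Y' \vee Z'$. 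Without this ``each $Y_i$ stays on one side of some violated target clause'' invariant, your proof does not go through; with it, the appeal to the algorithm's bookkeeping you planned to make is unnecessary.
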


\begin{algorithm}[h]
\begin{algorithmic}
\State Set $X := {\sf true}(\Imc)$ and $C := \{ X \rightarrow v \vee V\setminus (X\cup \{v\})  \mid  v \in V \setminus X \}$ \label{line:initial}
\For {each $\Imc_l \in \Pmf$}  \label{line:firstFor}
\State Let $ X \rightarrow Y_1 \vee Z_1 ,   \ldots , X \rightarrow Y_k \vee Z_k  $ 
 be the mvd clauses in $C$ violated by $\Imc_l$
                    \label{line:Hmc1}
\State Replace in $C$ all these mvd clauses by 
$X \rightarrow \bigcup_{j=1}^k Y_j \vee (\bigcup_{j=1}^k Z_j\setminus \bigcup_{j=1}^k Y_j) $  \label{line:replace-function}     
\EndFor
\State \textbf{return} ($C$)	
\end{algorithmic}
\caption{\textbf{Function} BuildClauses ($\Imc, \Pmf$) 
\label{alg:build}}
\end{algorithm}
 
\begin{algorithm}[h]
\begin{algorithmic}
\State Set $\Pmf':=\Pmf$ 
\If {there are distinct $\Imc_k,\Imc_l \in \Lmf$ such that 
$\Imc_k\cap\Imc_l\not\models$\Call{BuildClauses}{$ \Kmc, \Pmf$} and 
$\Imc_k \cap \Imc_l\models \Tmc$} \label{line:special}
\State Append $\Imc_k\cap\Imc_l$ to $\Pmf$ \label{line:add2}
\State $\Pmf':=$\Call{UpdatePositiveExamples}{ $  \Kmc, \Pmf,\Lmf$ }  \label{line:rec-call}
\EndIf
\State \textbf{return} ($\Pmf'$)	
\end{algorithmic}
\caption{\textbf{Function} UpdatePositiveExamples ($ \Kmc, \Pmf,\Lmf$)  
\label{alg:remove}}
\end{algorithm}

By Lemma~\ref{lem:positive1} above we have that the `$Y$' consequents of  mvd clauses 
returned by Function `BuildClauses' (Algorithm~\ref{alg:build}) 
are non-empty and mutually disjoint. So the number of mvd clauses 
returned by this function is bounded by $|V|$. 
Regarding Function `UpdatePositiveExamples' (Algorithm~\ref{alg:remove}) called in Line~\ref{line:updateseq}, we note that 
$\Kmc = \Jmc$ is a negative example and 
that in Line~\ref{line:add2}, we have that $\Imc_k\cap\Imc_l\not\models$\Call{BuildClauses}{$\Kmc,\Pmf$}. 
Then, the next lemma ensures that in each recursive call of Function 
`UpdatePositiveExamples' (Algorithm~\ref{alg:remove}) the 
 number of mvd clauses returned by Function `BuildClauses' (Algorithm~\ref{alg:build}) with $\Kmc$ and \Pmf as input, 
 strictly decreases. 
 Since (by Lemma~\ref{lem:positive1}  above) the number of mvd clauses returned by Function `BuildClauses' (Algorithm~\ref{alg:build}) is 
 at most $|V|$, the next lemma bounds  the number of recursive 
calls of Function `UpdatePositiveExamples' (Algorithm~\ref{alg:remove}) to $|V|$.
 
 \begin{lemma} \label{lem:remove1}
Let $\Imc$ be a negative example. 
If  $\Pmc\not\models$ \Call{BuildClauses}{$\Imc, \Pmf$} then 
the number of mvd clauses returned by \Call{BuildClauses}{$\Imc, \Pmf\cdot \Pmc$} 
is strictly smaller than the number of mvd clauses returned by \Call{BuildClauses}{$\Imc, \Pmf$}. 
\end{lemma}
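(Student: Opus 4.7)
The plan is to show that the positive example $\Pmc$ violates at least two mvd clauses in the set $C := $ \Call{BuildClauses}{$\Imc, \Pmf$}. Once this is established, the conclusion follows easily: the computation of \Call{BuildClauses}{$\Imc, \Pmf \cdot \Pmc$} agrees with that of \Call{BuildClauses}{$\Imc, \Pmf$} through the first $|\Pmf|$ iterations of the \textbf{for} loop and differs only in the final iteration processing $\Pmc$, which by Line~\ref{line:replace-function} replaces the (at least two) clauses of $C$ violated by $\Pmc$ with a single merged clause, strictly decreasing the count.

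To set this up I would first prove, by induction on the iterations of Algorithm~\ref{alg:build}, the structural invariant that every clause currently in $C$ has the form $X \to Y \vee ((V \setminus X) \setminus Y)$ for some non-empty $Y \subseteq V \setminus X$, and that the $Y$-parts of the clauses in $C$ form a partition of $V \setminus X = {\sf false}(\Imc)$. The base case is immediate from the initialisation in Line~\ref{line:initial}. For the inductive step, if each $Z_j = (V \setminus X) \setminus Y_j$ and the $Y_j$ are pairwise disjoint, then $\bigcup_j Z_j \setminus \bigcup_j Y_j = (V \setminus X) \setminus \bigcup_j Y_j$, so the merged clause preserves the form, and replacing several blocks of the partition by their union preserves the partition. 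By Lemma~\ref{lem:positive1} applied to the output $C$, both parts of each clause in $C$ are in fact non-empty (so the partition has at least two blocks).

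Now suppose $\Pmc \not\models C$ and pick a clause $X \to Y \vee Z$ of $C$ violated by $\Pmc$. Since $Y$ and $Z$ are both non-empty, only case (a) of the violation definition can apply, so $X \subseteq {\sf true}(\Pmc)$ and there exist $y \in Y$, $z \in Z$ with $y, z \in {\sf false}(\Pmc)$. Because $z \in Z = (V \setminus X) \setminus Y$, the partition invariant supplies a distinct block $Y'$ with $z \in Y'$, and the corresponding clause $X \to Y' \vee ((V \setminus X) \setminus Y')$ lies in $C$. Since $Y \cap Y' = \emptyset$ and $y \in Y \subseteq V \setminus X$, we get $y \in (V \setminus X) \setminus Y'$, so case (a) shows that $\Pmc$ also violates this second clause. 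Thus $\Pmc$ violates at least two clauses of $C$, which is exactly what was needed. The main obstacle is the bookkeeping around the partition invariant, especially verifying that the merging step in Line~\ref{line:replace-function} preserves the ``$Z$ is the complement of $Y$ in $V \setminus X$'' property; once that is in place the combinatorial step producing a second violated clause is short.
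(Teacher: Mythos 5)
Your proof is correct and follows essentially the same route as the paper's: use the disjointness/covering structure of the $Y$-parts (Lemma~\ref{lem:positive1}) to exhibit a second clause of \Call{BuildClauses}{$\Imc,\Pmf$} violated by $\Pmc$, then observe that the final iteration of Algorithm~\ref{alg:build} merges all violated clauses into one, strictly decreasing the count. The only difference is that you spell out, via an explicit invariant, the fact that each $Z$-part is the complement of its $Y$-part in ${\sf false}(\Imc)$, which the paper uses implicitly when it concludes $v \in Z_j$.
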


\begin{proof} 
Suppose that ${\sf true}(\Imc) \rightarrow Y_i \vee Z_i \in$ \Call{ BuildClauses}{$\Imc, \Pmf $} is violated by $\Pmc$.
Then, there is $v \in Y_i$ and $w \in Z_i$ such that $v,w \in {\sf false}(\Pmc)$. 
By Lemma~\ref{lem:positive1} there is ${\sf true}(\Imc) \rightarrow Y_j \vee Z_j \in$ \Call{ BuildClauses}{$\Imc, \Pmf $} 
such that $w \in Y_j$ and $v \in Z_j$. In Line~\ref{line:replace-function}, 
Algorithm~\ref{alg:build} replaces (at least) these two mvd clauses by 
a single mvd clause. 
So the number of mvd clauses strictly decreases, as required. 
\end{proof}

By Lemma \ref{lem:disj} below if any two interpretations $\Imc_i,\Imc_j \in \Lmf$ 
violate the same mvd clause in \Tmc then their sets of false variables are mutually disjoint. 
By construction of $\Hmc_0$ we can assume that their sets of false variables are non-empty.  
Then, the number of 
interpretations violating any mvd clause in \Tmc is bounded by $|V|$.  
\begin{lemma}\label{lem:disj}
Let $\Imc_i,\Imc_j\in \Lmf$ and assume $i < j$.
At the end of each iteration, if $\Imc_i,\Imc_j\in\Lmf$ violate $c \in \Tmc$ then 
${\sf false}(\Imc_i) \cap {\sf false}(\Imc_j) = \emptyset$.
\end{lemma}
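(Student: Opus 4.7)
The plan is to prove Lemma~\ref{lem:disj} by induction on the iterations of Algorithm~\ref{alg:learner}, maintaining the invariant that at the end of each iteration, any two elements of \Lmf that violate a common clause of \Tmc have disjoint false sets. The base case is vacuous since \Lmf starts empty. For the inductive step, removing elements from \Lmf (Line~\ref{line:remove-set}) and appending to \Pmf (Line~\ref{line:append-positive}) trivially preserve the invariant on remaining pairs, so it suffices to handle the replacement at Line~\ref{line:replace} and the append at Line~\ref{line:append}.

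I focus on the append. Let \Jmc be the element about to be appended to \Lmf and fix some $\Imc_i$ already in \Lmf. Suppose toward contradiction that $\Imc_i$ and \Jmc both violate the same $c = X \rightarrow Y \vee Z \in \Tmc$ and that some $v \in {\sf false}(\Imc_i) \cap {\sf false}(\Jmc)$. The key observation is that $\Imc_i \cap \Jmc$ also violates $c$: it covers $X \subseteq {\sf true}(\Imc_i)$ and inherits $\Imc_i$'s violation witnesses, which lie in ${\sf false}(\Imc_i) \subseteq {\sf false}(\Imc_i \cap \Jmc)$. Hence condition~(iii) of ${\sf goodCandidate}$ holds for both $(\Imc_i, \Jmc)$ and $(\Jmc, \Imc_i)$. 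Since \Jmc is appended rather than used to refine, neither ${\sf goodCandidate}(\Imc_i, \Jmc)$ (Line~\ref{line:cond}) nor ${\sf goodCandidate}(\Jmc, \Imc_i)$ (termination of RefineCounterexample) holds, so (i) or (ii) must fail in each pair. The cases split by the containment between ${\sf true}(\Imc_i)$ and ${\sf true}(\Jmc)$. If ${\sf true}(\Jmc) \subsetneq {\sf true}(\Imc_i)$, then $\Imc_i \cap \Jmc = \Jmc$; because \Jmc is output by RefineCounterexample, whose refinements preserve \Hmc, we have $\Jmc \models \Hmc$, so all three conditions of ${\sf goodCandidate}(\Imc_i, \Jmc)$ hold, contradicting the algorithm's check. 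If ${\sf true}(\Imc_i) \subsetneq {\sf true}(\Jmc)$, then \Jmc covers every clause in \Call{BuildClauses}{$\Imc_i, \Pmf$}, which by Lemma~\ref{lem:positive1} has the form $\{{\sf true}(\Imc_i) \rightarrow Y_\alpha \vee Z_\alpha\}_\alpha$ with $\{Y_\alpha\}$ partitioning ${\sf false}(\Imc_i)$. Letting $y' \in Y$ and $z' \in Z$ be witnesses of \Jmc's violation of $c$, both lying in ${\sf false}(\Jmc) \subseteq {\sf false}(\Imc_i)$, non-violation of the \Call{BuildClauses}{} clauses by \Jmc forces $y'$ and $z'$ into the same class $Y_\alpha$. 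Unfolding the mergings that produced $Y_\alpha$ yields a chain of positive examples $\Imc_{l_1}, \ldots, \Imc_{l_t}$ linking $y'$ to $z'$; each $\Imc_{l_j}$ covers $X$ and satisfies $c$, so its false set is disjoint from $Y$ or from $Z$, which, propagated along the chain from $y' \in Y$, forces $z' \in Y$, contradicting $Y \cap Z = \emptyset$.

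The remaining situation is when neither true-set is contained in the other, so (i) holds for both pairs, forcing (ii) to fail: $\Imc_i \cap \Jmc \not\models \Hmc$. Since $\Imc_i, \Jmc \models \Hmc_0$, the violated clause lies in \Call{BuildClauses}{$\Imc_k, \Pmf$} for some $\Imc_k \in \Lmf$, and I plan to close this case by combining the inductive hypothesis applied to $(\Imc_i, \Imc_k)$ with the chain-of-positive-examples argument of the previous case, using a clause in \Tmc that $\Imc_k$ violates. The replacement case should unfold along the same lines, with the additional cleanup at Line~\ref{line:remove-set} ensuring that surviving elements satisfy the clauses induced by the replacement. The main obstacle is this last subcase in the append, since the violated clause can be contributed by any prior element of \Lmf and one must delicately combine the structural properties of \Call{BuildClauses}{} with the inductive hypothesis without introducing circularity.
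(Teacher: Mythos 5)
There is a genuine gap, and you have located it yourself: the ``incomparable'' subcase of the append and the entire replacement case are not proved. In the subcase where neither of ${\sf true}(\Imc_i)$, ${\sf true}(\Jmc)$ contains the other, you correctly reduce to $\Imc_i \cap \Jmc \not\models \Hmc$, with the violated clause coming from \Call{BuildClauses}{$\Imc_k,\Pmf$} for some third element $\Imc_k \in \Lmf$. But your plan to close this by ``applying the inductive hypothesis to $(\Imc_i,\Imc_k)$ with a clause in \Tmc that $\Imc_k$ violates'' does not go through: the inductive hypothesis only speaks about pairs that violate a \emph{common} clause of \Tmc, and nothing guarantees that $\Imc_i$ (or \Jmc) violates any clause of \Tmc that $\Imc_k$ violates, so the disjointness you would need is simply not available from the invariant you are maintaining. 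What actually closes this case in the paper is a different pair of tools: the invariant that every $\Imc_i \in \Lmf$ satisfies all hypothesis clauses not induced by itself (Lemma~\ref{lem:curious}, which itself needs its own maintenance argument using Line~\ref{line:remove-set} and the ${\sf goodCandidate}$ checks), combined with the mvd-specific intersection property (Lemma~\ref{lem:technical}): if two interpretations with intersecting false sets both satisfy a set of mvd clauses, so does their intersection. Under your contradiction hypothesis ${\sf false}(\Imc_i)\cap{\sf false}(\Jmc)\neq\emptyset$, these give $\Imc_i \cap \Jmc \models \Hmc\setminus(\Hmc_i\cup\Hmc_j)$, and incomparability of the true sets disposes of $\Hmc_i$ and $\Hmc_j$ by non-coverage, contradicting $\Imc_i \cap \Jmc \not\models \Hmc$. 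Neither Lemma~\ref{lem:curious} nor Lemma~\ref{lem:technical} (nor substitutes for them) appears in your proposal.

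The replacement case cannot be waved through ``along the same lines'' either: in the paper it is the bulk of Lemma~\ref{lem:nogood}, and it splits into whether $\Imc_i$, $\Imc_j$, or an unrelated element is replaced, using the recursive exit condition of RefineCounterexample (Line~\ref{line:cond2}), the choice of the \emph{first} ${\sf goodCandidate}$ in Line~\ref{line:goodcandidate1}, and, for the unrelated-replacement subcase, again Lemmas~\ref{lem:curious} and~\ref{lem:technical} together with an induction on iterations. By contrast, the parts you did work out (the two comparable-true-set subcases of the append, via the partition structure of \Call{BuildClauses}{} and the ``either inside $Y$ or inside $Z$'' propagation along positive examples) are sound and essentially reprove the paper's Lemmas~\ref{lem:positive3} and~\ref{lem:new}, i.e., the content of Lemma~\ref{lem:tech2}. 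So your proposal recovers one half of the paper's dichotomy (``common violation plus intersecting falses forces a ${\sf goodCandidate}$'') but not the other half (``no pair in \Lmf with intersecting falses is ever a ${\sf goodCandidate}$''), and it is precisely that second half that carries the difficulty.
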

\begin{corollary}\label{cor:length}
At the end of each iteration every $c \in \Tmc$ is violated by at most $|V|$ interpretations in $\Lmf$.
\end{corollary}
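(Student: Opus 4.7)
The plan is to derive the corollary as a short counting consequence of Lemma~\ref{lem:disj}. Fix any $c \in \Tmc$ and let $\Imc_{i_1}, \ldots, \Imc_{i_m}$ be the interpretations of $\Lmf$ (listed in the order in which they appear in the sequence) that violate $c$. Lemma~\ref{lem:disj} applied to every pair $i_r < i_s$ gives ${\sf false}(\Imc_{i_r}) \cap {\sf false}(\Imc_{i_s}) = \emptyset$, so the family $\{{\sf false}(\Imc_{i_r})\}_{r=1}^{m}$ consists of pairwise disjoint subsets of $V$. If I can argue that each of these sets is moreover non-empty, then their disjoint union is a set of $\sum_{r=1}^m |{\sf false}(\Imc_{i_r})| \geq m$ elements contained in $V$, forcing $m \leq |V|$.

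Thus the one step that actually needs care is the non-emptiness of ${\sf false}(\Imc)$ for every $\Imc \in \Lmf$, and here I would invoke the construction of $\Hmc_0$ in Line~\ref{line:hypo0} of Algorithm~\ref{alg:learner}. Suppose for contradiction that some $\Imc \in \Lmf$ had ${\sf false}(\Imc) = \emptyset$, i.e., $\Imc$ assigns true to every variable of $V$. Since $\Imc$ was placed into $\Lmf$, it is a counterexample with $\Imc \not\models \Tmc$, so $\Imc$ violates some clause of $\Tmc$; but the only mvd clause any such interpretation can violate is $V \rightarrow \mathbf{F}$, which therefore belongs to $\Tmc$ and hence to $\Hmc_0 \subseteq \Hmc$. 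Then $\Imc \not\models \Hmc$, so the condition in Line~\ref{line:decide} routes $\Imc$ to $\Pmf$ and not to $\Lmf$, contradicting $\Imc \in \Lmf$.

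Combining these two observations finishes the argument: every $\Imc \in \Lmf$ has non-empty ${\sf false}(\Imc)$, Lemma~\ref{lem:disj} makes the family $\{{\sf false}(\Imc_{i_r})\}$ pairwise disjoint, and $|V|$ therefore bounds the number of interpretations of $\Lmf$ violating the same $c \in \Tmc$. The main obstacle has already been absorbed into Lemma~\ref{lem:disj}; what remains here is genuinely a one-line counting step, with the only subtlety being the appeal to $\Hmc_0$ to rule out the degenerate all-true interpretation from $\Lmf$.
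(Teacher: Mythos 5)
Your proposal is correct and follows essentially the same route as the paper: pairwise disjointness of the ${\sf false}$ sets from Lemma~\ref{lem:disj}, non-emptiness via the construction of $\Hmc_0$, and a one-line counting step inside $V$. The only minor imprecision is that what actually enters $\Lmf$ is the refined interpretation $\Jmc$ rather than the raw counterexample $\Imc$, but $\Jmc$ also satisfies $\Hmc$ at insertion (by Line~\ref{line:decide} or condition (ii) of Definition~\ref{def:good}), so your contradiction with $V \rightarrow \mathbf{F} \in \Hmc_0$ goes through unchanged.
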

So, at all times the number of elements in \Lmf is bounded by $|\Tmc|\cdot|V|$.  
We now show that   
the number of iterations of Algorithm~\ref{alg:learner} 
is polynomial in the size of \Tmc.
We first present in Lemma~\ref{lem:remove-bound} a polynomial upper bound on the number of iterations  
 where Algorithm~\ref{alg:learner} receives a negative counterexample.  
Note that we obtain this upper bound  
even though the learner does not know the size $|\Tmc|$ of the target.
Lemma~\ref{lem:remove-bound} requires the following technical lemma. 

\begin{lemma}\label{lem:previous} 
In Line \ref{line:remove-set} of Algorithm \ref{alg:learner}, the following holds:
\begin{enumerate}
\item 

 if $\Imc_j $ is removed after the replacement of some $\Imc_i \in \Lmf$ by 
$ \Jmc$ (Line  \ref{line:replace}) then ${\sf false}(\Imc_i)\cap {\sf false}(\Imc_j) = \emptyset$ 
($\Imc_i$ before the replacement);

\item 

 if $\Imc_j,\Imc_k $ with $j < k$ are removed after the replacement of some $\Imc_i \in \Lmf$ by 
$\Jmc$ (Line~\ref{line:replace}) then ${\sf false}(\Imc_j)\cap {\sf false}(\Imc_k) = \emptyset$.

\end{enumerate}

\end{lemma}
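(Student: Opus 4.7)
My plan is to reduce both parts to Lemma~\ref{lem:disj} applied at the end of the preceding iteration, when all the interpretations under consideration are simultaneously in $\Lmf$. In each case I will argue that the two interpretations in question violate a common mvd clause of $\Tmc$; the disjointness of their false sets is then immediate from Lemma~\ref{lem:disj}.

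The central auxiliary statement I would prove first is: whenever $\Imc_m \in \Lmf$ is removed in Line~\ref{line:remove-set}, $\Imc_m$ and $\Jmc$ violate a common mvd clause of $\Tmc$. I would establish this by combining Algorithm~\ref{alg:build} with Lemma~\ref{lem:positive1}: the removed $\Imc_m$ violates some ${\sf true}(\Jmc) \rightarrow Y_l \vee Z_l$, and the preceding call to \Call{UpdatePositiveExamples}{$\Jmc,\Pmf,\Lmf$} in Line~\ref{line:updateseq} has enriched $\Pmf$ enough that each surviving BuildClauses clause pins down a target clause $c \in \Tmc$ with antecedent in ${\sf true}(\Jmc)$ and with $Y,Z$ parts witnessing simultaneous violations by $\Jmc$ and $\Imc_m$ (traced back through the merging loop in Line~\ref{line:replace-function}). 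A sharper version then identifies the particular $c \in \Tmc$ as a function of the BuildClauses clause violated.

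Granted this, Part~2 is immediate: both $\Imc_j$ and $\Imc_k$ are removed, so each violates a target clause also violated by $\Jmc$, and the sharper version delivers a common $c$; since at the end of the previous iteration both $\Imc_j$ and $\Imc_k$ were in $\Lmf$ and violated $c$, Lemma~\ref{lem:disj} yields ${\sf false}(\Imc_j) \cap {\sf false}(\Imc_k) = \emptyset$. For Part~1 I additionally show that the old $\Imc_i$ violates the same $c$: by the goodCandidate conditions, $\Imc_i \cap \Jmc \models \Hmc_0$ but $\Imc_i \cap \Jmc \not\models \Tmc$, so the violated target clause has non-empty $Y$ and $Z$ parts and antecedent contained in ${\sf true}(\Imc_i) \cap {\sf true}(\Jmc)$; using the partition of ${\sf false}(\Jmc)$ from Lemma~\ref{lem:positive1}, I would identify this witness with $c$ and deduce that $\Imc_i$ already violated $c$ before the replacement. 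Lemma~\ref{lem:disj} applied at the end of the previous iteration then closes Part~1.

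The main obstacle will be the central claim: rigorously relating the clauses returned by \Call{BuildClauses}{$\Jmc,\Pmf$} to the mvd clauses of $\Tmc$ violated by $\Jmc$, and using the enrichment from \Call{UpdatePositiveExamples}{} to make that correspondence unambiguous. The argument will rest on an inductive analysis of the merging step in Line~\ref{line:replace-function}, together with the disjointness and covering properties guaranteed by Lemma~\ref{lem:positive1}; the bookkeeping around which $Y_l \vee Z_l$ is violated by which interpretation is what makes the transfer from a BuildClauses-violation to a $\Tmc$-violation delicate.
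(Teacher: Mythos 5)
There is a genuine gap. Your entire reduction to Lemma~\ref{lem:disj} rests on the ``central auxiliary statement'' (every removed $\Imc_m$ violates a target clause also violated by $\Jmc$, plus a ``sharper version'' pinning down \emph{which} clause), and this is exactly the part you do not prove --- you yourself flag it as the main obstacle. It is not a routine consequence of Lemmas~\ref{lem:positive1} and~\ref{lem:positive3}: from $\Imc_m \not\models$ BuildClauses$(\Jmc,\Pmf)$ you only obtain ${\sf true}(\Jmc)\subseteq{\sf true}(\Imc_m)$ and witnesses $v\in Y_l$, $w\in Z_l$ with $v,w\in{\sf false}(\Imc_m)$; for a target clause $X\rightarrow Y\vee Z$ violated by $\Jmc$, Lemma~\ref{lem:positive3} places $Y_l$ inside $Y$ or $Z$, but $w\in Z_l={\sf false}(\Jmc)\setminus Y_l$ may land on the \emph{same} side as $v$, so no violation of that target clause by $\Imc_m$ follows. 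Moreover, even granting the claim, Part~2 needs $\Imc_j$ and $\Imc_k$ to violate a \emph{common} $c\in\Tmc$: they may violate different BuildClauses clauses of $\Jmc$, and $\Jmc$ may violate several target clauses, so ``each shares a clause with $\Jmc$'' does not yield ``they share a clause with each other''; the ``sharper version'' that would close this is only named, not argued. For Part~1 the situation is worse: goodCandidate$(\Imc_i,\Jmc)$ together with Lemma~\ref{lem:recursive} gives $\Imc_i\cap\Jmc=\Jmc$, so condition (iii) only says $\Jmc\not\models\Tmc$ --- it tells you nothing about which target clauses the old $\Imc_i$ itself violates, so ``$\Imc_i$ already violated the same $c$'' is unsupported.

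The paper takes a different and self-contained route that avoids any ``common violated target clause'' statement. It first shows that every removed index $j$ satisfies $i<j$ (otherwise $(\Imc_j,\Jmc)$ would have been a goodCandidate earlier in $\Lmf$, contradicting the choice of the \emph{first} goodCandidate in Line~\ref{line:goodcandidate1}), and then argues by contradiction: if the false sets were not disjoint, then conditions (i)--(iii) of Definition~\ref{def:good} would hold for $(\Imc_i,\Imc_j)$, respectively $(\Imc_j,\Imc_k)$, at the end of the iteration --- condition (i) via Lemmas~\ref{lem:recursive}, \ref{lem:point-i} and~\ref{lem:curious}, condition (ii) via Lemmas~\ref{lem:curious} and~\ref{lem:technical}, and condition (iii) via the behaviour of Function `UpdatePositiveExamples' (Line~\ref{line:special}) together with Lemma~\ref{lem:technical2} --- and this contradicts Lemma~\ref{lem:nogood}. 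If you want to salvage your approach you would have to actually prove the alignment between BuildClauses clauses of $\Jmc$ and individual target clauses and show it is uniform across all removed elements and across the replaced $\Imc_i$; as written, the proposal asserts rather than establishes the key step.
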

 
 \begin{lemma}\label{lem:remove-bound}
Let $N$ be   $ |V|^2\cdot |\Tmc|$.  
The expression $E =  |\Lmf| + (N -    
\sum_{\Imc \in \Lmf} |{\sf false}(\Imc)| )$ always evaluates to a 
natural number inside the loop body and decreases on every iteration 
where Algorithm~\ref{alg:learner} receives a negative counterexample.
\end{lemma}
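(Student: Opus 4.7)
The plan is a straightforward potential-function argument, using $E$ itself as the potential. For the naturality claim I would combine Corollary~\ref{cor:length} with the trivial bound $|{\sf false}(\Imc)| \leq |V|$: since every $c \in \Tmc$ is violated by at most $|V|$ interpretations of $\Lmf$, Corollary~\ref{cor:length} gives $|\Lmf| \leq |\Tmc|\cdot|V|$, and therefore $\sum_{\Imc \in \Lmf}|{\sf false}(\Imc)| \leq |\Lmf|\cdot|V| \leq |\Tmc|\cdot|V|^2 = N$. Thus $E \geq |\Lmf| \geq 0$ throughout the loop.

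For the decrease on a negative-counterexample iteration I would split the \textbf{else} branch of Line~\ref{line:decide} into its two sub-cases. If the refined counterexample $\Jmc$ is appended (Line~\ref{line:append}), then $|\Lmf|$ grows by one and $\sum_{\Imc \in \Lmf}|{\sf false}(\Imc)|$ grows by $|{\sf false}(\Jmc)|$, so $E' - E = 1 - |{\sf false}(\Jmc)|$. Because the $\Hmc_0$ of Line~\ref{line:hypo0} already absorbs every clause of \Tmc of the form $V \rightarrow \mathbf{F}$ or $V \setminus \{v\} \rightarrow v$, the clause of \Tmc violated by $\Jmc$ has the shape $X \rightarrow Y \vee Z$ with $Y,Z$ both non-empty, so $\Jmc$ must contain at least one false variable from each of $Y$ and $Z$, yielding $|{\sf false}(\Jmc)| \geq 2$ and hence $E' - E \leq -1$.

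In the replacement sub-case let $\Imc_i$ be the element replaced by $\Jmc$ in Line~\ref{line:replace} and let $\Imc_{j_1},\ldots,\Imc_{j_m}$ be the elements subsequently removed in Line~\ref{line:remove-set}. A direct computation gives $E' - E = -m + |{\sf false}(\Imc_i)| - |{\sf false}(\Jmc)| + \sum_{l=1}^{m}|{\sf false}(\Imc_{j_l})|$. The key inclusion is ${\sf false}(\Imc_{j_l}) \subseteq {\sf false}(\Jmc)$: each clause produced by Function `BuildClauses' has antecedent ${\sf true}(\Jmc)$, so any $\Imc_{j_l}$ violating such a clause must satisfy ${\sf true}(\Jmc) \subseteq {\sf true}(\Imc_{j_l})$. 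Combining this with Lemma~\ref{lem:previous}(1)--(2) shows that ${\sf false}(\Imc_i)$ and the sets ${\sf false}(\Imc_{j_1}),\ldots,{\sf false}(\Imc_{j_m})$ form pairwise disjoint subsets of ${\sf false}(\Jmc)$, whence $|{\sf false}(\Imc_i)| + \sum_{l}|{\sf false}(\Imc_{j_l})| \leq |{\sf false}(\Jmc)|$ and $E' - E \leq -m$. If $m \geq 1$ this already suffices; if $m = 0$ I would fall back on Point~(i) of Definition~\ref{def:good} together with the paper's replacement-monotonicity observation to obtain $|{\sf false}(\Jmc)| > |{\sf false}(\Imc_i)|$, which again gives $E' - E \leq -1$.

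The step I expect to carry the most weight is the triple disjointness inside ${\sf false}(\Jmc)$: one has to read the removal criterion of Line~\ref{line:remove-set} as forcing ${\sf true}(\Jmc) \subseteq {\sf true}(\Imc_{j_l})$ and then cleanly stitch together the two parts of Lemma~\ref{lem:previous} with the monotonicity of the replacement so that the addition and subtraction terms in $E' - E$ telescope. Once this is in place, the remaining arithmetic is routine, and the two sub-cases together with the unchanged behaviour of $E$ on positive-counterexample iterations complete the proof.
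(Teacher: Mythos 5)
Your proof is correct and follows essentially the same route as the paper: Corollary~\ref{cor:length} for the naturality of $E$, the bound $|{\sf false}(\Jmc)|\geq 2$ (via $\Hmc_0$) for the append case, and Lemma~\ref{lem:previous} plus the removal criterion of Line~\ref{line:remove-set} to pack the disjoint false-sets of $\Imc_i$ and the removed elements inside ${\sf false}(\Jmc)$ in the replacement case, with the no-removal sub-case handled by Point~(i) of Definition~\ref{def:good} and the replacement monotonicity (Lemma~\ref{lem:recursive}). The only cosmetic remark is that the inclusion ${\sf false}(\Imc_i)\subseteq{\sf false}(\Jmc)$ is supplied by Lemma~\ref{lem:recursive} rather than by Lemma~\ref{lem:previous}, which you do implicitly acknowledge when invoking replacement monotonicity.
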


\begin{proof}
By Corollary \ref{cor:length}, the size of \Lmf is bounded at all times by 
$|V|\cdot |\Tmc|$. Thus, by Corollary~\ref{cor:length}, 
$N$ is an upper bound for $\sum_{\Imc \in \Lmf} |{\sf false}(\Imc)|$. If 
a negative counterexample is received then there are three  possibilities:   
(1)  an element $\Imc$ is appended to \Lmf. Then, $|\Lmf|$ increases by one but $|{\sf false}(\Imc)| \geq 2$ and, 
therefore, $E$ decreases; (2) an element is replaced and no element is removed. Then, $E$ trivially decreases. 
Otherwise, (3) we have that an element $\Imc_i$ is replaced and $p$ interpretations 
are removed from \Lmf in Line \ref{line:remove-set} of Algorithm \ref{alg:learner}. 
By Point 2 of Lemma~\ref{lem:previous}, if $\Imc_i$ is replaced by $ \Jmc$ and 
$\Imc_j,\Imc_k$ are removed then ${\sf false}(\Imc_j) \cap {\sf false}(\Imc_k) = \emptyset$. 
This means that if $p$ interpretations are removed then their sets of false variables are all mutually disjoint. 
By Point 1 of Lemma \ref{lem:previous}, if $\Imc_i$ is replaced by $\Jmc$ and some $\Imc_j$ 
is removed then ${\sf false}(\Imc_j) \cap {\sf false}(\Imc_i) = \emptyset$. Then, 
the $p$ interpretations also have sets of false variables disjoint from ${\sf false}(\Imc_i)$. 
For each interpretation $\Imc_j$ removed we have ${\sf false}(\Imc_j)\subseteq {\sf false}(\Jmc )$ 
(because $\Imc_j \not\models$\Call{BuildClauses}{$\Jmc $,\Pmf}. 
Then, the number of `falses' is at least as large as before. However $|\Lmf|$ 
decreases and, thus, we can ensure that $E$ decreases.
\end{proof}

By Lemma \ref{lem:remove-bound} 
 the total number of iterations  
where Algorithm~\ref{alg:learner} receives a negative counterexample 
is bounded by $N = |V|^2\cdot |\Tmc|$. 
It remains to show a polynomial bound on the total number of 
iterations where Algorithm~\ref{alg:learner} receives a positive counterexample. 
By Corollary \ref{cor:length}, the size of \Lmf is bounded at all times by 
$|V|\cdot |\Tmc|$. By Lemma \ref{lem:positive1}, the number of clauses induced by 
each $\Imc_i \in \Lmf$ is bounded by $|V|$. 
This means that the size of \Hmc is bounded at all times by 
 $ N$. 
If a positive counterexample is received then, by Lemma~\ref{lem:remove1}, 
the size of \Hmc strictly decreases. 
So after giving at most 
$|\Hmc| \leq N$  positive examples 
 the oracle is forced to give a negative example. 
Since the number of negative counterexamples received is also bounded 
by $N$, the total number of iterations 
where Algorithm~\ref{alg:learner} receives a positive counterexample 
is bounded by $N^2$.  

\begin{theorem}\label{th:interpretations}
The problem of learning MVDF from interpretations, more precisely, the learning framework $\Fmf\text{(MVDF$_\Imc$)}$, 
is polynomial time exactly learnable.
\end{theorem}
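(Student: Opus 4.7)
My plan is to establish correctness and polynomial-time termination separately. Correctness is immediate from the exit condition of the while loop, together with the fact that $\Hmc_0$ already captures any clause of the form $V\rightarrow\mathbf{F}$ or $V\setminus\{v\}\rightarrow v$ in \Tmc; this also lets me assume throughout the analysis that every negative example violates a target clause whose ``$Y$'' and ``$Z$'' consequents are both non-empty. The remaining task is to bound both the number of loop iterations and the cost of each iteration in terms of $|V|$ and $|\Tmc|$.

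For the number of iterations, I would split them according to the three cases: (1) appending a positive counterexample to \Pmf, (2) replacing an element of \Lmf, or (3) appending a negative counterexample to \Lmf. By Lemma~\ref{lem:disj} and Corollary~\ref{cor:length}, the length of \Lmf never exceeds $|V|\cdot|\Tmc|$, since any two elements of \Lmf violating the same target clause have disjoint false-sets. For iterations of type (2) or (3), I would invoke the potential $E = |\Lmf| + \bigl(N - \sum_{\Imc \in \Lmf}|{\sf false}(\Imc)|\bigr)$ with $N = |V|^2\cdot|\Tmc|$ from Lemma~\ref{lem:remove-bound}: it stays a bounded non-negative integer that strictly decreases, so at most $N$ such iterations occur. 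For iterations of type (1), Lemma~\ref{lem:positive1} gives that the total number of mvd clauses in \Hmc is bounded by $|V|^2\cdot|\Tmc|$, while Lemma~\ref{lem:remove1} ensures that each positive counterexample strictly reduces this count; hence between two consecutive negative counterexamples there are at most $N$ positive ones, for a total of at most $N^2$.

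For the cost of a single iteration, it suffices to bound the recursion depths of \textsc{RefineCounterexample} and \textsc{UpdatePositiveExamples}. In each recursive call of the former, point (i) of Definition~\ref{def:good} forces the number of true-variables of the candidate to strictly decrease, so the depth is at most $|V|$. For the latter, each recursive call strictly decreases the number of clauses produced by \textsc{BuildClauses} by the argument of Lemma~\ref{lem:remove1}, and that count is in turn at most $|V|$ by Lemma~\ref{lem:positive1}. The remaining per-call work---membership queries, intersections of interpretations, and construction of \Hmc via \textsc{BuildClauses}---is polynomial in $|V|$ and the current sizes of \Pmf and \Lmf.

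The step I expect to be the main obstacle is the accounting behind Lemma~\ref{lem:remove-bound} in the sub-case of type (2) in which a single iteration simultaneously replaces some element $\Imc_i \in \Lmf$ by $\Jmc$ and removes several other elements of \Lmf on Line~\ref{line:remove-set}. Here $|\Lmf|$ can drop substantially, so one must prevent $\sum_{\Imc \in \Lmf}|{\sf false}(\Imc)|$ from shrinking by as much. This is precisely the role of Lemma~\ref{lem:previous}: the removed interpretations have pairwise disjoint false-sets, each disjoint from ${\sf false}(\Imc_i)$, and each contained in ${\sf false}(\Jmc)$; the ``falses'' once contributed by the removed elements are thus absorbed into ${\sf false}(\Jmc)$, the sum does not shrink below its pre-iteration value, and $E$ strictly decreases because $|\Lmf|$ strictly drops.
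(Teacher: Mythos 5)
Your proposal is correct and follows essentially the same route as the paper: correctness from the loop's exit condition and $\Hmc_0$, the three-case split of iterations, the $|V|\cdot|\Tmc|$ bound on $|\Lmf|$ via Lemma~\ref{lem:disj} and Corollary~\ref{cor:length}, the potential of Lemma~\ref{lem:remove-bound} (with Lemma~\ref{lem:previous} handling the removal case) bounding negative-counterexample iterations by $N$, the $N^2$ bound on positive ones via Lemmas~\ref{lem:positive1} and~\ref{lem:remove1}, and the $|V|$ recursion-depth bounds for the two auxiliary functions. Nothing essential is missing or different from the paper's own argument.
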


\subsection{An Example Run}

We describe an example run of Algorithm \ref{alg:learner}. 
In this example, 
 if Function `BuildClauses' (Algorithm~\ref{alg:build}) returns as output 
  mvd clauses of the form $X\rightarrow Y \vee Z$ and $X\rightarrow Z \vee Y$ then  we 
  write only one of  them.  
Suppose that our target  MVDF   is: 
 $$\Tmc= \{2345\rightarrow 1,\quad 123\rightarrow 4\vee 5,\quad 235\rightarrow 1 \vee 4,\quad 2\rightarrow 3\vee 145 \}.$$ 
Initially, the sequence \Pmf of positive examples and the sequence \Lmf of negative examples  are  both empty. 
In Line~\ref{line:hypo0} of Algorithm \ref{alg:learner}, we construct $\Hmc_0 = \{2345\rightarrow~1\}$. 
Suppose that the counterexample to our first  equivalence query with $\Hmc =\Hmc_0$ as input is 
the negative example
 $\Imc_1$, with ${\sf true}(\Imc_1) = \{1,2,3\}$ (note that $\Imc_1$ violates the second mvd clause in \Tmc). 
 Since \Lmf is empty, 
 Algorithm \ref{alg:learner} simply appends $\Imc_1$ to $\Lmf$. 
 In Line \ref{line:transform}, Algorithm~\ref{alg:learner} calls   Function `BuildClauses' (Algorithm~\ref{alg:build}) 
 with $\Imc_1,\Pmf$ as input and receive  
 $\{123\rightarrow 4\vee 5  \}$ 
  as output. At this moment, \Pmf, \Lmf and \Hmc are as follows.  
$$\Pmf =\emptyset \quad\Lmf=\{ \Imc_1\} \quad \Hmc=\{2345 \rightarrow 1 , 123\rightarrow 4\vee 5\}$$
Suppose that the counterexample to our second equivalence query with \Hmc as input is 
 $\Imc_2$, with ${\sf true}(\Imc_2) = \{2, 3, 5\}$. 
 Since $\Imc_2 \cap \Imc_1$ satisfies $\Tmc$, 
 the pair $(\Imc_2, \Imc_1)$ is not a ${\sf goodCandidate}$. So 
 Algorithm~\ref{alg:learner} appends $\Imc_2$ to $\Lmf$. 
   In Line \ref{line:transform},  Algorithm \ref{alg:learner} calls   Function `BuildClauses' (Algorithm~\ref{alg:build}) 
 with $\Imc_1,\Pmf$ and $\Imc_2,\Pmf$ as inputs. 
   We have that \Pmf, \Lmf and \Hmc are  as follows.  
 $$\Pmf=\emptyset \quad\Lmf=\{ \Imc_1, \Imc_2\} \quad \Hmc = \{2345 \rightarrow 1 , 123\rightarrow 4\vee 5 , 235 \rightarrow 1 \vee 4 \}$$
Now assume that the next counterexample is 
  $\Imc_3$, with ${\sf true}(\Imc_3) = \{2, 4\}$. 
In Line \ref{line:cm}, Algorithm \ref{alg:learner} calls Function `RefineCounterexample' 
(Algorithm~\ref{alg:learning-mvdf-up})
  with  $\Imc_3$ and $\Lmf$ as input 
and verifies that the pair $(\Imc_3, \Imc_1)$ is a ${\sf goodCandidate}$.  
The return of Function `RefineCounterexample' (Algorithm~\ref{alg:learning-mvdf-up}) is 
$\Jmc = (\Imc_3 \cap \Imc_1)$. 
In Line \ref{line:cond}, Algorithm \ref{alg:learner} verifies that 
$\Imc_1$ is the first element in \Lmf such that 
$(\Imc_1, \Jmc)$ 
is a ${\sf goodCandidate}$. 
 Then,  Algorithm \ref{alg:learner} calls Function `UpdatePositiveExamples' (Algorithm \ref{alg:remove}) 
 with  $\Kmc =  \Jmc$ (note that ${\sf true}(\Kmc)=\{2\}$), $\Pmf$ and $\Lmf$ as input. 
 We have that 
 $$\text{\Call{BuildClauses}{$\Kmc, \emptyset$}} 
 = \{2\rightarrow 1\vee 345, 2\rightarrow 3\vee 145, 2\rightarrow 4\vee 135, 2\rightarrow 5\vee 134\}.$$
 As $(\Imc_1 \cap \Imc_2) \not\models$  \Call{BuildClauses}{$\Kmc, \emptyset$} and 
 $(\Imc_1 \cap \Imc_2)\models \Tmc$, the condition in Line~\ref{line:special} 
 of Function `UpdatePositiveExamples' (Algorithm \ref{alg:remove}) is satisfied. 
 Then, Function `UpdatePositiveExamples' appends $\Imc_1 \cap \Imc_2$  to \Pmf and 
 makes a recursive call with 
 $\Kmc$, $\Pmf$ and $\Lmf$ as input. 
 Now, 
 $$\text{\Call{BuildClauses}{$\Kmc, \{\Imc_1 \cap \Imc_2\}$}} 
 = \{2\rightarrow 145\vee 3 \},$$
  and, so, $(\Imc_1 \cap \Imc_2)\models$ \Call{BuildClauses}{$\Kmc, \{\Imc_1 \cap \Imc_2\}$}. 
  The output of Function `UpdatePositiveExamples' (Algorithm \ref{alg:remove}) 
  is $\{\Imc_1 \cap \Imc_2\}$. 
  In Line~\ref{line:replace}, Algorithm~\ref{alg:learner} replaces $\Imc_1\in\Lmf$ by~$\Jmc$. 
     In Line~\ref{line:transform},  Algorithm \ref{alg:learner} calls   
     Function `BuildClauses' (Algorithm~\ref{alg:build}) 
 with $\Jmc,\Pmf$ and $\Imc_2,\Pmf$ as inputs.
  Now, \Pmf, \Lmf and \Hmc are  as follows.  
  $$\Pmf =\{  \Imc_1 \cap \Imc_2  \} \quad \Lmf=\{ \Jmc, \Imc_2\} \quad
  \Hmc=\{2345 \rightarrow 1 , 2\rightarrow 145\vee 3 , 235\rightarrow 1\vee 4\} $$
Now assume that 
the counterexample to our fourth  equivalence query with \Hmc as input is 
the negative example~$\Imc_4$,  with ${\sf true}(\Imc_4) = \{1,2,3\}$. 
Function `RefineCounterexample' (Algorithm~\ref{alg:learning-mvdf-up}) returns $\Imc_4$. 
Since there is no $\Imc \in \Lmf$ such that $(\Imc,\Imc_4)$ is 
a ${\sf goodCandidate}$, Algorithm \ref{alg:learner} appends $\Imc_4$ to \Lmf. 
In Line~\ref{line:transform} of  Algorithm \ref{alg:learner}   
   \Pmf, \Lmf and \Hmc are  as follows.  
$$\Pmf=\{ \Imc_1 \cap \Imc_2 \} \quad \Lmf=\{ \Jmc,\Imc_2, \Imc_4\}$$
$$\Hmc=\{2345 \rightarrow 1 , 2\rightarrow 145\vee 3   
   , 235\rightarrow 1\vee 4 , 123\rightarrow 4\vee 5  \}$$
 
 We now have that $\Hmc\equiv\Tmc$ and the learner succeeded.

\section{Reductions among Learning Problems}\label{sec:reduction}
 
A substitution-based technique for problem reductions among boolean formulas 
was  presented by~\cite{kearns1987learnability}. \cite{pitt1988reductions}  
  define a general type of problem reduction 
that preserves polynomial time prediction. This notion was extended  by~\cite{angluin1995won} 
to allow membership queries. In this section, we present 
a notion of reduction that is suitable for the exact learning model with 
membership and equivalence queries. 
It extends a notion of reduction given by \cite{konevexact}.
We then use 
this notion 
to show the reductions  in Figure~\ref{fig:reductions}.

Suppose that $P$ is the problem of exactly learning the framework $\Fmf = (X,\Lmc,\mu)$ and 
 $P'$ is the problem of exactly learning  the framework $\Fmf'= (X',\Lmc,\mu')$. 
Since \Lmc is the same for $\Fmf$ and $\Fmf'$, every correct conjecture used to 
solve $P'$ is also an answer for $P$ and vice-versa. 
One can then reduce $P$ to $P'$ by: 
(a) transforming queries posed  
to oracles ${\sf MEM}_{l,\Fmf'}$ and ${\sf EQ}_{l,\Fmf'}$ into queries 
for the oracles ${\sf MEM}_{l,\Fmf}$ and ${\sf EQ}_{l,\Fmf}$; and 
(b) transforming answers given by the oracles ${\sf MEM}_{l,\Fmf}$ and ${\sf EQ}_{l,\Fmf}$ 
into answers that the oracles ${\sf MEM}_{l,\Fmf'}$ and ${\sf EQ}_{l,\Fmf'}$ 
would provide, where $l \in \Lmc$ is the learning target. 
 For our purposes, we want reductions where  
(i) the frameworks 
use the same target concept representation (as described above) and 
 (ii) preserve polynomial time exact learnability. 
We say that a learning framework $\Fmf=(E, \Lmc, \mu)$
 \emph{polynomial time reduces} to $\Fmf' = (E', \Lmc, \mu')$ if, 
for some  polynomials $p_1(\cdot)$, $p_2(\cdot,\cdot)$ and $p_3(\cdot,\cdot)$ 
there exist 
a function $\f: \Lmc\times E'\to \{\text{ `yes'},\text{ `no'}\}$, translating 
a $\Fmf'$ membership query to $\Fmf$,
and a partial function  
$\g: \Lmc\times\Lmc\times E\to E'$, defined for every $(l,h,e)$ such that
$|h| \leq p_1(|l|)$, translating an answer
to an \Fmf equivalence query to $\Fmf'$,
for which the following conditions hold: 
\begin{itemize} 
\item for all $e'\in E'$ we have $e'\in\mu'(l)$ iff $\f(l,e') = \text{ `yes'}$; 
\item for all $e\in E$ we have $e \in \mu(l)\oplus \mu(h)$ iff 
$\g(l,h,e) \in \mu'(l)\oplus \mu'(h)$;
\item $\f(l,e')$ and $\g(l,h,e)$ are computable in time $p_2(|l|,|e'|)$ and $p_3(|l|,|e|)$, respectively,   
and $l$ can only be accessed by calls to the 
membership oracle ${\sf MEM}_{l,\Fmf}$. 
\end{itemize}

Note that even though $\g$ takes $h$ as input, the polynomial
time bound on computing $\g(l, h, e)$ does not depend on
the size of $h$ as $\g$ is only defined for $h$ polynomial in the
size of $l$.

\begin{theorem}\label{th:reduction}
Let $\Fmf = (E, \Lmc, \mu)$ and $\Fmf' = (E',\Lmc, \mu')$ be   learning frameworks. If there exists a polynomial
time reduction from $\Fmf$ to  $\Fmf'$ and
 $\Fmf'$ is polynomial time exactly learnable then
$\Fmf$ is polynomial time exactly learnable.
\end{theorem}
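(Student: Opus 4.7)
My plan is to build a learning algorithm $A$ for $\Fmf$ that black-box simulates the assumed polynomial-time learner $A'$ for $\Fmf'$. Because both frameworks share the concept-representation class $\Lmc$, any hypothesis that $A'$ produces can be reused verbatim as a hypothesis for $\Fmf$; only the oracle interactions need translating. When $A'$ poses a membership query on $e'\in E'$, $A$ returns $\f(l,e')$, which by the first property of the reduction is the correct $\Fmf'$-membership answer and is computed in time $p_2(|l|,|e'|)$ using only the oracle ${\sf MEM}_{l,\Fmf}$. When $A'$ poses an equivalence query with hypothesis $h$, $A$ first queries ${\sf EQ}_{l,\Fmf}$ on $h$: on a ``yes'' answer we have $\mu(h)=\mu(l)$ and $A$ simply halts, outputting $h$; otherwise the oracle returns an $\Fmf$-counterexample $e$, and $A$ feeds $\g(l,h,e)$ back to $A'$ as the response to its $\Fmf'$-equivalence query.

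Three things then need to be verified. Correctness of the output: whenever $A$ halts it does so because ${\sf EQ}_{l,\Fmf}$ certified $\mu(h)=\mu(l)$, so the output is valid. Validity of the simulated responses seen by $A'$: by the second property of the reduction, any $\g(l,h,e)$ handed to $A'$ lies in $\mu'(l)\oplus\mu'(h)$ and is hence a legitimate $\Fmf'$-counterexample, so $A'$ sees oracle behaviour fully consistent with the target $l$. Termination: applying the ``iff'' clause of the reduction with $\mu'(l)\oplus\mu'(h)=\emptyset$ forces the whole left-hand side to be false for every $e$, which shows $\mu'(h)=\mu'(l) \Rightarrow \mu(h)=\mu(l)$; consequently the hypothesis on which $A'$ would eventually halt must also trigger ``yes'' from ${\sf EQ}_{l,\Fmf}$, so $A$ halts no later than $A'$ would (and may halt earlier).

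For the runtime, set $n=|l|$ and let $m$ be the size of the largest $\Fmf$-counterexample returned so far. Every $\Fmf'$-example passed to $A'$ has the form $\g(l,h,e)$, whose size is bounded by $p_3(n,m)$; consequently $A'$'s entire execution is polynomial in $n$ and $m$, and each simulated query adds overhead at most $p_2(n,\cdot)$ or $p_3(n,m)$, plus a single call to ${\sf EQ}_{l,\Fmf}$. The main obstacle, and the only real subtlety, is the side condition that $\g$ be defined, i.e.\ that $|h|\le p_1(n)$ for every hypothesis $h$ that $A'$ ever submits. I handle this by observing that the polynomial-time guarantee on $A'$ bounds $|h|$ by a polynomial in $n$ and the maximum $\Fmf'$-example size (itself polynomial in $n$ and $m$), so the definition of polynomial-time reduction should be read as allowing us to choose (or enlarge) $p_1$ to dominate that polynomial, after which $\g$ is applicable throughout the simulation and the overall algorithm is polynomial in $n$ and $m$, as required.
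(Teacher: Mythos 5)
Your simulation is exactly the paper's proof: answer $A'$'s membership queries by computing $\f(l,e')$ with calls to ${\sf MEM}_{l,\Fmf}$, forward each equivalence query $h$ to ${\sf EQ}_{l,\Fmf}$, halt on `yes' and otherwise translate the counterexample with $\g(l,h,e)$, then bound the total time by composing $p(|l|,\cdot)$ with $p_2$ and $p_3$ as you do. Your explicit termination argument (via the `iff' clause, $\mu'(h)=\mu'(l)\Rightarrow\mu(h)=\mu(l)$) and your remarks on the side condition $|h|\le p_1(|l|)$ only spell out what the paper leaves implicit---its proof simply takes $p_1(|l|)$ to be the bound on the hypotheses $A'$ submits---so the approach is essentially identical.
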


In the following  
we use Theorem \ref{th:reduction}  to prove that 
MVD can be learned in polynomial time from data relations. 
 The remaining reductions presented in Figure~\ref{fig:reductions} are given in the appendix.



\subsection{Learning MVD from Data Relations}\label{subsec:mvd-red}

\paragraph{\upshape\textbf{ Notation }}
 
A \emph{relation scheme} $V = \{A_1 , \ldots, A_n\}$ is a finite set of symbols, 
called attributes, where 
each attribute $A_i \in V$ is associated with a domain ${\sf dom}(A_i)$ of values. 
 A \emph{tuple} $t$ over $V$ is an element of 
 ${\sf dom}(A_1)\times \ldots \times {\sf dom}(A_n)$.  
A \emph{relation} $r$ (over $V$) is a set of tuples over $V$. 
Given   $S\subseteq V$, let $t[S]$ denote the restriction of a tuple $t$ over $V$ on $S$. 
For example, if the relation scheme is 
${\tt PERSON = \{NAME, BOOK, PET\}}$ and $t={\tt (Alice, Hamlet, Dog)}$ is a tuple over 
${\tt PERSON}$ then $t[{\tt\{NAME, PET\}}] ={\tt(Alice, Dog)}$.
Let $X$, $Y$ and $Z$ be pairwise disjoint subsets of $V$ with $X \cup Y \cup Z = V$. 
We write $xyz$ for a tuple  $t$ over $V$ with $t[X]=x$, $t[Y]=y$ and $t[Z]=z$.  
A \emph{multivalued dependency} (for short \emph{mvd}) 
$X\rightarrow Y\vee Z$
 holds in $r$  
if, and only if, for each two tuples $xyz,xy'z' \in r$ we have that 
$xy'z \in r$ (and, by symmetry, $xyz' \in r$) \footnote{
The standard notation used for mvds is $X \mvd Y\mid Z$ (or $X \mvd Y$) \cite{Fagin1977}. 
However, for the purpose of showing a reduction from MVD$_\Rmc$ to MVDF$_\Imc$, 
it is useful to adopt a uniform representation between 
the two classes. 
}. 
That is, if $t,t'$ are distinct tuples in $r$ with $t[X]=t'[X]$ then we can exchange 
the $Y$ values of $t,t'$ to obtain two tuples that must also be in $r$. 
If \Tmc is a set of mvds over $V$ and, for all $m \in \Tmc$, $m$ holds in $r$ (over $V$) then 
we say that $\Tmc$ holds in $r$. 
We formally define the learning framework $\Fmf\text{(MVD$_\Rmc$)}$ as 
  $(E_\Rmc,\Lmc_{\sf M},\mu_\Rmc)$, where $E_\Rmc$ 
is the set of all relations $r$ over a relation scheme $V$, 
$\Lmc_{\sf M}$ is the set of all sets of mvds that can be expressed with symbols in $V$ and, for every $\Tmc \in \Lmc_{\sf M}$, 
$\mu_\Rmc(\Tmc) = \{ r \in E_\Rmc\mid   \Tmc \text{ holds in } r\}$. 

\smallskip

We now show that $\Fmf\text{(MVD$_\Rmc$)}$ polynomial time 
reduces to $\Fmf\text{(MVDF$_\Imc$)}$. 
To reduce the problem, 
we use the learning algorithm for $\Fmf\text{(MVDF$_\Imc$)}$ 
as a `black box' and: 
(1) transform the membership queries, which come as interpretations into relations; and 
(2) transform counterexamples given by equivalence queries, 
which come as relations into interpretations. 
 
\begin{lemma}\label{lem:mem}
Let $\Fmf\text{(MVD$_\Rmc$)} = (E_\Rmc,\Lmc_{\sf M},\mu_\Rmc)$ and 
$\Fmf\text{(MVDF$_\Imc$)} = (E_\Imc,\Lmc_{\sf M},\mu_\Imc)$ be, respectively,  
the frameworks for learning MVD from relations   and     learning  
MVDF from interpretations. Let $\Tmc \in \Lmc_{\sf M}$ be the target. 
For any interpretation $\Imc \in \mu_\Imc(\Tmc)$, one can construct 
in polynomial time in $|\Tmc|$ a relation $r$ such that 
$r \in \mu_\Rmc(\Tmc)$ if, and only if, $\Imc \in \mu_\Imc(\Tmc)$. 
\end{lemma}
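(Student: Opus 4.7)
The plan is to construct $r$ as a canonical two-tuple relation that encodes $\Imc$ via the pattern of agreement between its tuples. I fix two distinct values $0,1$ in each attribute's domain and let $r = \{t, t'\}$, where $t[A] = t'[A] = 0$ for every $A \in {\sf true}(\Imc)$, while $t[A] = 0$ and $t'[A] = 1$ for every $A \in {\sf false}(\Imc)$ (so $r$ collapses to the single tuple $\{t\}$ when ${\sf false}(\Imc) = \emptyset$). This is computable in $O(|V|)$ time, which is polynomial in $|\Tmc|$ because every mvd in $\Tmc$ satisfies $X \cup Y \cup Z = V$ and hence has length at least $|V|$.

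For the correctness biconditional, I would argue by case analysis on each mvd $X \to Y \vee Z$ in $\Tmc$, which (as a non-trivial database mvd) has $Y, Z$ both non-empty; mvds with an empty consequent are tautologically satisfied by every relation and can be handled separately or assumed absent. First, if $X \not\subseteq {\sf true}(\Imc)$, then $t$ and $t'$ disagree on some attribute of $X$, so the only pairs in $r\times r$ agreeing on $X$ are $(t,t)$ and $(t',t')$, for which the mvd condition is trivially satisfied; on the other side, $\Imc$ does not cover the clause and hence does not violate it. Second, if $X \subseteq {\sf true}(\Imc)$, then $t[X] = t'[X]$, and the mvd holds in $r$ iff each of the two swap tuples $(t[X], t'[Y], t[Z])$ and $(t[X], t[Y], t'[Z])$ lies in $\{t, t'\}$. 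A direct inspection shows this happens iff $t, t'$ agree on $Y$ or agree on $Z$, i.e., iff $Y \subseteq {\sf true}(\Imc)$ or $Z \subseteq {\sf true}(\Imc)$ --- precisely the negation of clause (a) in the definition of \emph{violates}. Conjoining over all mvds yields $r \in \mu_\Rmc(\Tmc)$ iff $\Imc \in \mu_\Imc(\Tmc)$.

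The main delicacy is the mismatch between interpretation semantics (in particular clauses (b) and (c) of \emph{violates}) and relation semantics for mvds with an empty consequent or for the mvd $V \to \mathbf{F}$: a two-tuple relation can never witness the non-trivial interpretation violations in those cases. My reading is that this is harmless for the reduction, since the MVD$_\Rmc$ target naturally consists of proper, non-trivial mvds; nevertheless, pinning down that this restriction is compatible with how $\Lmc_{\sf M}$ is shared between the two frameworks --- and appealing to it explicitly before running the case analysis --- is where I expect the argument to need the most care.
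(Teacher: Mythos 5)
Your construction is exactly the paper's: a two-tuple relation $\{t,t'\}$ with $t[\gamma]=t'[\gamma]$ iff $\gamma\in{\sf true}(\Imc)$, so that an mvd fails in the pair iff $\Imc$ violates it, and your case analysis matches the (terser) argument in the paper. The degenerate clauses (empty consequent, $V\rightarrow\mathbf{F}$) that you flag are indeed not handled explicitly by the paper either, which simply asserts the equivalence for all $m\in\Tmc$, so your proposal is if anything more careful on that point.
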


\begin{proof}
Given an interpretation \Imc in $V$, 
we define a pair $p$ of tuples $\{t,t'\}$ over $V$ such that, for each 
  $\gamma \in V$, $t[\gamma] = t'[\gamma]$ if, and only if, 
   $\gamma \in {\sf true}(\Imc)$. 
 By definition of $p$, we have that, for any $m \in \Tmc$, $m$ does not hold in $p$ if, 
 and only if, \Imc violates $m$.  
 Then, $p \in \mu_\Rmc(\Tmc)$ if, and only if, $\Imc \in \mu_\Imc(\Tmc)$. 
\end{proof}

 The close connection between   database relations 
 and propositional logic interpretations was first pointed out by~\cite{fagin1977functional}  and its use in 
a learning theory context appears in \cite{LavinPuente2011}.
To show Lemma \ref{lem:eq} we use the following technical lemma, given by~\cite{Sagiv1981}.  

\begin{lemma}[\cite{Sagiv1981}]\label{lem:sagiv}
Assume that $r$ is a relation over $V$, $\Tmc$ is a set of mvds and $m$ is an mvd 
(both expressed in $V$). 
Suppose that $\Tmc$ holds in $r$ but $m$ does not hold in $r$. Then $r$ 
has a pair $p$ of tuples for which $\Tmc$ holds in $p$ and 
 $m$ does not hold in $p$. 
\end{lemma}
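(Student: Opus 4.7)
\noindent\textit{Proof sketch.} The plan is a minimality argument on pairs of tuples from $r$. Since $m = X \rightarrow Y \vee Z$ fails in $r$, there must be some pair $\{t_1, t_2\} \subseteq r$ witnessing the failure, i.e.\ with $t_1[X] = t_2[X]$, $t_1[Y] \neq t_2[Y]$ and $t_1[Z] \neq t_2[Z]$ (otherwise the two ``swapped'' tuples demanded by $m$ already coincide with $t_1, t_2$ and $m$ would hold on the pair). I would pick such a pair minimizing the disagreement set $D = \{A \in V : t_1[A] \neq t_2[A]\}$, and then show that all of $\Tmc$ holds on $p = \{t_1, t_2\}$, so that $p$ is the desired sub-relation of $r$.

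Suppose for contradiction that some $m' = X' \rightarrow Y' \vee Z'$ in $\Tmc$ fails on $p$. Then $t_1[X'] = t_2[X']$ and $D$ meets both $Y'$ and $Z'$ nontrivially. Because $m'$ holds in $r$, applying the mvd swap to $(t_1, t_2)$ produces two new tuples $t_3, t_4 \in r$: $t_3$ agrees with $t_1$ on $X' \cup Y'$ and with $t_2$ on $Z'$, and $t_4$ is defined symmetrically. A direct calculation shows that the disagreement set of $\{t_1, t_3\}$ is exactly $D \cap Z'$ and that of $\{t_1, t_4\}$ is exactly $D \cap Y'$; both are proper non-empty subsets of $D$.

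If either $\{t_1, t_3\}$ or $\{t_1, t_4\}$ still witnesses the failure of $m$ --- i.e.\ its disagreement set contains at least one attribute of $Y$ and one of $Z$ --- then the minimality of $|D|$ is contradicted and we are done. The main obstacle is the remaining case, in which the partitions $(Y, Z)$ and $(Y', Z')$ restricted to $D$ are either fully aligned ($D \cap Y \subseteq Y'$ and $D \cap Z \subseteq Z'$) or fully anti-aligned ($D \cap Y \subseteq Z'$ and $D \cap Z \subseteq Y'$). Here $|D|$ does not strictly decrease after a single swap, since all four pairs $\{t_i, t_j\}$ with $i \neq j$ either keep the disagreement equal to $D$ or lose one of the $Y$-/$Z$-disagreements.

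To handle this case I would enrich the minimality measure lexicographically with a secondary statistic --- for instance, the number of tuples of $r$ sharing the $X$-value of $t_1$, or the size of the smallest $\Tmc$-closed superset of $\{t_1, t_2\}$ contained in $r$ --- and then apply a further mvd of $\Tmc$ to the pairs $\{t_3, t_2\}$, $\{t_1, t_4\}$, or $\{t_3, t_4\}$ to produce a still smaller witness. Equivalently, one can argue on the finite orbit of $(t_1, t_2)$ under all $\Tmc$-swaps inside $r$: some pair in this orbit must witness the failure of $m$ while admitting no non-trivial $\Tmc$-swap, and that pair is the sought $p$. This is essentially the finite case analysis of Sagiv~\cite{Sagiv1981}.
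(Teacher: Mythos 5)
The paper gives no proof of this lemma---it is imported verbatim from \cite{Sagiv1981}---so your attempt can only be judged on its own terms, and it has a genuine gap exactly at the point you flag as the ``main obstacle''. The reduction to two-tuple semantics, the computation of the disagreement sets of the pairs formed with the swap tuples, and the identification of the aligned/anti-aligned case are all correct, but the proposed repairs do not close that case. Note that you minimize only over pairs that violate $m$ \emph{as two-tuple relations}; such a pair need not witness the failure of $m$ \emph{in} $r$ (both of its $m$-swap tuples may be present in $r$), and this is what makes the obstacle real. The secondary statistic ``number of tuples of $r$ sharing the $X$-value of $t_1$'' cannot help, since every tuple produced by a swap keeps that $X$-value (all disagreement sets involved avoid $X$), and the orbit assertion is a restatement of what has to be proved and is false for the pair your minimization may select. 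For instance, over attributes $\{1,\dots,5\}$ take $\Tmc=\{1\to 23\vee 45\}$ and $m=1\to 24\vee 35$, and let $r$ consist of $(a,0,0,0,0)$, $(a,1,0,0,1)$, $(a,1,0,0,0)$, $(a,0,0,0,1)$ together with the four tuples $(b,y,y,z,z)$ with $y,z\in\{0,1\}$ (coordinates list the values of attributes $1,\dots,5$; attribute $1$ takes the values $a,b$). Then $\Tmc$ holds in $r$ and $m$ fails in $r$, the failure being witnessed only by $b$-block pairs; yet $\{(a,0,0,0,0),(a,1,0,0,1)\}$ is a minimizer of $|D|$ among pairs violating $m$ pairwise, it violates the clause of $\Tmc$, and its entire $\Tmc$-swap orbit stays inside the $a$-block, where every pair violating $m$ pairwise also violates $\Tmc$; the pair the lemma promises (e.g.\ $\{(b,0,0,0,0),(b,0,0,1,1)\}$) is not reachable from it by swaps.

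The missing idea is to carry the stronger witnessing condition through the minimization: minimize $|D|$ over pairs $t_1,t_2\in r$ with $t_1[X]=t_2[X]$ such that one of the two $m$-swap tuples of the pair is \emph{absent from} $r$ (such pairs exist because $m$ fails in $r$, and any such pair automatically violates $m$ pairwise). Suppose some $m'=X'\to Y'\vee Z'\in\Tmc$ fails on the chosen pair and put $A=D\cap Y'$, $B=D\cap Z'$; both $m'$-swap tuples of the pair belong to $r$. In your aligned/anti-aligned case one of these two tuples \emph{is} the absent $m$-swap tuple of $(t_1,t_2)$, an immediate contradiction that the purely pairwise notion cannot detect. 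Otherwise $A$ (or, symmetrically, $B$) meets both $D\cap Y$ and $D\cap Z$; the pairs formed by $t_1,t_2$ with the $m'$-swap tuples have disagreement $A$ or $B$, both properly contained in $D$, so by minimality all of their $m$-swap tuples lie in $r$, and applying minimality once more to a suitable pair of those tuples (one with disagreement $(A\cap Z)\cup(B\cap Y)\subsetneq D$, up to exchanging the roles of $Y$ and $Z$ and of $A$ and $B$) again produces the absent $m$-swap tuple inside $r$---a contradiction. Hence the minimal witnessing pair satisfies $\Tmc$ and violates $m$, which is the lemma. Until your sketch incorporates this absent-swap bookkeeping (or some other concrete, provably decreasing measure), the aligned/anti-aligned case remains unproved.
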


\begin{lemma}\label{lem:eq}
Let $\Fmf\text{(MVD$_\Rmc$)} = (E_\Rmc,\Lmc_{\sf M},\mu_\Rmc)$ and 
$\Fmf\text{(MVDF$_\Imc$)} = (E_\Imc,\Lmc_{\sf M},\mu_\Imc)$ be, respectively,  
the frameworks for learning MVD from relations   and     learning  
MVDF from interpretations.
Let $\Tmc \in \Lmc_{\sf M}$
be the target 
and $\Hmc \in \Lmc_{\sf M}$ be the hypothesis. 
If $r \in \mu_\Rmc(\Tmc)\oplus \mu_\Rmc(\Hmc)$ then one can construct 
in polynomial time in $|\Tmc|$ and $|r|$ an interpretation \Imc such that 
$\Imc \in \mu_\Imc(\Tmc)\oplus \mu_\Imc(\Hmc)$. 
\end{lemma}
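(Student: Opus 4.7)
The plan is to reduce the problem to the pair-of-tuples case using Lemma~\ref{lem:sagiv} and then apply the agreement-set correspondence already used in the proof of Lemma~\ref{lem:mem}. First, since $\Hmc$ is explicit, I would directly check whether $r \in \mu_\Rmc(\Hmc)$, which splits $r \in \mu_\Rmc(\Tmc)\oplus\mu_\Rmc(\Hmc)$ into two symmetric cases. In the case $r\in\mu_\Rmc(\Tmc)\setminus\mu_\Rmc(\Hmc)$, there is some $m\in\Hmc$ that does not hold in $r$ while $\Tmc$ does; Lemma~\ref{lem:sagiv} then yields a pair $p=\{t,t'\}\subseteq r$ with $\Tmc$ holding in $p$ but $m$ failing, so $p\in\mu_\Rmc(\Tmc)\setminus\mu_\Rmc(\Hmc)$. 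In the symmetric case $r\in\mu_\Rmc(\Hmc)\setminus\mu_\Rmc(\Tmc)$, I would apply Lemma~\ref{lem:sagiv} with the set $\Hmc$ (which holds in $r$) and any $m\in\Tmc$ failing on $r$, obtaining a pair $p\subseteq r$ with $p\in\mu_\Rmc(\Hmc)\setminus\mu_\Rmc(\Tmc)$.

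Given such a pair $p=\{t,t'\}$, I would define the interpretation $\Imc$ by ${\sf true}(\Imc):=\{\gamma\in V : t[\gamma]=t'[\gamma]\}$. This is exactly the inverse of the pair-to-interpretation encoding used in the proof of Lemma~\ref{lem:mem}, so for every mvd $c$ the pair $p$ fails $c$ (as a relation) if and only if $\Imc$ violates $c$ (as an interpretation). Consequently $p\in\mu_\Rmc(\Tmc)$ iff $\Imc\in\mu_\Imc(\Tmc)$, and likewise for $\Hmc$; hence $\Imc$ lands on the same side of the symmetric difference as $p$, yielding $\Imc\in\mu_\Imc(\Tmc)\oplus\mu_\Imc(\Hmc)$.

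The remaining, and slightly delicate, step is producing the pair $p$ in polynomial time, since $\Tmc$ is only accessible through the membership oracle and Lemma~\ref{lem:sagiv} is purely an existence statement. My plan is a brute-force search over the at most $\binom{|r|}{2}$ pairs of distinct tuples of $r$: for each candidate $p$, membership in $\mu_\Rmc(\Hmc)$ is decided directly from $\Hmc$, while membership in $\mu_\Rmc(\Tmc)$ is obtained by a single call to ${\sf MEM}_{\Tmc,\Fmf(\text{MVD}_\Rmc)}$ on the two-tuple relation $p$. Lemma~\ref{lem:sagiv} guarantees that at least one candidate satisfies the required combination, so the search always terminates successfully. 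Because $|\Hmc|$ is polynomial in $|\Tmc|$ by the definition of a polynomial-time reduction, the overall running time is polynomial in $|\Tmc|$ and $|r|$, as the lemma demands. The main obstacle is precisely this blind search: only after Lemma~\ref{lem:sagiv} has pruned the exponentially many sub-relations of $r$ down to its quadratically many pairs does the construction become tractable.
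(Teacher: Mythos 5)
Your proposal is correct and follows essentially the same route as the paper: invoke Lemma~\ref{lem:sagiv} to guarantee a witnessing pair of tuples, find it by exhaustively testing the quadratically many pairs of $r$ (checking $\Hmc$ directly and $\Tmc$ via membership queries), and then map the pair to an interpretation through the agreement-set encoding of Lemma~\ref{lem:mem}. The only difference is cosmetic: the paper treats the positive-counterexample case and declares the negative case analogous, whereas you spell out both sides of the symmetric difference explicitly.
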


\begin{proof}
Assume that $r \in \mu_\Rmc(\Tmc)\oplus \mu_\Rmc(\Hmc)$  is a positive counterexample 
(the case when $r$ is a negative counterexample is analogous).
If $r \not\in \mu_\Rmc(\Hmc)$ then there is   $m \in \Hmc$ such that 
$m$ does not hold in $r$. By Lemma \ref{lem:sagiv}, $r$ has 
a pair $p$ of tuples for which $\Tmc$ holds in $p$ and 
 $m$ does not hold in $p$. Then, $p \in \mu_\Rmc(\Tmc)\setminus \mu_\Rmc(\Hmc)$. 
One can find $p \subseteq r$, 
by simply checking, for all possible pairs $p$ of tuples in $r$, 
whether $\Hmc$ does not hold in $p$ and (with membership queries) whether $\Tmc$ holds in $p$. 
Once $p = \{t,t'\}$ is computed, we define $\Imc$ such that 
${\sf true}(\Imc) = \{\gamma \in V \mid t[\gamma] = t'[\gamma] \}$. 
 By definition of $\Imc$, 
 we have that, for any $m' \in \Tmc\cup \Hmc$, $m'$ does not hold in $p$ if, 
 and only if, \Imc violates $m'$.  
 Then, $\Imc \in \mu_\Imc(\Tmc)\oplus \mu_\Imc(\Hmc)$. 
\end{proof}

Lemma \ref{lem:mem} shows how one can compute 
$\f$ (described in Definition \ref{def:reduction2}) with 
$p_2(|\Tmc|,|\Imc|) = k\cdot|\Imc|$ steps, for some constant $k$.  
Lemma \ref{lem:eq} shows 
how one can compute  $\g$ in $p_3(|\Tmc|,|r|) = k\cdot|r|^2$, for some constant $k$. 
Also, we have seen in Section \ref{sec:mvdf-interpretation} that the size of the hypothesis $\Hmc$ 
computed by Algorithm~\ref{alg:learner} is bounded by 
$|V|\cdot|\Tmc|$. 
Then,
  $p_1(|\Tmc|) = |V|\cdot|\Tmc|$.
Using Theorems \ref{th:interpretations} and \ref{th:reduction} we can state the following.   

\begin{theorem} 
The problem of learning MVD from relations, more precisely, the learning framework $\Fmf\text{(MVD$_\Rmc$)}$, 
is polynomial time exactly learnable.
\end{theorem}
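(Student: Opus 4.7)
The plan is to apply Theorem \ref{th:reduction} with $\Fmf = \Fmf(\text{MVD}_\Rmc)$ and $\Fmf' = \Fmf(\text{MVDF}_\Imc)$. Since Theorem \ref{th:interpretations} establishes that $\Fmf(\text{MVDF}_\Imc)$ is polynomial time exactly learnable and the two frameworks share the same concept representation class $\Lmc_{\sf M}$, it suffices to exhibit a polynomial time reduction in the sense defined in Section \ref{sec:reduction}, i.e.\ to provide the functions $\f$ and $\g$ with the required properties.

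First, I would define $\f(\Tmc, \Imc)$ as follows. Given an interpretation $\Imc$ over $V$, build the two-tuple relation $p = \{t, t'\}$ described in the proof of Lemma \ref{lem:mem} by letting $t$ and $t'$ agree exactly on ${\sf true}(\Imc)$ (and differ on ${\sf false}(\Imc)$), then pose a membership query to ${\sf MEM}_{\Tmc,\Fmf}$ asking whether $p \in \mu_\Rmc(\Tmc)$, and return the answer. The correctness $\Imc \in \mu_\Imc(\Tmc) \iff \f(\Tmc, \Imc) = \text{`yes'}$ is exactly the content of Lemma \ref{lem:mem}, and the construction is clearly computable in time linear in $|\Imc|$ plus the cost of one membership query, so $p_2(|\Tmc|,|\Imc|)$ is polynomial.

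Next, I would define $\g(\Tmc, \Hmc, r)$ for relations $r \in \mu_\Rmc(\Tmc) \oplus \mu_\Rmc(\Hmc)$. Following Lemma \ref{lem:eq}, enumerate all pairs $p = \{t, t'\} \subseteq r$, and for each pair check (syntactically) whether $\Hmc$ fails on $p$ and (via $\f$-style membership queries to ${\sf MEM}_{\Tmc, \Fmf}$) whether $\Tmc$ holds on $p$; by Lemma \ref{lem:sagiv} at least one pair witnesses the asymmetry, and from such a $p$ one recovers an interpretation $\Imc$ with ${\sf true}(\Imc) = \{\gamma \in V \mid t[\gamma] = t'[\gamma]\}$. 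Lemma \ref{lem:eq} then guarantees $\Imc \in \mu_\Imc(\Tmc) \oplus \mu_\Imc(\Hmc)$. Since the number of tuple pairs is $O(|r|^2)$, the whole computation takes time $p_3(|\Tmc|,|r|) = O(|r|^2)$, with the target $\Tmc$ only accessed through the membership oracle, as required.

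It remains to check the polynomial bound on hypothesis size. From the analysis preceding Theorem \ref{th:interpretations} (Corollary \ref{cor:length} together with Lemma \ref{lem:positive1}), every hypothesis \Hmc produced by Algorithm \ref{alg:learner} satisfies $|\Hmc| \leq |V|^2 \cdot |\Tmc|$, so we may set $p_1(|\Tmc|) = |V|^2 \cdot |\Tmc|$, and $\g$ is indeed defined on all hypotheses the learner for $\Fmf(\text{MVDF}_\Imc)$ will ever query. The only genuinely delicate point is that $\g$ must not inspect $\Tmc$ directly; this is precisely why we route the semantic check ``does $\Tmc$ hold on $p$?'' through membership queries, and Lemma \ref{lem:mem} (applied inside the search over pairs) ensures this is legitimate. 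With $\f$, $\g$, $p_1$, $p_2$, $p_3$ in hand, Theorem \ref{th:reduction} together with Theorem \ref{th:interpretations} yields the claim.
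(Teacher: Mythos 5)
Your proposal is correct and follows essentially the same route as the paper: it invokes Lemma \ref{lem:mem} for $\f$, Lemma \ref{lem:eq} (via Lemma \ref{lem:sagiv}) for $\g$, bounds the hypothesis size polynomially to fix $p_1$, and then applies Theorems \ref{th:interpretations} and \ref{th:reduction}. The only cosmetic difference is your hypothesis-size bound $|V|^2\cdot|\Tmc|$ versus the paper's stated $|V|\cdot|\Tmc|$, which is immaterial since any polynomial bound suffices for $p_1$.
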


\section{Discussion}


We  solved the open question raised by~\cite{LavinPuente2011}, 
showing a polynomial time algorithm that exactly learns the class MVDF 
from interpretations. 
From a 
database design perspective, a transformation of our algorithm can be used to extract 
 multivalued dependencies 
from examples of relations. This process is a sort of knowledge discovery, 
which can help in restructuring databases and  finding data dependencies 
that database designers did not foresee. 
From a  theoretical point of view, 
we 
take a step towards 
identifying 
important concept classes %
that can be learned in polynomial time,  
a natural 
research topic in computational learning theory. 
However, it remains open the question of whether the class MVDF can be 
exactly learned in polynomial time from entailments (where the entailments are mvd clauses). 
We know that, for propositional Horn, learning 
from entailments reduces to learning from interpretations. 
However, for MVDF a similar reduction is not so easy. 
The main obstacle is the transformation of membership queries, where one needs to
 decide whether an 
interpretation is a model of the target using polynomially many entailment queries.

\bibliographystyle{plain}
\bibliography{ourbib}

\appendix

\section{Proofs for Section \ref{sec:mvdf-interpretation}}

\normalsize 

We provide the proofs for the lemmas  stated in Section \ref{sec:mvdf-interpretation}. 
We note that our algorithm maintains a sequence of positive examples, as in \cite{Arias2011}. Also,  
the construction of  mvd clauses in the hypothesis is inspired by \cite{Puente15}.

\begin{remark}\upshape\label{remark1}
In our proof we only consider interpretations \Imc  such that $|{\sf false}(\Imc)|\geq 2$. 
This is justified by the fact that in Line \ref{line:hypo0} of Algorithm \ref{alg:learner}
we check whether $\Tmc\models V \rightarrow \mathbf{F}$ and 
whether $\Tmc\models V\setminus\{v\} \rightarrow v$, for all $v \in V$,
and if so we add them to $\Hmc_0$ (note that this can be easily checked with queries 
to ${\sf MEM}_{\Tmc,\Fmf\text{(MVDF$_\Imc$)}}$). 
Any negative counterexample \Imc received by Algorithm \ref{alg:learner} 
is such that 
$|{\sf false}(\Imc)|\geq 2$ and it can only violate mvd clauses $X \rightarrow Y \vee Z \in \Tmc$ with $Y$ and $Z$ non-empty. 
Also, any positive counterexample can only violate mvd clauses $X \rightarrow Y \vee Z \in \Hmc$ with $Y$ and $Z$ non-empty.
\end{remark}
We consistently use \Pmf and \Lmf for the sequences of positive and negative 
examples of Algorithm~\ref{alg:learner}, respectively.
Before we show Lemma~\ref{lem:positive1} we need the following technical lemma. 

\begin{lemma}
\label{lem:positive3}
Let $\Imc$ be a negative example for $\Tmc$  that covers $X \rightarrow Y \vee Z \in \Tmc$.
Let \Call{BuildClauses}{$\Imc,$ $ \Pmf$} be the set  $\{ {\sf true}(\Imc) \rightarrow Y_1 \vee Z_1,
  \ldots ,$ ${\sf true}(\Imc) \rightarrow Y_k \vee Z_k  \}$. 
  Then, for all $i$, $1 \leq i  \leq k$, either $Y_i \subseteq Y$ or $Y_i \subseteq Z$.
\end{lemma}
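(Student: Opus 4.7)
The plan is to prove the lemma by induction on the iterations of the \textbf{for} loop in Function \textsc{BuildClauses} (Algorithm~\ref{alg:build}), maintaining as the loop invariant the statement that every clause ${\sf true}(\Imc) \rightarrow Y_i \vee Z_i$ currently in $C$ satisfies $Y_i \subseteq Y$ or $Y_i \subseteq Z$, where $X \rightarrow Y \vee Z$ is the target mvd clause from the hypothesis of the lemma. Since the lemma's conclusion concerns the final value of $C$ after all positive examples in $\Pmf$ have been processed, establishing this invariant suffices.

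For the base case, the set $C$ is initialised in Line~\ref{line:initial} as $\{{\sf true}(\Imc) \rightarrow v \vee V\setminus({\sf true}(\Imc)\cup\{v\}) \mid v \in V \setminus {\sf true}(\Imc)\}$, so each $Y_i$ is a singleton $\{v\}$ with $v \in V \setminus {\sf true}(\Imc)$. Because $\Imc$ covers $X \rightarrow Y \vee Z$ we have $X \subseteq {\sf true}(\Imc)$, hence $V \setminus {\sf true}(\Imc) \subseteq V \setminus X = Y \cup Z$ (the latter equality using that $X,Y,Z$ partition $V$). Therefore each $v \in V \setminus {\sf true}(\Imc)$ lies in $Y$ or in $Z$, giving $\{v\} \subseteq Y$ or $\{v\} \subseteq Z$ as required.

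For the inductive step, fix an iteration for some $\Imc_l \in \Pmf$ and suppose the invariant holds for the current $C$. Let ${\sf true}(\Imc) \rightarrow Y_{j_1} \vee Z_{j_1}, \ldots, {\sf true}(\Imc) \rightarrow Y_{j_k} \vee Z_{j_k}$ be the clauses in $C$ violated by $\Imc_l$; they are replaced in Line~\ref{line:replace-function} by a single clause whose $Y$-part is $\bigcup_{i=1}^k Y_{j_i}$. Each violated clause is covered by $\Imc_l$, so ${\sf true}(\Imc) \subseteq {\sf true}(\Imc_l)$ and therefore $X \subseteq {\sf true}(\Imc_l)$, so $\Imc_l$ covers the target clause $X \rightarrow Y \vee Z$. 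Since $\Imc_l$ is a positive example, $\Imc_l \models \Tmc$, so in particular $\Imc_l \models X \rightarrow Y \vee Z$. When $Y$ and $Z$ are both non-empty this forces either $Y \subseteq {\sf true}(\Imc_l)$ or $Z \subseteq {\sf true}(\Imc_l)$; by symmetry assume $Y \subseteq {\sf true}(\Imc_l)$, i.e.\ $Y \cap {\sf false}(\Imc_l) = \emptyset$. For each $i$, the violation of ${\sf true}(\Imc) \rightarrow Y_{j_i} \vee Z_{j_i}$ by $\Imc_l$ provides an element of $Y_{j_i}$ lying in ${\sf false}(\Imc_l)$; this element is not in $Y$, so $Y_{j_i} \not\subseteq Y$, and by the induction hypothesis we must have $Y_{j_i} \subseteq Z$. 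Hence $\bigcup_{i=1}^k Y_{j_i} \subseteq Z$, preserving the invariant.

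The only subtlety I anticipate is the edge case in which one of $Y$, $Z$ in the target clause is empty: then $\Imc_l \models X \rightarrow Y \vee Z$ no longer decomposes as "$Y \subseteq {\sf true}(\Imc_l)$ or $Z \subseteq {\sf true}(\Imc_l)$", since satisfaction is governed by clause~(b) of the definition of violation. However, appealing to the inductive hypothesis we have $Y_{j_i} \subseteq Y$ or $Y_{j_i} \subseteq Z$, and if $Y = \emptyset$ the first option forces $Y_{j_i} = \emptyset$, so in either subcase $Y_{j_i} \subseteq Z$; the merged $Y$-part again lies entirely in $Z$. Thus the invariant survives the degenerate configurations and the induction goes through.
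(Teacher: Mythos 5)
Your proof is correct and follows essentially the same route as the paper's: an induction over the positive examples processed by \textsc{BuildClauses} (your loop invariant corresponds to the paper's induction on the length of $\Pmf$), using in the inductive step that a violating positive example covers and satisfies $X \rightarrow Y \vee Z$, so its false variables avoid one of $Y,Z$, which combined with the induction hypothesis places every merged $Y_{j_i}$ inside the other side. Your explicit treatment of the case where $Y$ or $Z$ is empty is a small extra precaution that the paper's argument glosses over but also survives.
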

\begin{proof} 
The proof is by induction on the number of elements in $\Pmf$.
The lemma is true when $\Pmf$ is empty because Function `BuildClauses' 
(Algorithm~\ref{alg:build})
returns the set constructed  in Line~\ref{line:initial}, which contains an 
mvd clause ${\sf true}(\Imc) \rightarrow v \ \vee \ V\setminus ({\sf true}(\Imc)\cup\{v\})$ for each $v \in {\sf false}(\Imc)$.
Now suppose that the lemma holds for $\Pmf$ with $m \in \mathbb{N}$ elements. 
 We show that it holds for $\Pmf$ with $m+1$ elements.
Let \Pmc be a fresh positive example (for \Tmc). 
If $\Pmc\models$ \Call{BuildClauses}{$\Imc, \Pmf$} then 
\Call{BuildClauses}{$\Imc, \Pmf$} $=$ \Call{BuildClauses}{$\Imc, \Pmf\cdot \Pmc$}. So, 
 by induction hypothesis the lemma holds. 

Otherwise, $\Pmc\not\models$ \Call{BuildClauses}{$\Imc, \Pmf$}. 
Let 
${\sf true}(\Imc)  \rightarrow Y_{1}  \vee Z_{1},  \ldots ,$ ${\sf true}(\Imc)  \rightarrow Y_{k} \vee Z_{k}$ 
be the mvd clauses in \Call{BuildClauses}{$\Imc, \Pmf$} violated by $\Pmc$. 
These mvd clauses are replaced, in \Call{BuildClauses}{$\Imc, \Pmf\cdot \Pmc$}, by
 $  {\sf true}(\Imc) \rightarrow \bigcup_{j=1}^k Y_{j} \vee 
 (\bigcup_{j=1}^k Z_{j}\setminus \bigcup_{j=1}^k Y_{j})$. 
 So we need to show that either $  \bigcup_{j=1}^k Y_{j} \subseteq Y $ or $  \bigcup_{j=1}^k Y_{j} \subseteq Z  $. 
 As $\Pmc$ violates these mvd clauses, 
 we have that ${\sf true}(\Imc) \subseteq {\sf true}(\Pmc)$  
 and $\Pmc$ must have some zero in $Y_{j}$ for all $1 \leq j \leq k$. 
 Also, since $\Pmc$ is a positive example and $X \subseteq {\sf true}(\Imc)$ either 
 ${\sf false}(\Pmc) \subseteq Y$ or  ${\sf false}(\Pmc) \subseteq Z$. 
 Therefore, either (a) each $Y_{j}$ has at least one variable in $Y$ or 
   (b) each $Y_{j}$ has at least one variable in $Z$. 
   In case (a), 
   by induction hypothesis, either 
   $Y_{j} \subseteq Y$ or $Y_{j} \subseteq Z$. 
   As $Y \cap Z = \emptyset$, $Y_{j} \subseteq Y$ for all $1 \leq j \leq k$.  
      Therefore 
   $  \bigcup_{j=1}^k Y_{j} \subseteq Y $. 
One can prove in the same way that in 
  case (b)  we have $  \bigcup_{j=1}^k Y_{j} \subseteq Z  $.  
\end{proof}

 \smallskip
   
\noindent
{\bf Lemma~\ref{lem:positive1}   (restated).} 
\textit{Let $\Imc$ be a negative example. 
Let  \Call{BuildClauses}{$\Imc, \Pmf$}
$=\{ 
 {\sf true}(\Imc) \rightarrow Y_1 \vee Z_1,$ $ \ldots , {\sf true}(\Imc) \rightarrow Y_k \vee Z_k\} $.  
Then, for all $i, j$, such that $1 \leq i < j \leq k$, 
we have that $(\ast)$:
$Y_i \cap Y_j = \emptyset \text{ and }  
\bigcup_{j=1}^k Y_j  = {\sf false}(\Imc).$ 
Moreover, for any ${\sf true}(\Imc)\rightarrow Y_i\vee Z_i$, $1\leq i \leq k$, 
we have that $Y_i$, $Z_i$ are non-empty.}

\smallskip
\noindent

\begin{proof}
The  proof is by induction on the size of $\Pmf$.
The lemma is true when $\Pmf$ is empty because 
Function `BuildClauses' (Algorithm~\ref{alg:build}) 
returns the set constructed  in Line~\ref{line:initial}, 
which contains an mvd clause $ X \rightarrow v \ \vee \ V\setminus (X\cup\{v\})$ 
for each $v \in {\sf false}(\Imc)$, where $X={\sf true}(\Imc)$ 
(note that, as in Remark~\ref{remark1}, $|{\sf false}(\Imc)|\geq 2$ and therefore $V\setminus (X\cup\{v\})$ is 
 non-empty).
 Now suppose that the lemma holds for $\Pmf$ with $m \in \mathbb{N}$ elements. 
 We show that it holds for $\Pmf$ with $m+1$ elements.
Let \Pmc be a fresh positive example. 
If $\Pmc\models$ \Call{BuildClauses}{$\Imc, \Pmf$} then 
\Call{BuildClauses}{$\Imc, \Pmf$} $=$ \Call{BuildClauses}{$\Imc, \Pmf\cdot\Pmc$}. So, 
 by induction hypothesis the lemma holds. 
Otherwise, $\Pmc\not\models$ \Call{BuildClauses}{$\Imc, \Pmf$}. 
 Let $ X  \rightarrow Y_{1} \vee Z_{1} , 
  \ldots , X  \rightarrow Y_{k} \vee Z_{k}$ 
be the mvd clauses in \Call{BuildClauses}{$\Imc, \Pmf$} 
 violated by $\Pmc$. 
 These mvd clauses are replaced, in \Call{BuildClauses}{$\Imc, \Pmf\cdot\Pmc$}, by
 $  X \rightarrow \bigcup_{j=1}^k Y_{j} \vee 
 (\bigcup_{j=1}^k Z_{j}\setminus \bigcup_{j=1}^k Y_{j})$. 
 Clearly, $(\ast)$ holds in \Call{BuildClauses}{$\Imc, \Pmf\cdot\Pmc$}. 
It remains to show that 
 $(\bigcup_{j=1}^k Z_{j}\setminus \bigcup_{j=1}^k Y_{j})$ is not empty. 
 Since \Imc is a negative example, 
 it violates some clause $X'\rightarrow Y' \vee Z' \in \Tmc$ with 
 $Y',Z'$ non-empty (see Remark~\ref{remark1}). 
 Now suppose to the contrary that $(\bigcup_{j=1}^k Z_{j}\setminus \bigcup_{j=1}^k Y_{j})$ is   empty.  
   Then, 
 $\bigcup_{j=1}^k Y_{j}={\sf false}(\Imc)$ and, by Lemma~\ref{lem:positive3}, 
  $\bigcup_{j=1}^k Y_{j}$  is included either in $Y'$ or in $Z'$. 
  If ${\sf false}(\Imc)$ is included either in 
  $Y'$ or in $Z'$ then \Imc does not violate $X'\rightarrow Y' \vee Z'$. 
  This contradicts our assumption that  \Imc violates $X'\rightarrow Y' \vee Z' \in \Tmc$. 
\end{proof}

We now want to show Lemma~\ref{lem:disj}. 
Before we prove Lemma~\ref{lem:disj}, we need   
Lemmas~\ref{lem:new}-\ref{lem:nogood}  below.

\begin{lemma}\label{lem:new} 
Assume that an interpretation $\Imc$ violates $X \rightarrow Y \vee Z \in \Tmc$. 
For all  $\Imc_i \in \Lmf$ such that 
$\Imc_i$ covers $X \rightarrow Y \vee Z$,  
 ${\sf true}(\Imc_i) \subseteq {\sf true}(\Imc)$ if, and only if, $\Imc~\not\models$\Call{ BuildClauses}{$\Imc_i, \Pmf$}. 
\end{lemma}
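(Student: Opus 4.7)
The plan is to prove the two directions of the equivalence separately, leveraging the partition-style structure of the clauses produced by Function `BuildClauses' that is already established in Lemmas~\ref{lem:positive1} and~\ref{lem:positive3}.

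The direction ($\Leftarrow$) is essentially free. Every clause in \Call{BuildClauses}{$\Imc_i, \Pmf$} has ${\sf true}(\Imc_i)$ as its antecedent, so if $\Imc$ violates any such clause then by the definition of violation $\Imc$ must cover it, which gives ${\sf true}(\Imc_i) \subseteq {\sf true}(\Imc)$ immediately.

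For the direction ($\Rightarrow$), suppose ${\sf true}(\Imc_i) \subseteq {\sf true}(\Imc)$; equivalently ${\sf false}(\Imc) \subseteq {\sf false}(\Imc_i)$. Since $\Imc$ violates the mvd clause $X \rightarrow Y \vee Z \in \Tmc$ one can pick $v \in Y \cap {\sf false}(\Imc)$ and $w \in Z \cap {\sf false}(\Imc)$, and both variables then also lie in ${\sf false}(\Imc_i)$. By Lemma~\ref{lem:positive1}, the left-hand sides $Y_1, \ldots, Y_k$ returned by \Call{BuildClauses}{$\Imc_i, \Pmf$} form a partition of ${\sf false}(\Imc_i)$, so there are indices $j_1, j_2$ with $v \in Y_{j_1}$ and $w \in Y_{j_2}$. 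By Lemma~\ref{lem:positive3} each $Y_j$ is contained wholly in $Y$ or wholly in $Z$, so $Y_{j_1} \subseteq Y$ and $Y_{j_2} \subseteq Z$, and since $Y \cap Z = \emptyset$ we get $j_1 \neq j_2$.

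It then suffices to show that $\Imc$ violates the single clause ${\sf true}(\Imc_i) \rightarrow Y_{j_1} \vee Z_{j_1}$. It is covered by $\Imc$ by the standing assumption, and $v \in Y_{j_1} \cap {\sf false}(\Imc)$ supplies the required false variable on the $Y$-side. The only substantive step is to argue that $w \in Z_{j_1}$. For this, one observes the invariant that $Y_j \cup Z_j = {\sf false}(\Imc_i)$ for every clause produced by Function `BuildClauses': it holds for the initial clauses built in Line~\ref{line:initial}, and the merge step in Line~\ref{line:replace-function} preserves it since $\bigcup Y_{j'} \cup (\bigcup Z_{j'} \setminus \bigcup Y_{j'}) = \bigcup(Y_{j'} \cup Z_{j'})$. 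Consequently $Z_{j_1} = {\sf false}(\Imc_i) \setminus Y_{j_1}$, which contains $Y_{j_2}$ by the disjointness given in Lemma~\ref{lem:positive1}, and hence contains $w$. I expect this small auxiliary invariant to be the only non-routine point; once it is in place, the lemma follows directly from the partition structure, and the rest of the argument is essentially bookkeeping.
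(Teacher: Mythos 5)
Your proof is correct and follows essentially the same route as the paper's: both directions are handled identically, and the key step of locating clauses ${\sf true}(\Imc_i)\rightarrow Y_{j_1}\vee Z_{j_1}$ and ${\sf true}(\Imc_i)\rightarrow Y_{j_2}\vee Z_{j_2}$ via Lemma~\ref{lem:positive1} and then separating $Y_{j_1}\subseteq Y$, $Y_{j_2}\subseteq Z$ via Lemma~\ref{lem:positive3} is exactly the paper's argument (the paper concludes that both clauses are violated, you only need one). The auxiliary invariant $Y_j\cup Z_j={\sf false}(\Imc_i)$ that you verify from the construction is in fact immediate from the definition of an mvd clause (its antecedent and consequents partition $V$), so it is not an extra burden.
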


\begin{proof}
 The ($\Leftarrow$) direction is trivial. Now, suppose that ${\sf true}(\Imc_i) \subseteq {\sf true}(\Imc)$  to prove
($\Rightarrow$). As $\Imc \not\models X \rightarrow Y \vee Z$, we have that 
 $X \subseteq {\sf true}(\Imc)$ 
    and there 
   are $v \in Y$ and $w \in Z$ such that $v,w \in {\sf false}(\Imc)$.  
As ${\sf true}(\Imc_i) \subseteq {\sf true}(\Imc) $, we have that  
$v,w \in {\sf false}(\Imc_i)$. 
  By Lemma~\ref{lem:positive1}, 
there are ${\sf true}(\Imc_i)  \rightarrow Y_1 \vee Z_1,{\sf true}(\Imc_i)  \rightarrow Y_2 \vee Z_2
\in$\Call{ BuildClauses}{$\Imc_i, \Pmf$} such that $v  \in Y_1$  and 
$w  \in Y_2$. By Lemma~\ref{lem:positive3}, $Y_1 \subseteq Y$ and $Y_2 \subseteq Z$. As $Y \cap Z = \emptyset$, 
we have that 
$Y_1 \cap Y_2 = \emptyset$. 
So, $v \in Z_2$ and $w \in Z_1$, which means that \Imc violates 
 both ${\sf true}(\Imc_i)  \rightarrow Y_1 \vee Z_1$ and ${\sf true}(\Imc_i)  \rightarrow Y_2 \vee Z_2$ 
 in \Call{ BuildClauses}{$\Imc_i, \Pmf$}. 
\end{proof}

We can see the hypothesis $\Hmc$ as a sequence of sets of multivalued clauses, 
where each $\Hmc_i$ corresponds to the output of Function `BuildClauses' (Algorithm~\ref{alg:build})
with $\Imc_i \in \Lmf$ and \Pmf as input.

\begin{lemma}\label{lem:curious} 
At the end of each iteration,  $\Imc_i \models \Hmc \setminus \Hmc_i$, for all $\Imc_i \in \Lmf$.  
\end{lemma}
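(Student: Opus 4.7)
My plan is to prove Lemma~\ref{lem:curious} by induction on the iterations of Algorithm~\ref{alg:learner}; the base case is trivial since $\Lmf$ is initially empty. The portion $\Hmc_0$ of the hypothesis is satisfied by every $\Imc \in \Lmf$ because each such $\Imc$ is a negative example for \Tmc with $|{\sf false}(\Imc)| \geq 2$ (Remark~\ref{remark1}), so $\Imc$ cannot cover either $V \rightarrow \mathbf{F}$ or $V \setminus \{v\} \rightarrow v$. The inductive step thus reduces to showing, at the end of the current iteration, that for all $\Imc_i, \Imc_j \in \Lmf$ with distinct underlying interpretations, $\Imc_i \models$ \Call{BuildClauses}{$\Imc_j, \Pmf$}.

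The first auxiliary observation I would establish is a monotonicity-style preservation property of \Call{BuildClauses}: appending one positive example to $\Pmf$ merges a group of clauses ${\sf true}(\Imc_j) \rightarrow Y_p \vee Z_p$ ($p = 1, \ldots, k$) into the single clause ${\sf true}(\Imc_j) \rightarrow \bigcup_p Y_p \vee (\bigcup_p Z_p \setminus \bigcup_p Y_p)$, and if $\Imc_i$ satisfies each of the original clauses then $\Imc_i$ also satisfies the merged one. This follows by a simple case analysis: either $\Imc_i$ does not cover the merged clause (trivial), or some $v \in Y_p$ lies in ${\sf true}(\Imc_i)$ for some $p$ (so $v \in \bigcup_p Y_p$), or for every $p$ the entire $Z_p$ is contained in ${\sf true}(\Imc_i)$ (so the merged $Z$-consequent is too). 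Iterating across all newly appended positives handles the iteration where a positive counterexample is added to $\Pmf$.

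A second ingredient is an easy invariant of Function `RefineCounterexample', proven by induction on recursion depth: the returned $\Jmc$ always satisfies the current $\Hmc$, since the initial argument is a negative counterexample satisfying $\Hmc$ and each recursive replacement of $\Imc$ by $\Imc \cap \Imc_k$ is gated by clause (ii) of ${\sf goodCandidate}$, namely $\Imc \cap \Imc_k \models \Hmc$. In the APPEND subcase, this invariant gives $\Jmc \models \Hmc_k$ for every old $\Imc_k \in \Lmf$. Conversely, for each such $\Imc_k$, if ${\sf true}(\Jmc) \subsetneq {\sf true}(\Imc_k)$ then $(\Imc_k, \Jmc)$ would satisfy all three ${\sf goodCandidate}$ conditions (using $\Jmc \models \Hmc$ and $\Jmc \not\models \Tmc$), contradicting the APPEND guard; hence ${\sf true}(\Jmc) \not\subseteq {\sf true}(\Imc_k)$ and no clause in \Call{BuildClauses}{$\Jmc, \Pmf$} is covered by $\Imc_k$ (the equality case being harmless because it forces $\Hmc_\Jmc$ and $\Hmc_{\Imc_k}$ to coincide).

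The main obstacle is the REPLACE subcase, in which $\Lmf$, $\Pmf$, and the set of surviving elements all change simultaneously. I would decompose the verification into three pair types. For two surviving old elements $\Imc_k, \Imc_l$, the inductive hypothesis combined with the merging-preservation step gives $\Imc_k \models$ \Call{BuildClauses}{$\Imc_l, \Pmf'$} with respect to the enlarged $\Pmf'$. For a surviving old $\Imc_k$ against the new $\Jmc$, the required $\Imc_k \models$ \Call{BuildClauses}{$\Jmc, \Pmf'$} is exactly the property Line~\ref{line:remove-set} enforces on any $\Imc_k$ that is not discarded. Finally, for $\Jmc$ against any surviving old $\Imc_k$, the RefineCounterexample invariant gives $\Jmc \models$ \Call{BuildClauses}{$\Imc_k, \Pmf$}, which the merging-preservation extends to $\Pmf'$. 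Together these pieces close the induction.
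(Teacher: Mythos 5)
Your overall strategy is the same as the paper's: split on whether the iteration appends $\Jmc$ to \Lmf or replaces an element of \Lmf, use the invariant that Function `RefineCounterexample' returns $\Jmc$ with $\Jmc\models\Hmc$ and $\Jmc\not\models\Tmc$, derive a contradiction with the failed ${\sf goodCandidate}$ guard in the append case, and invoke Line~\ref{line:remove-set} for surviving old elements against the new $\Jmc$ in the replace case. You are in fact more careful than the paper's own (rather terse) proof in two respects: you handle the iterations in which a positive counterexample is appended to \Pmf, and you account for the growth of \Pmf inside `UpdatePositiveExamples'; both require exactly the preservation property you isolate (satisfaction of \Call{BuildClauses}{$\Imc_j,\Pmf$} by another negative example is preserved when positives are appended), a point the paper's proof leaves implicit.

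However, the justification you give for that preservation property fails as written. Your second case, ``some $v\in Y_p$ lies in ${\sf true}(\Imc_i)$ for some $p$'', does not imply that $\Imc_i$ satisfies the merged clause: under the mvd semantics the consequents are conjunctions, so satisfying ${\sf true}(\Imc_j)\rightarrow Y'\vee Z'$ (once covered) requires $Y'\subseteq{\sf true}(\Imc_i)$ or $Z'\subseteq{\sf true}(\Imc_i)$, and one true variable in $Y'$ rules out neither a false $v'\in Y'$ nor a false $w\in Z'$. The claim itself is true and the repair is short: argue contrapositively. If $\Imc_i$ violated the merged clause ${\sf true}(\Imc_j)\rightarrow\bigcup_p Y_p\vee(\bigcup_p Z_p\setminus\bigcup_p Y_p)$, there would be $v\in Y_p$ for some $p$ and $w\in\bigcup_q Z_q\setminus\bigcup_q Y_q$ with $v,w\in{\sf false}(\Imc_i)$; since every clause produced by \Call{BuildClauses}{$\Imc_j,\Pmf$} satisfies $Y_q\cup Z_q={\sf false}(\Imc_j)$ and $Y_q\cap Z_q=\emptyset$, it follows that $w\in Z_p$, so $\Imc_i$ would already violate the original clause ${\sf true}(\Imc_j)\rightarrow Y_p\vee Z_p$, a contradiction. (Equivalently: if some $Y_p\not\subseteq{\sf true}(\Imc_i)$ then $Z_p\subseteq{\sf true}(\Imc_i)$, and the merged $Z$-consequent is contained in $Z_p$.) With that step repaired, your three-way decomposition of the replace case and your treatment of the append case (including the equality subtlety ${\sf true}(\Jmc)={\sf true}(\Imc_k)$, which the paper silently excludes) go through.
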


\begin{proof}
Let \Jmc be the interpretation computed in Line~\ref{line:cm} of Algorithm~\ref{alg:learner}. 
If  Algorithm \ref{alg:learner} executes Line~\ref{line:append} then it holds that $\Jmc \models \Hmc$. 
If there is $\Imc_j \in \Lmf$ such that $\Imc_j \not \models$\Call{BuildClauses}{$\Jmc, \Pmf$}  
then ${\sf true}(\Jmc) \subset {\sf true}(\Imc_j)$ and the pair $(\Imc_j, \Jmc)$ is a 
${\sf goodCandidate}$. This contradicts the fact that Algorithm \ref{alg:learner} did not replace some interpretation in \Lmf. 
Otherwise,  Algorithm \ref{alg:learner} executes Lines~\ref{line:updateseq} 
and~\ref{line:replace}, replacing an interpretation $\Imc_i \in \Lmf$ 
by $ \Jmc$, where the pair  $(\Imc_i, \Jmc)$ is a ${\sf goodCandidate}$. 
In this case, by Definition~\ref{def:good} part (ii), 
$\Imc_i \cap \Jmc \models \Hmc$.
It remains to check that for any other $\Imc_j  \in \Lmf$ it holds that 
 $\Imc_j \models$\Call{BuildClauses}{$  \Jmc, \Pmf$}, but 
 this is always true because of Line~\ref{line:remove-set}. 
\end{proof}

We also require the following technical lemma from~\cite{hermo2015exact}. 

\begin{lemma}[\cite{hermo2015exact}]\label{lem:technical}
Let \Tmc be a set of mvd clauses.
If $\Imc$ and $\Jmc$ are interpretations such that $\Imc \models \Tmc$ and
 $\, \Jmc \models \Tmc$, but $\Imc \cap \Jmc \not \models \Tmc$, then
$\mbox{true}(\Imc) \cup \mbox{true}(\Jmc) = V$.
\end{lemma}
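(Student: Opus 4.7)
The plan is to pick any clause $c = X \rightarrow Y \vee Z \in \Tmc$ that is violated by $\Imc \cap \Jmc$ (such a clause exists by hypothesis), and then run a careful case analysis on which of the three clauses (a)--(c) in the definition of ``violates'' is responsible. The key observation I would use throughout is that ${\sf true}(\Imc \cap \Jmc) = {\sf true}(\Imc) \cap {\sf true}(\Jmc)$, hence ${\sf false}(\Imc \cap \Jmc) = {\sf false}(\Imc) \cup {\sf false}(\Jmc)$; in particular, since $\Imc \cap \Jmc$ covers $c$, both $\Imc$ and $\Jmc$ also cover $c$, so the assumption $\Imc \models \Tmc$ and $\Jmc \models \Tmc$ can be leveraged to constrain how $Y$ and $Z$ lie relative to their true-sets.

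First I would dispose of the two degenerate cases. In case (c) one has ${\sf false}(\Imc \cap \Jmc) = \emptyset$, which forces ${\sf true}(\Imc) = {\sf true}(\Jmc) = V$ and the conclusion is immediate. In case (b), say $Z = \emptyset$ and ${\sf false}(\Imc \cap \Jmc) = \{v\}$ with $v \in Y$, I would note that ${\sf false}(\Imc), {\sf false}(\Jmc) \subseteq \{v\}$ and that they cannot both be empty (otherwise ${\sf false}(\Imc \cap \Jmc) = \emptyset$). So one of them, say ${\sf false}(\Imc) = \{v\}$, witnesses exactly the shape required by clause~(b) of the violation definition applied to $c$, contradicting $\Imc \models \Tmc$.

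The substantive case is (a): we have witnesses $v \in Y$ and $w \in Z$ with $v, w \in {\sf false}(\Imc) \cup {\sf false}(\Jmc)$ and $Y, Z$ non-empty. Since $\Imc$ covers $c$ and $\Imc \models c$, clause~(a) forces either $Y \subseteq {\sf true}(\Imc)$ or $Z \subseteq {\sf true}(\Imc)$, and symmetrically for $\Jmc$. If both $\Imc$ and $\Jmc$ satisfied $Y \subseteq {\sf true}(\cdot)$, then $Y \subseteq {\sf true}(\Imc) \cap {\sf true}(\Jmc)$ would contradict $v \in Y \cap {\sf false}(\Imc \cap \Jmc)$; the symmetric situation for $Z$ is ruled out by $w$. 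Hence one of them (say $\Imc$) has $Y \subseteq {\sf true}(\Imc)$ while the other has $Z \subseteq {\sf true}(\Jmc)$, so $V = X \cup Y \cup Z \subseteq {\sf true}(\Imc) \cup {\sf true}(\Jmc)$, as required.

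The main obstacle I anticipate is not the algebra of case~(a), which is quite direct once the right dichotomy has been extracted from $\Imc \models c$ and $\Jmc \models c$, but rather bookkeeping of the edge cases~(b) and~(c). In particular, the verification in case~(b) must check that the interpretation inheriting ${\sf false} = \{v\}$ also still covers $c$ (it does, because $v \in Y$ is disjoint from $X$) and that the witness $v$ lies in $Y \cup Z$; overlooking either point would leave the contradiction incomplete.
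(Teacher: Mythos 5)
Your proposal is correct. Note that the paper does not actually prove this lemma: it is imported from \cite{hermo2015exact} and stated without proof, so there is no in-paper argument to compare against. Your self-contained case analysis on the three violation conditions is sound: the identity ${\sf false}(\Imc\cap\Jmc)={\sf false}(\Imc)\cup{\sf false}(\Jmc)$ together with $X\subseteq{\sf true}(\Imc\cap\Jmc)$ gives coverage of $c$ by both $\Imc$ and $\Jmc$; cases (b) and (c) are correctly disposed of (case (b) contradicts $\Imc\models\Tmc$ or $\Jmc\models\Tmc$, case (c) makes the conclusion trivial); and in case (a) the dichotomy ``$Y\subseteq{\sf true}(\cdot)$ or $Z\subseteq{\sf true}(\cdot)$'' extracted from $\Imc\models c$ and $\Jmc\models c$, combined with the witnesses $v\in Y$ and $w\in Z$, forces the two interpretations to satisfy the clause on opposite sides, whence $V=X\cup Y\cup Z\subseteq{\sf true}(\Imc)\cup{\sf true}(\Jmc)$.
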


\begin{lemma}\label{lem:recursive}
If Algorithm \ref{alg:learner} 
replaces some $\Imc_i \in \Lmf$ with $\Jmc$ then ${\sf false}(\Imc_i)\subseteq {\sf false}(\Jmc)$ 
($\Imc_i$ before the replacement). 
\end{lemma}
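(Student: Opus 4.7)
The plan is to exploit two facts side by side: when $\Imc_i$ is replaced by $\Jmc$ in Line~\ref{line:replace} of Algorithm~\ref{alg:learner}, the check at Line~\ref{line:cond} guarantees that $(\Imc_i,\Jmc)$ is a ${\sf goodCandidate}$; and, independently, $\Jmc$ is the return value of Function `RefineCounterexample', whose recursive structure (Algorithm~\ref{alg:learning-mvdf-up}) is designed precisely to exhaust every opportunity for further refinement by elements of $\Lmf$. My first step is to make this second fact explicit: by tracing the recursion, the returned $\Jmc$ must arise from a terminating call whose test on Line~\ref{line:cond2} failed, so we obtain the invariant that for every $\Imc_k\in\Lmf$ the pair $(\Jmc,\Imc_k)$ is \emph{not} a ${\sf goodCandidate}$.

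Applying this invariant with $\Imc_k := \Imc_i$ tells me that at least one of the three conditions of Definition~\ref{def:good} fails for $(\Jmc,\Imc_i)$. The key observation is that conditions (ii) and (iii) depend only on the intersection $\Imc_i\cap\Jmc$, which is symmetric in its two arguments; so the fact that $(\Imc_i,\Jmc)$ is a ${\sf goodCandidate}$ already certifies $\Imc_i\cap\Jmc\models\Hmc$ and $\Imc_i\cap\Jmc\not\models\Tmc$, and these same two statements serve as (ii) and (iii) for the pair $(\Jmc,\Imc_i)$. Hence the only condition left that can fail is (i), which would read ${\sf true}(\Jmc\cap\Imc_i)\subsetneq{\sf true}(\Jmc)$.

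Combining the failure of (i) with the trivially valid containment ${\sf true}(\Jmc\cap\Imc_i)\subseteq{\sf true}(\Jmc)$ forces the equality ${\sf true}(\Jmc\cap\Imc_i)={\sf true}(\Jmc)$, which is exactly ${\sf true}(\Jmc)\subseteq{\sf true}(\Imc_i)$, equivalently ${\sf false}(\Imc_i)\subseteq{\sf false}(\Jmc)$, as required. I expect the only real subtlety to be the bookkeeping between the two orderings in which the predicate ${\sf goodCandidate}$ is invoked, namely ${\sf goodCandidate}(\Imc,\Imc_k)$ inside Function `RefineCounterexample' versus ${\sf goodCandidate}(\Imc_k,\Jmc)$ at Line~\ref{line:cond} of Algorithm~\ref{alg:learner}; the whole proof turns on noticing that this asymmetry is confined to clause (i) of the definition, while clauses (ii) and (iii) remain symmetric in the two arguments.
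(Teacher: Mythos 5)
Your proposal is correct and follows essentially the same route as the paper's own proof: the paper also argues that since $(\Imc_i,\Jmc)$ is a ${\sf goodCandidate}$, conditions (ii) and (iii) (which depend only on the symmetric intersection $\Imc_i\cap\Jmc$) hold, so if ${\sf true}(\Jmc\cap\Imc_i)\subset{\sf true}(\Jmc)$ held as well then $(\Jmc,\Imc_i)$ would be a ${\sf goodCandidate}$, contradicting the fact that the terminating call of Function `RefineCounterexample' returned $\Jmc$ rather than recursing. Your only stylistic difference is stating the terminal-call invariant of the recursion explicitly instead of phrasing the whole argument as a contradiction, which does not change the substance.
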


\begin{proof}
Suppose to 
the contrary that 
${\sf false}(\Imc_i)\not\subseteq {\sf false}(\Jmc)$.
That is, 
($\ast$) ${\sf true}(  \Jmc \cap \Imc_i)\subset {\sf true}(\Jmc)$.
If Algorithm \ref{alg:learner} replaced $\Imc_i \in \Lmf$ by \Jmc then 
$(\Imc_i,\Jmc)$ is a ${\sf goodCandidate}$. Then, 
$\Imc_i\cap \Jmc\not\models \Tmc$
and $\Imc_i\cap \Jmc\models \Hmc$. 
If (i) ${\sf true}(  \Jmc \cap \Imc_i)\subset {\sf true}(\Jmc)$ (by ($\ast$)), 
(ii) $\Jmc\cap \Imc_i\models \Hmc$ and (iii) 
$\Jmc\cap \Imc_i\not\models \Tmc$; then $(\Jmc,\Imc_i)$ is a ${\sf goodCandidate}$. 
This contradicts the condition in Line~\ref{line:cond2} of Algorithm~\ref{alg:learning-mvdf-up}, 
which would not return $\Jmc$ but make a recursive call with $\Jmc \cap \Imc_i$  
 and, thus, ${\sf false}(\Imc_i)\subseteq {\sf false}(\Jmc)$. 
\end{proof}

\begin{lemma}\label{lem:tech2}
Let $\Imc_i,\Imc_j\in \Lmf$ and assume $i < j$. 
At the end of each iteration, if $c \in \Tmc$ is violated by $\Imc_i, \Imc_j \in \Lmf$ 
then the pair $(\Imc_i, \Imc_j)$ is a ${\sf goodCandidate}$ or ${\sf false}(\Imc_i) \cap {\sf false}(\Imc_j) = \emptyset$. 
\end{lemma}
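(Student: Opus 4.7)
The plan is to check each clause of Definition~\ref{def:good} for the pair $(\Imc_i,\Imc_j)$ and argue that whenever one of them fails, ${\sf false}(\Imc_i) \cap {\sf false}(\Imc_j) = \emptyset$. In fact, conditions (i) and (iii) of ${\sf goodCandidate}$ will hold unconditionally, so only the failure of (ii) needs to imply disjoint sets of false variables.

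Condition (iii), $\Imc_i\cap\Imc_j\not\models\Tmc$, is immediate: $\Imc_i\cap\Imc_j$ covers $X$, and the ${\sf false}(\Imc_i)$-witnesses from $Y$ and $Z$ that make $\Imc_i$ violate $c$ already sit inside ${\sf false}(\Imc_i\cap\Imc_j)={\sf false}(\Imc_i)\cup{\sf false}(\Imc_j)$, so $\Imc_i\cap\Imc_j$ also violates $c$. For condition (i), I will show ${\sf true}(\Imc_i)\not\subseteq{\sf true}(\Imc_j)$ by contradiction: if the containment held, then Lemma~\ref{lem:new} applied to $\Imc_j$ (which violates $c$) and $\Imc_i\in\Lmf$ (which covers $c$) would yield $\Imc_j\not\models$ \textsc{BuildClauses}$(\Imc_i,\Pmf)=\Hmc_i$; but $i<j$, so Lemma~\ref{lem:curious} gives $\Imc_j\models\Hmc\setminus\Hmc_j\supseteq\Hmc_i$, a contradiction.

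The main work is in condition (ii). Suppose $\Imc_i\cap\Imc_j\not\models\Hmc$. By Remark~\ref{remark1} we have $|{\sf false}(\Imc_i\cap\Imc_j)|\geq 2$, so no clause of $\Hmc_0$ is violated and the offending clause must lie in some $\Hmc_k=$ \textsc{BuildClauses}$(\Imc_k,\Pmf)$ with $\Imc_k\in\Lmf$; running the Lemma~\ref{lem:new}/Lemma~\ref{lem:curious} argument of the previous paragraph in both directions (using ${\sf true}(\Imc_k)\subseteq{\sf true}(\Imc_i)\cap{\sf true}(\Imc_j)$) rules out $\Imc_k\in\{\Imc_i,\Imc_j\}$. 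Let ${\sf true}(\Imc_k)\rightarrow Y_l\vee Z_l$ be the violated clause. Because $\Imc_i\models$ and $\Imc_j\models$ that clause but their intersection does not, a short case analysis on the split ${\sf false}(\Imc_i\cap\Imc_j)={\sf false}(\Imc_i)\cup{\sf false}(\Imc_j)$ yields, up to swapping the roles of $\Imc_i$ and $\Imc_j$, witnesses $v\in Y_l\cap{\sf false}(\Imc_i)\cap{\sf true}(\Imc_j)$ and $w\in Z_l\cap{\sf true}(\Imc_i)\cap{\sf false}(\Imc_j)$.

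Assume now for contradiction that some $u\in{\sf false}(\Imc_i)\cap{\sf false}(\Imc_j)$ exists. Then $u\in{\sf false}(\Imc_k)$ (since ${\sf true}(\Imc_k)\subseteq{\sf true}(\Imc_i)$), and Lemma~\ref{lem:positive1} places $u$ in a unique block $Y_m$ of the partition of ${\sf false}(\Imc_k)$. For every $m'\neq m$ we have $u\in Z_{m'}$, so if either ${\sf false}(\Imc_i)\cap Y_{m'}$ or ${\sf false}(\Imc_j)\cap Y_{m'}$ were nonempty, the corresponding interpretation would violate ${\sf true}(\Imc_k)\rightarrow Y_{m'}\vee Z_{m'}$, contradicting Lemma~\ref{lem:curious}. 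Hence both ${\sf false}(\Imc_i)\cap{\sf false}(\Imc_k)$ and ${\sf false}(\Imc_j)\cap{\sf false}(\Imc_k)$ sit inside $Y_m$. Applied to $v\in Y_l\cap{\sf false}(\Imc_i)\cap{\sf false}(\Imc_k)$ this forces $l=m$; but then $w\in Z_l\cap{\sf false}(\Imc_j)\cap{\sf false}(\Imc_k)\subseteq Y_m=Y_l$, contradicting $Y_l\cap Z_l=\emptyset$. The main obstacle I expect is precisely this last maneuver: harnessing the partition structure of \textsc{BuildClauses}$(\Imc_k,\Pmf)$ from Lemma~\ref{lem:positive1} to drive both $\Imc_i$ and $\Imc_j$ into a single block $Y_m$, which is exactly what Algorithm~\ref{alg:learner} was designed to guarantee.
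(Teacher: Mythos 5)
Your proposal is correct, and its skeleton matches the paper's: assume ${\sf false}(\Imc_i)\cap{\sf false}(\Imc_j)\neq\emptyset$ (equivalently, show that failure of the remaining condition forces disjointness) and verify the three clauses of Definition~\ref{def:good}, with condition (i) obtained exactly as in the paper from Lemma~\ref{lem:new} plus Lemma~\ref{lem:curious}, and condition (iii) being immediate since both interpretations violate $c$. Where you genuinely diverge is condition (ii). The paper treats $\Hmc\setminus(\Hmc_i\cup\Hmc_j)$ as a set of mvd clauses satisfied by both $\Imc_i$ and $\Imc_j$ (Lemma~\ref{lem:curious}) and then invokes the imported Lemma~\ref{lem:technical} of~\cite{hermo2015exact}: two models of an mvd formula whose false sets intersect have an intersection that is again a model; covering considerations then lift this to all of $\Hmc$. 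You instead re-derive the needed instance of that fact by hand: you locate the offending clause in some $\Hmc_k$ with $k\notin\{i,j\}$ (ruling out $k\in\{i,j\}$ by the symmetric Lemma~\ref{lem:new}/Lemma~\ref{lem:curious} argument), extract witnesses $v\in Y_l$, $w\in Z_l$ split between ${\sf false}(\Imc_i)$ and ${\sf false}(\Imc_j)$, and then use the partition structure of BuildClauses$(\Imc_k,\Pmf)$ guaranteed by Lemma~\ref{lem:positive1} to show that a common false variable $u$ would force both ${\sf false}(\Imc_i)\cap{\sf false}(\Imc_k)$ and ${\sf false}(\Imc_j)\cap{\sf false}(\Imc_k)$ into the single block containing $u$, which is incompatible with $v\in Y_l$ and $w\in Z_l$. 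The trade-off: the paper's route is shorter and more modular, reusing a general semantic fact about mvd formulas; yours is self-contained within this paper's machinery (Lemmas~\ref{lem:positive1}, \ref{lem:new}, \ref{lem:curious}) and makes explicit that only the shared-antecedent, partitioned structure of the hypothesis clauses is really needed, at the cost of a longer case analysis that in effect re-proves the special case of Lemma~\ref{lem:technical} you use.
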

\begin{proof}
We prove that if  ${\sf false}(\Imc_i) \cap {\sf false}(\Imc_j) \not = \emptyset$, 
then $(\Imc_i, \Imc_j)$ is a ${\sf goodCandidate}$.  
By Lemma \ref{lem:new}, ${\sf true}(\Imc_i)\subseteq {\sf true}(\Imc_j)$ if, 
and only if, $\Imc_i\not\models$ \Call{BuildClauses}{$\Imc_j, \Pmf$}. 
If $\Imc_i$ covers $c \in \Tmc$ and $\Imc_j$ violates $c \in \Tmc$  
then it   
follows from Lemma \ref{lem:curious} that ${\sf true}(\Imc_i)\not\subseteq {\sf true}(\Imc_j)$. 
So (i) ${\sf true}(\Imc_i \cap \Imc_j) \subset {\sf true}(\Imc_i)$. Also 
by Lemma~\ref{lem:curious}, it holds that $\Imc_i \models \Hmc\setminus (\Hmc_i\cup \Hmc_j)$ 
and $\Imc_j \models \Hmc\setminus (\Hmc_i\cup \Hmc_j)$. Now, 
by Lemma~\ref{lem:technical},  ${\sf false}(\Imc_j) \cap {\sf false}(\Jmc) \not = \emptyset$ 
implies that $\Imc_i \cap \Imc_j \models \Hmc\setminus (\Hmc_i\cup \Hmc_j)$. Since 
${\sf true}(\Imc_i \cap \Imc_j) \subset {\sf true}(\Imc_i)$, we actually have that %
(ii) $\Imc_i \cap \Imc_j \models \Hmc$. To finish, we know that (iii) $\Imc_i \cap \Imc_j \not\models \Tmc$ because $c \in \Tmc$ is violated by both $\Imc_i$ and $\Imc_j$. Hence, we obtain the conditions (i), (ii), and (iii) 
of Definition~\ref{def:good}, 
and therefore the pair $(\Imc_i, \Imc_j)$ is a ${\sf goodCandidate}$.
\end{proof}

\begin{lemma}\label{lem:nogood}
Let $\Imc_i,\Imc_j\in \Lmf$ and assume $i < j$. 
At the end of each iteration, the pair $(\Imc_i,\Imc_j)$ is not a ${\sf goodCandidate}$ or 
${\sf false}(\Imc_i)\cap {\sf false}(\Imc_j) = \emptyset$.
\end{lemma}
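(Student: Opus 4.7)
The plan is induction on the iteration number of Algorithm~\ref{alg:learner}, using the statement of the lemma itself as the invariant. The base case is trivial since $\Lmf$ is initially empty. For the inductive step, at any iteration exactly one of three disjoint events occurs: a positive counterexample is appended to $\Pmf$ (Line~\ref{line:append-positive}), a negative counterexample $\Jmc$ is appended to $\Lmf$ (Line~\ref{line:append}), or some $\Imc_i \in \Lmf$ is replaced by $\Jmc$ (Line~\ref{line:replace}) together with the potential removals at Line~\ref{line:remove-set}. A recurring observation that I would use throughout is the monotonicity principle: enlarging $\Hmc$ can only make condition (ii) of Definition~\ref{def:good} harder to satisfy, so it cannot turn a non-\textsf{goodCandidate} into a \textsf{goodCandidate}.

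I would dispatch the append case first: the guard at Line~\ref{line:cond} already rules out any $\Imc_k$ that forms a \textsf{goodCandidate} with $\Jmc$ under the pre-transform $\Hmc$, and since Line~\ref{line:transform} only extends $\Hmc$ by the clauses of BuildClauses on $(\Jmc, \Pmf)$, the monotonicity observation takes care of the new pairs, while old pairs follow directly from the induction hypothesis. For the replacement case I would combine three ingredients: the recursive exhaustion of \textsf{goodCandidate} opportunities inside \textsc{RefineCounterexample} (Algorithm~\ref{alg:learning-mvdf-up}), Lemma~\ref{lem:recursive} which guarantees ${\sf false}(\Imc_i) \subseteq {\sf false}(\Jmc)$ at the moment of replacement, and the clean-up at Line~\ref{line:remove-set} which discards precisely the surviving elements $\Imc_l$ that could otherwise break the invariant against the new $\Jmc$.

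The main obstacle is the positive-counterexample case: $\Lmf$ is unchanged but $\Hmc$ is rebuilt, and the clause merging inside BuildClauses generally makes it \emph{weaker}, so condition (ii) could in principle become newly satisfied. To rule out new \textsf{goodCandidate} pairs here, I would argue by contradiction: suppose some $(\Imc_i, \Imc_j)$ with ${\sf false}(\Imc_i) \cap {\sf false}(\Imc_j) \neq \emptyset$ becomes a \textsf{goodCandidate} after the merge, so that $\Imc_i \cap \Imc_j$ violated some old clause $X \rightarrow Y_l \vee Z_l \in \Hmc_k$ via false witnesses $v \in Y_l$ and $w \in Z_l$ while satisfying its merged successor. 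By Lemma~\ref{lem:positive3} every merged $Y$-consequent lies entirely inside a single $Y$ or $Z$ of some target mvd clause, and by Lemma~\ref{lem:positive1} the $Y$-consequents partition ${\sf false}(\Imc_k)$. Tracing where $v$ and $w$ land in the merged clause, together with Lemma~\ref{lem:technical}, I would show that either the merged clause is still violated (a direct contradiction) or $\Imc_i \cap \Imc_j \models \Tmc$, contradicting condition (iii) of being a \textsf{goodCandidate}. Carrying out this last dichotomy — in particular verifying that the collapse scenario forces $\Imc_i \cap \Imc_j$ to coincide with the positive counterexample on the merged $Y$-consequent — is the delicate step of the argument.
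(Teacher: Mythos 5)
Your skeleton (induction over iterations with the lemma as invariant, the case split, and the monotonicity observation for iterations where $\Hmc$ only gains clauses) matches the paper's, and your append case is fine; likewise the two sub-cases where the replaced element is $\Imc_i$ or $\Imc_j$ itself can be closed as you indicate (exhaustion in Line~\ref{line:cond2} of Algorithm~\ref{alg:learning-mvdf-up}, plus — which you do not mention — the choice of the \emph{first} ${\sf goodCandidate}$ in Line~\ref{line:goodcandidate1} when the replaced element is the later one). The genuine gap is in the replacement case for pairs $(\Imc_i,\Imc_j)$ in which \emph{neither} element is the one replaced. There your monotonicity principle does not apply: the old block of the replaced element is deleted from $\Hmc$ and replaced by \textsc{BuildClauses}$(\Jmc,\Pmf)$, and \textsc{UpdatePositiveExamples} (Line~\ref{line:updateseq}) may append positive examples that merge clauses in \emph{other} blocks, so $\Hmc$ can become strictly weaker and condition (ii) of Definition~\ref{def:good} can become newly true. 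None of your three ingredients touches this: Lemma~\ref{lem:recursive} is not about such pairs (the paper uses it only inside Lemma~\ref{lem:previous}), and your claim that Line~\ref{line:remove-set} discards ``precisely'' the elements that could break the invariant against $\Jmc$ is unsubstantiated — survival, i.e.\ $\Imc_l\models\textsc{BuildClauses}(\Jmc,\Pmf)$, does not by itself exclude a ${\sf goodCandidate}$ pair. The paper closes this case differently: if $(\Imc_i,\Imc_j)$ is a ${\sf goodCandidate}$ at the end but, by the induction hypothesis, was not one before, then only condition (ii) can have changed, so $\Imc_i\cap\Imc_j$ violated some block $\Hmc'_k$ of the old hypothesis with $k\notin\{i,j\}$; by Lemma~\ref{lem:curious} both $\Imc_i$ and $\Imc_j$ satisfy $\Hmc'_k$, and Lemma~\ref{lem:technical} then forces ${\sf true}(\Imc_i)\cup{\sf true}(\Imc_j)=V$, i.e.\ ${\sf false}(\Imc_i)\cap{\sf false}(\Imc_j)=\emptyset$ — the conclusion is the lemma's second disjunct, not a contradiction with (ii) or (iii).

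The same mechanism is what your ``delicate step'' in the positive-counterexample case is missing, and as stated it chases the wrong target. You aim to show that either the merged clause is still violated or $\Imc_i\cap\Imc_j\models\Tmc$; but in the problematic scenario the correct outcome is disjointness, not a contradiction: if the old clause violated by $\Imc_i\cap\Imc_j$ lies in a block $\Hmc'_k$ with $k\notin\{i,j\}$, the Lemma~\ref{lem:curious}/Lemma~\ref{lem:technical} argument above yields ${\sf false}(\Imc_i)\cap{\sf false}(\Imc_j)=\emptyset$, while $k\in\{i,j\}$ is excluded directly by conditions (i) and (ii): condition (i) means $\Imc_i\cap\Imc_j$ does not even cover the antecedent ${\sf true}(\Imc_i)$ of the clauses in $\Hmc_i$, and if ${\sf true}(\Imc_j)\subseteq{\sf true}(\Imc_i)$ then $\Imc_i\cap\Imc_j=\Imc_j$, which violates its own (new) block because its clauses have antecedent ${\sf true}(\Imc_j)$ and non-empty $Y,Z\subseteq{\sf false}(\Imc_j)$ by Lemma~\ref{lem:positive1} and the construction in Algorithm~\ref{alg:build}. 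Your clause-tracing plan via Lemmas~\ref{lem:positive1} and~\ref{lem:positive3} is left uncarried out precisely at this point, so the proof is incomplete there. (You were right to flag this iteration type — the paper's written proof only discusses iterations that compute $\Jmc$ — but the repair is the third-block argument above, not the dichotomy you propose.)
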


\begin{proof}
Let \Jmc be a countermodel computed in Line \ref{line:cm} of Algorithm \ref{alg:learner}. 
Consider the possibilities. 
\begin{itemize}
\item If  Algorithm \ref{alg:learner} appends \Jmc to \Lmf, then  for all $\Imc_k \in \Lmf$ the pair $(\Imc_k,\Jmc)$ 
cannot be a ${\sf goodCandidate}$, 
because otherwise the condition in 
Line~\ref{line:cond} would be satisfied and, instead of appending $\Jmc$, 
Algorithm~\ref{alg:learner} would replace some interpretation $\Imc_k \in \Lmf$. 

\item Now assume that Algorithm \ref{alg:learner} replaces (a) $\Imc_i$ by 
$ \Jmc$ or (b) $\Imc_j$ by $\Jmc$. Suppose the lemma fails to hold in case (a). 
The pair $(\Jmc, \Imc_j)$ is a ${\sf goodCandidate}$. 
This contradicts the condition in Line~\ref{line:cond2} of Algorithm~\ref{alg:learning-mvdf-up}, 
which would not return $\Jmc$ but make a recursive call with $\Jmc \cap \Imc_j$. 
Now, suppose the lemma fails to hold in case (b). 
The pair $(\Imc_i , \Jmc )$ is a ${\sf goodCandidate}$. 
This contradicts the fact that in Line~\ref{line:goodcandidate1} of Algorithm~\ref{alg:learner},  
the first ${\sf goodCandidate}$ is replaced and since $i < j$,  $\Imc_i$ should be replaced instead of $\Imc_j$. 
\item It remains to check the case where  
  Algorithm \ref{alg:learner} replaces 
   $\Imc \in \Lmf\setminus \{\Imc_i,\Imc_j\} $ by $\Jmc$. We prove that if at the end of 
   the iteration, the pair $(\Imc_i,\Imc_j)$ is a 
${\sf goodCandidate}$ then ${\sf false}(\Imc_i)\cap {\sf false}(\Imc_j) = \emptyset$. So assume  that  (i) ${\sf true}(\Imc_i \cap \Imc_j) \subset {\sf true}(\Imc_i)$; (ii) $\Imc_i \cap \Imc_j \models \Hmc$; and (iii)
$\Imc_i \cap \Imc_j \not\models \Tmc$. Point (ii) implies that
$\Imc_i \cap \Imc_j \models \Hmc_i$ and $\Imc_i \cap \Imc_j \models \Hmc_j$. 
Denote by $\Hmc'$ the hypothesis  at 
the beginning of the   iteration. 
By induction hypothesis, before the replacement, 
$(\Imc_i,\Imc_j)$ was not a ${\sf goodCandidate}$ (or ${\sf false}(\Imc_i)\cap {\sf false}(\Imc_j) = \emptyset$
and we are done). 
Therefore, $\Imc_i \cap \Imc_j \not\models \Hmc'$, and there is $\Hmc'_k$ such that 
$\Imc_i \cap \Imc_j \not\models \Hmc'_k$. We know that $k \not \in \{i,j\}$ because 
$\Hmc_i =  \Hmc'_i$ and  $\Hmc_j =  \Hmc'_j$.
 As $\Imc_j\models \Hmc' \setminus \Hmc'_j$ (by Lemma~\ref{lem:curious}), we have that $\Imc_j\models \Hmc'_k$ . By the same argument $\Imc_i\models \Hmc'_k$. Hence, 
 by Lemma~\ref{lem:technical}, ${\sf false}(\Imc_i)\cap {\sf false}(\Imc_j) = \emptyset$.
\end{itemize}
\end{proof}

We are now ready for Lemma~\ref{lem:disj}. 

 \smallskip
   
\noindent
{\bf Lemma~\ref{lem:disj}  (restated).} 
\textit{
 Let $\Imc_i,\Imc_j\in \Lmf$ and assume $i < j$.
At the end of each iteration, if $\Imc_i,\Imc_j\in\Lmf$ violate $c \in \Tmc$ then 
${\sf false}(\Imc_i) \cap {\sf false}(\Imc_j) = \emptyset$.
}

\smallskip
\noindent

\begin{proof}
On one hand, by Lemma~\ref{lem:tech2} the pair $(\Imc_i, \Imc_j)$ is 
a ${\sf goodCandidate}$ or ${\sf false}(\Imc_i) \cap {\sf false}(\Imc_j) = \emptyset$. 
On the other, by Lemma~\ref{lem:nogood} the pair $(\Imc_i, \Imc_j)$ 
is not a ${\sf goodCandidate}$ or ${\sf false}(\Imc_i) \cap {\sf false}(\Imc_j) = \emptyset$. 
We conclude that ${\sf false}(\Imc_i) \cap {\sf false}(\Imc_j) = \emptyset$. 
\end{proof}

Lemma~\ref{lem:previous}   shows that  (1) if an interpretation $\Imc_i$ 
  is replaced and an element $\Imc_j$ is removed 
  from \Lmf then they are mutually disjoint; and 
   (2) if any two elements are removed then they are mutually disjoint. 
Lemmas \ref{lem:technical2}  and \ref{lem:point-i} below prepare  for the proof of Lemma  \ref{lem:previous}.

\begin{lemma}\label{lem:technical2}
Let \Pmc and \Imc be a positive and a negative example, respectively. 
If $\Pmc\in\Pmf$ then $\Pmc \models$ \Call{BuildClauses}{$\Imc, \Pmf$}. 
\end{lemma}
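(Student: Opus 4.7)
I will prove Lemma~\ref{lem:technical2} by induction on $|\Pmf|$, with the base case $\Pmf = \emptyset$ vacuous. For the inductive step, consider a sequence $\Pmf' \cdot \Pmc^*$ obtained by appending a fresh positive example $\Pmc^*$ to a prefix $\Pmf'$ for which the claim holds. If $\Pmc^* \models$~\Call{BuildClauses}{$\Imc, \Pmf'$}, then Function~\ref{alg:build} returns the same set on input $\Pmf' \cdot \Pmc^*$, and the conclusion is immediate from the induction hypothesis together with the assumption on $\Pmc^*$. The nontrivial case is when $\Pmc^*$ violates a nonempty family of clauses $X \rightarrow Y_j \vee Z_j$ (for $j = 1, \ldots, k$) of \Call{BuildClauses}{$\Imc, \Pmf'$}, where $X := {\sf true}(\Imc)$; these are replaced by a single merged clause $c^* := X \rightarrow \bigcup_j Y_j \vee (\bigcup_j Z_j \setminus \bigcup_j Y_j)$, and the step reduces to showing that both $\Pmc^*$ and every $\Pmc \in \Pmf'$ satisfies $c^*$.

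\textbf{Key structural rewriting.} The crux is to rewrite the $Z$-part of $c^*$ using Lemma~\ref{lem:positive1} applied to \Call{BuildClauses}{$\Imc, \Pmf'$}: its $Y$-consequents are mutually disjoint with union ${\sf false}(\Imc)$, so the mvd partition $X \cup Y_j \cup Z_j = V$ forces $Z_j = \bigcup_{l \neq j} Y_l$, where $l$ ranges over all clauses of \Call{BuildClauses}{$\Imc, \Pmf'$}. Consequently $\bigcup_j Z_j \setminus \bigcup_j Y_j = \bigcup_{m \notin \{1, \ldots, k\}} Y_m$, i.e.\ the $Z$-part of $c^*$ is precisely the union of the $Y$-parts of the clauses that $\Pmc^*$ does \emph{not} violate.

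\textbf{Finishing and the obstacle.} With this rewriting in hand, I handle any $\Pmc \in \Pmf' \cdot \Pmc^*$ covering $X$ (the case $X \not\subseteq {\sf true}(\Pmc)$ being trivial) by a short case split: either $\Pmc$ has no false variable in $\bigcup_j Y_j$ (so $\Pmc$ satisfies the $Y$-part of $c^*$), or some $Y_{j_0}$ with $j_0 \leq k$ contains a variable false in $\Pmc$. In the latter case, for $\Pmc \in \Pmf'$ the induction hypothesis gives $\Pmc \models X \rightarrow Y_{j_0} \vee Z_{j_0}$, which via $Z_{j_0} = \bigcup_{l \neq j_0} Y_l$ forces $\bigcup_{m \notin \{1,\ldots,k\}} Y_m \subseteq {\sf true}(\Pmc)$; for $\Pmc = \Pmc^*$, every unviolated clause $X \rightarrow Y_m \vee Z_m$ with $m \notin \{1,\ldots,k\}$ is satisfied by $\Pmc^*$, so $Z_m = \bigcup_{l \neq m} Y_l$ together with the presence of a false in $Y_{j_0}$ forces $Y_m \subseteq {\sf true}(\Pmc^*)$ for each such $m$. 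In either case $\Pmc \models c^*$, closing the induction. The main obstacle is this partition rewriting: satisfaction of the merged clause cannot be read off from any single prior clause and must be extracted from the global disjointness-plus-cover structure of the $Y$-consequents granted by Lemma~\ref{lem:positive1}.
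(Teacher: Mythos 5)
Your proposal is correct and follows essentially the same route as the paper: induction on the length of $\Pmf$, reducing the nontrivial step to showing that the single merged clause is satisfied both by the fresh positive example and by all earlier ones, with Lemma~\ref{lem:positive1}'s disjointness-and-cover property of the $Y$-consequents doing the work. The only difference is presentational: you argue directly via the complement identity $Z_j=\bigcup_{l\neq j}Y_l$ (so the merged $Z$-part is the union of the $Y$-parts of the unviolated clauses), whereas the paper reaches the same conclusion by contradiction using the equivalent fact that the merged $Z$-part equals $\bigcap_{j}Z_j$.
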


\begin{proof} 
The proof is by induction on the number of elements in \Pmf. 
In the base case \Pmf is empty, so the lemma holds trivially. 
Now suppose that the lemma holds for $\Pmf$ with $m \in \mathbb{N}$ elements. 
 We show that it holds for $\Pmf$ with $m+1$ elements. 
Let \Pmc be a fresh positive example. 
We first want to show that $\Pmc\models$ \Call{BuildClauses}{$\Imc, \Pmf\cdot\Pmc$}. 
If $\Pmc\models$ \Call{BuildClauses}{$\Imc, \Pmf$} then 
\Call{BuildClauses}{$\Imc, \Pmf$} $=$ \Call{BuildClauses}{$\Imc, \Pmf\cdot\Pmc$}. So, 
 by induction hypothesis, the lemma holds.

Otherwise, $\Pmc\not\models$ \Call{BuildClauses}{$\Imc, \Pmf$}.  
Let 
$X  \rightarrow Y_{1}  \vee Z_{1},  \ldots ,$ $X \rightarrow Y_{k} \vee Z_{k}$ 
be the mvd clauses in \Call{BuildClauses}{$\Imc, \Pmf$} violated by $\Pmc$, where 
${\sf true}(\Imc)= X$. 
These mvd clauses are replaced, in \Call{BuildClauses}{$\Imc, \Pmf\cdot\Pmc$}, by
 $  X \rightarrow \bigcup_{j=1}^k Y_{j} \vee 
 (\bigcup_{j=1}^k Z_{j}\setminus \bigcup_{j=1}^k Y_{j})$. 
 For short denote the latter mvd clause by $X \rightarrow Y' \vee Z'$. 
Suppose to the contrary that  $\Pmc \not\models$ \Call{ BuildClauses}{$\Imc, \Pmf \cdot\Pmc$}. 
By construction of \Call{ BuildClauses}{$\Imc, \Pmf \cdot \Pmc $}, 
the only mvd clause that can be violated by \Pmc is $X \rightarrow Y' \vee Z'$. 
Then, there is $v,w \in {\sf false}(\Pmc)$ such that $v \in Y'$ and $w \in Z'$. 
By definition of $X \rightarrow Y' \vee Z'$,   there is $X \rightarrow Y_i \vee Z_i \in $ 
 \Call{ BuildClauses}{$\Imc, \Pmf$} such that $w \in Z_i$. If 
 $w \in Z_i$ then, by Lemma~\ref{lem:positive1}, there is $X \rightarrow Y_j \vee Z_j \in$ 
 \Call{ BuildClauses}{$\Imc, \Pmf$}  
such that $w \in Y_j$. If $\Pmc \not\models X \rightarrow Y_j \vee Z_j$ then 
this contradicts the fact that  $w \in Z'$.  
Otherwise, $\Pmc \models X \rightarrow Y_j \vee Z_j$. So, ${\sf false}(\Pmc)\subseteq Y_j$ 
and $X \rightarrow Y_j \vee Z_j \in $ \Call{BuildClauses}{$\Imc, \Pmf $}. 
As $v \in Y_j\cap Y'$, we have that $Y_j\cap Y' \neq \emptyset$. This contradicts Lemma~\ref{lem:positive1}. 

 It remains to show that for any other $\Pmc' \in \Pmf$, we  have that 
 $\Pmc'\models$ \Call{BuildClauses}{$\Imc, \Pmf'\cdot\Pmc$}.  
 If $\Pmc'\not\models$ \Call{BuildClauses}{$\Imc, \Pmf'\cdot\Pmc$} then the 
 only clause that can be violated by $\Pmc'$ is $X \rightarrow Y' \vee Z'$. 
 Then, there is $v',w' \in {\sf false}(\Pmc')$ such that $v' \in Y'$ and $w' \in Z'$. 
 Therefore, $v' \in Y_i$, for some $X\rightarrow Y_i \vee Z_i \in$ \Call{BuildClauses}{$\Imc, \Pmf$} 
 violated by \Pmc. 
 If $w' \in Z'$ then, as $Z' = (\bigcup_{j=1}^k Z_{j}\setminus \bigcup_{j=1}^k Y_{j}) = 
 \bigcap_{j=1}^k Z_{j}$, we have that $w' \in Z_i$.  
 Then, $\Pmc'\not\models X\rightarrow Y_i \vee Z_i$. This, contradicts the 
 fact that, by induction hypothesis, $\Pmc'\models$ \Call{BuildClauses}{$\Imc, \Pmf$}. 
\end{proof}

\begin{lemma}\label{lem:point-i}
Let \Imc,\Jmc and \Kmc be negative examples such that 
${\sf true}(\Imc)\subseteq {\sf true}(\Jmc)\subseteq {\sf true}(\Kmc)$. 
If $\Kmc \not\models$ \Call{BuildClauses}{$ \Imc,\Pmf$}
then $\Kmc \not\models$ \Call{BuildClauses}{$ \Jmc,\Pmf$}. 
\end{lemma}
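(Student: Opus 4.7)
The plan is to establish a \emph{refinement} relationship between the $Y$-consequents of the two clause sets and then deduce the lemma. Writing $X = {\sf true}(\Imc)$ and $X' = {\sf true}(\Jmc)$, so $X \subseteq X'$, Lemma~\ref{lem:positive1} tells us that the $Y$-consequents of \Call{BuildClauses}{$\Imc, \Pmf$} partition ${\sf false}(\Imc) = V \setminus X$, and the $Y'$-consequents of \Call{BuildClauses}{$\Jmc, \Pmf$} partition ${\sf false}(\Jmc) = V \setminus X' \subseteq V \setminus X$. The key auxiliary claim I would prove is: every $Y'$-part of \Call{BuildClauses}{$\Jmc, \Pmf$} is contained in some $Y$-part of \Call{BuildClauses}{$\Imc, \Pmf$}.

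I would prove this refinement claim by induction on $|\Pmf|$. The base case $\Pmf = \emptyset$ is immediate since both partitions consist of singletons and $V \setminus X' \subseteq V \setminus X$. For the inductive step I would use that Function `BuildClauses' (Algorithm~\ref{alg:build}), on processing a positive example $\Pmc$, merges exactly those $Y$-parts (resp., $Y'$-parts) that meet ${\sf false}(\Pmc)$, provided $X \subseteq {\sf true}(\Pmc)$ (resp., $X' \subseteq {\sf true}(\Pmc)$) and at least two such parts exist. When both sides merge, each merging $Y'$-part lies (by the induction hypothesis) inside a $Y$-part that therefore also meets ${\sf false}(\Pmc)$ and merges; the new merged $Y'$-part thus fits inside the new merged $Y$-part. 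The delicate sub-case is when $\Pmc$ triggers a merge for $\Jmc$ but not for $\Imc$, i.e., when ${\sf false}(\Pmc)$ meets several $Y'$-parts but only a single $Y$-part $Y_i$: then the induction hypothesis together with the disjointness of the $Y$-parts forces all merging $Y'$-parts to sit inside $Y_i$, so refinement is preserved.

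Given the refinement claim, the lemma follows quickly. Let $X \rightarrow Y \vee Z$ be a clause in \Call{BuildClauses}{$\Imc, \Pmf$} violated by $\Kmc$, witnessed by $v \in Y \cap {\sf false}(\Kmc)$ and $w \in Z \cap {\sf false}(\Kmc)$. Since ${\sf true}(\Jmc) \subseteq {\sf true}(\Kmc)$ gives ${\sf false}(\Kmc) \subseteq {\sf false}(\Jmc)$, Lemma~\ref{lem:positive1} applied to $\Jmc$ produces $Y'$-parts $Y'_a \ni v$ and $Y'_b \ni w$. If $a = b$, the refinement claim places $\{v, w\} \subseteq Y'_a \subseteq Y_l$ for a single $Y$-part $Y_l$; but $v \in Y$ and $w \in Z = V \setminus X \setminus Y$ belong to distinct $Y$-parts, contradicting the disjointness of the $Y$-parts. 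Hence $a \neq b$, so $w \in V \setminus X' \setminus Y'_a = Z'_a$, meaning $\Kmc$ violates $X' \rightarrow Y'_a \vee Z'_a \in$ \Call{BuildClauses}{$\Jmc, \Pmf$}.

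The main obstacle I expect is the delicate inductive sub-case described above; establishing that the merged $Y'$-part still fits inside a single $Y$-part requires carefully combining the induction hypothesis with the disjointness of $Y$-parts provided by Lemma~\ref{lem:positive1}.
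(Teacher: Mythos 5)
Your proof is correct, but it takes a genuinely different route from the paper's. The paper argues directly on the witnesses $v\in Y$, $w\in Z$ of $\Kmc\not\models$\Call{BuildClauses}{$\Imc,\Pmf$}: since ${\sf false}(\Kmc)\subseteq{\sf false}(\Jmc)$, either $v$ and $w$ fall into different consequents on the $\Jmc$ side (and then $\Kmc$ violates the corresponding clause of \Call{BuildClauses}{$\Jmc,\Pmf$}), or they fall into the same $Y'$-part, in which case the paper asserts that some $\Pmc\in\Pmf$ with ${\sf true}(\Jmc)\subseteq{\sf true}(\Pmc)$ and $v,w\in{\sf false}(\Pmc)$ must be responsible for the merge; such a $\Pmc$ then also violates \Call{BuildClauses}{$\Imc,\Pmf$}, contradicting Lemma~\ref{lem:technical2}. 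You instead prove a structural invariant of the construction --- every $Y'$-part of \Call{BuildClauses}{$\Jmc,\Pmf$} is contained in some $Y$-part of \Call{BuildClauses}{$\Imc,\Pmf$} --- by induction on $|\Pmf|$, and then exclude the same-part case outright using the disjointness guaranteed by Lemma~\ref{lem:positive1}. What your route buys: it makes no use of Lemma~\ref{lem:technical2}, and it properly accounts for the possibility that $v$ and $w$ end up in one $Y'$-part only through a chain of merges by several positive examples, a point the paper's single-$\Pmc$ claim glosses over (its conclusion still stands, but strictly one must chase the chain, which is in effect the induction you carry out); your delicate sub-case, where several merging $Y'$-parts sit inside a single $Y$-part, is handled correctly. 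What the paper's route buys: it is considerably shorter, reusing Lemma~\ref{lem:technical2}, which is established for other purposes anyway.
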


\begin{proof}
If $\Kmc \not\models$ \Call{BuildClauses}{$ \Imc,\Pmf$} then there is 
${\sf true}(\Imc)\rightarrow Y\vee Z  \in $ \Call{BuildClauses}{$ \Imc,\Pmf$} with  
$v \in Y$,  $w \in Z$ such that $v,w \in {\sf false}(\Kmc)$. If 
$v,w \in {\sf false}(\Kmc)$ then $v,w \in {\sf false}(\Jmc)$. 
If there is ${\sf true}(\Jmc)\rightarrow Y'\vee Z'  \in $ \Call{BuildClauses}{$ \Jmc,\Pmf$} with  
$v \in Y'$,  $w \in Z'$ then $\Kmc \not\models$ \Call{BuildClauses}{$ \Jmc,\Pmf$}. 
Otherwise, there is no such mvd clause in \Call{BuildClauses}{$ \Jmc,\Pmf$}. 
This means that there is $\Pmc \in \Pmf$ such that  
${\sf true}(\Jmc)\subseteq {\sf true}(\Pmc)$ and $v,w \in {\sf false}(\Pmc)$. 
 As ${\sf true}(\Jmc)\subseteq {\sf true}(\Pmc)$, we have that 
 ${\sf true}(\Imc)\subseteq {\sf true}(\Pmc)$. Then, 
 $\Pmc\not\models$ \Call{BuildClauses}{$ \Imc,\Pmf$}. 
 Since $\Pmc \in \Pmf$, this contradicts Lemma \ref{lem:technical2}. 
\end{proof}

We can now prove Lemma~\ref{lem:previous}.

 \smallskip
   
\noindent
{\bf Lemma~\ref{lem:previous}  (restated).} 
\textit{ 
In Line \ref{line:remove-set} of Algorithm \ref{alg:learner}, the following holds:}
\begin{enumerate}
\item 

\textit{ if $\Imc_j $ is removed after the replacement of some $\Imc_i \in \Lmf$ by 
$  \Jmc$ (Line  \ref{line:replace}) then ${\sf false}(\Imc_i)\cap {\sf false}(\Imc_j) = \emptyset$ 
($\Imc_i$ before the replacement);}

\item 
\textit{
 if $\Imc_j,\Imc_k $ with $j < k$ are removed after the replacement of some $\Imc_i \in \Lmf$ by 
$  \Jmc$ (Line~\ref{line:replace}) then ${\sf false}(\Imc_j)\cap {\sf false}(\Imc_k) = \emptyset$.
}
\end{enumerate}

\smallskip
\noindent

\begin{proof}
We first argue that if $\Imc_j$ is removed then $i < j$. 
Suppose to the contrary that $j < i$ and $\Imc_j$ is removed 
after the replacement of $\Imc_i$ by \Jmc. 
Then, $\Imc_j\not\models$BuildClauses($\Jmc$), which means that   
${\sf true}(\Jmc) \subset {\sf true}(\Imc_j)$.
We have that (i)  
${\sf true}(\Imc_j \cap \Jmc) \subset {\sf true}(\Imc_j)$;  
 (ii) %
$\Imc_j \cap \Jmc \models \Hmc$ and 
(iii) $ (\Imc_j \cap \Jmc) \not \models \Tmc$ (as $ \Imc_j \cap \Jmc  =~\Jmc$). 
Then, by Definition~\ref{def:good}, the pair $(\Imc_j, \Jmc)$ is a 
${\sf goodCandidate}$. This contradicts the fact that in 
Line~\ref{line:goodcandidate1} of Algorithm~\ref{alg:learner}, 
the first ${\sf  goodCandidate}$ is replaced.

So we can assume that $i < j < k$.
We now argue that under the conditions stated by this lemma  if 
${\sf false}(\Imc_i)\cap {\sf false}(\Imc_j) = \emptyset$ (respectively,  
${\sf false}(\Imc_j)\cap {\sf false}(\Imc_k) = \emptyset$) does not hold then  the pair 
$(\Imc_i, \Imc_j)$ (respectively, $(\Imc_j, \Imc_k)$) is a ${\sf goodCandidate}$ 
(Definition~\ref{def:good}), which contradicts Lemma~\ref{lem:nogood}. 
In our proof by contradiction, we show that conditions (i), (ii) and (iii) of Definition~\ref{def:good} hold for 
both $(\Imc_i, \Imc_j)$ and $(\Imc_j, \Imc_k)$. 
\begin{itemize}
\item For condition (i):
 assume to the contrary that 
${\sf true}(\Imc_i)\subseteq {\sf true}(\Imc_j)$. 
By Lemma~\ref{lem:recursive}, we know that ${\sf true}(  \Jmc)\subseteq {\sf true}(\Imc_i)$. 
As 
 ${\sf true}(  \Jmc)\subseteq {\sf true}(\Imc_i)\subseteq {\sf true}(\Imc_j)$ and
$\Imc_j \not\models$ \Call{BuildClauses}{$  \Jmc,\Pmf$},  
by Lemma~\ref{lem:point-i}, we have  
$\Imc_j \not\models$ \Call{BuildClauses}{$ \Imc_i,\Pmf$}, 
which is a contradiction with Lemma~\ref{lem:curious}. 
Now, we assume to the contrary that 
${\sf true}(\Imc_j)\subseteq {\sf true}(\Imc_k)$. 

As   
 ${\sf true}( \Jmc)\subseteq {\sf true}(\Imc_j)\subseteq {\sf true}(\Imc_k)$ and 
$\Imc_k \not\models$ \Call{BuildClauses}{$  \Jmc,\Pmf$},  
by Lemma~\ref{lem:point-i}, we have   
$\Imc_k \not\models$\Call{BuildClauses}{$ \Imc_j,\Pmf$},
which is a contradiction with Lemma~\ref{lem:curious}.
\item For condition (ii):  as $\Imc_j\models \Hmc \setminus \Hmc_j$ 
(Lemma \ref{lem:curious}) we have $\Imc_j\models \Hmc \setminus (\Hmc_i \cup \Hmc_j)$. 
By the same argument $\Imc_i\models \Hmc \setminus (\Hmc_i \cup \Hmc_j)$. 
If ${\sf false}(\Imc_i)\cap {\sf false}(\Imc_j) \neq \emptyset$ then, 
by Lemma \ref{lem:technical}, $\Imc_i \cap \Imc_j \models \Hmc \setminus (\Hmc_i \cup \Hmc_j)$. 
In fact, by the argument above for condition (i), 
${\sf true}(\Imc_i)\not\subseteq {\sf true}(\Imc_j)$. 
So, we actually have $\Imc_i \cap \Imc_j \models \Hmc$. 
Similarly, as $\Imc_j\models \Hmc \setminus \Hmc_j$ (Lemma \ref{lem:curious}) 
we have $\Imc_j\models \Hmc \setminus (\Hmc_j \cup \Hmc_k)$. By the same 
argument $\Imc_k\models \Hmc \setminus (\Hmc_j \cup \Hmc_k)$. 
If ${\sf false}(\Imc_j)\cap {\sf false}(\Imc_k) \neq \emptyset$ then, by 
Lemma \ref{lem:technical} and the fact that 
${\sf true}(\Imc_j)\not\subseteq {\sf true}(\Imc_k)$, we have $\Imc_j \cap \Imc_k\models \Hmc$. 
\item For condition (iii): suppose to the contrary that 
$\Imc_i \cap \Imc_j \models \Tmc$. 

As $\Imc_j \not\models$\Call{BuildClauses}{$   \Jmc,\Pmf$} 
and (by Lemma~\ref{lem:recursive}) ${\sf true}(  \Jmc)\subseteq {\sf true}(\Imc_i)$, 
we have that $\Imc_j \cap \Imc_i \not\models$\Call{BuildClauses}{$   \Jmc,\Pmf$}. 
Then, the condition in Line \ref{line:special} of Algorithm \ref{alg:remove} is satisfied. 
So  
Algorithm \ref{alg:remove} appends $ \Imc_i\cap\Imc_j $ to \Pmf and 
recursively calls Function `UpdatePositiveExamples' 
  with $\Jmc$, $\Pmf$ and 
\Lmf as input. Then,  by Lemma~\ref{lem:technical2}, 
$ \Imc_i\cap\Imc_j \models $\Call{BuildClauses}{$  \Jmc,\Pmf $}. 
Then, in Line~\ref{line:remove-set} of Algorithm~\ref{alg:learner}, 
  $\Imc_j \models$\Call{BuildClauses}{$ \Jmc,\Pmf$}, which is a contradiction. 
Similarly, suppose to the contrary that 
$\Imc_j \cap \Imc_k  \models \Tmc$.
As both $\Imc_k,\Imc_j$ do not satisfy \Call{BuildClauses}{$  \Jmc,\Pmf$}, 
the condition in Line \ref{line:special} of Algorithm \ref{alg:remove} is satisfied. 
So  
Algorithm \ref{alg:remove} appends $ \Imc_j\cap\Imc_k $ to \Pmf and 
recursively calls Function `UpdatePositiveExamples' 
  with $\Jmc$, $\Pmf$ and 
\Lmf as input.
 Then, by   Lemma~\ref{lem:technical2}, 
$ \Imc_j\cap\Imc_k \models$\Call{BuildClauses}{$   \Jmc,\Pmf$}. 
Hence, when Line~\ref{line:remove-set} of Algorithm~\ref{alg:learner} is 
executed both $\Imc_j, \Imc_k $ satisfy \Call{BuildClauses}{$   \Jmc,\Pmf$}, which is a contradiction. 

\end{itemize}
So conditions (i), (ii) and (iii) of Definition~\ref{def:good} 
hold for   $(\Imc_i, \Imc_j)$ and $(\Imc_j, \Imc_k)$, which contradicts Lemma~\ref{lem:nogood}. 
Then,  ${\sf false}(\Imc_i)\cap {\sf false}(\Imc_j) = \emptyset$ and ${\sf false}(\Imc_j)\cap {\sf false}(\Imc_k) = \emptyset$.
\end{proof}

\section{Proof of Theorem~\ref{th:reduction} }

For convenience, we restate our definition of reduction presented in Section~\ref{sec:reduction}. 

\begin{definition}\label{def:reduction2}  
A learning framework $\Fmf=(E, \Lmc, \mu)$
 \emph{polynomial time reduces} to $\Fmf' = (E', \Lmc, \mu')$ if, 
for some  polynomials $p_1(\cdot)$, $p_2(\cdot,\cdot)$ and $p_3(\cdot,\cdot)$ 
there exist 
a function $\f: \Lmc\times E'\to \{\text{ `yes'},\text{ `no'}\}$
and a partial function  
$\g: \Lmc\times\Lmc\times E\to E'$, defined for every $(l,h,e)$ such that
$|h| \leq p_1(|l|)$, 
for which the following conditions hold: 
\begin{itemize} 
\item for all $e'\in E'$ we have $e'\in\mu'(l)$ iff $\f(l,e') = \text{ `yes'}$; 
\item for all $e\in E$ we have $e \in \mu(l)\oplus \mu(h)$ iff 
$\g(l,h,e) \in \mu'(l)\oplus \mu'(h)$;
\item $\f(l,e')$ and $\g(l,h,e)$ are computable in time $p_2(|l|,|e'|)$ and $p_3(|l|,|e|)$, respectively,   
and $l$ can only be accessed by calls to the 
membership oracle ${\sf MEM}_{l,\Fmf}$. 
\end{itemize}
\end{definition}

\smallskip
  
\noindent
{\bf Theorem~\ref{th:reduction} (restated).}
\textit{Let $\Fmf = (E, \Lmc, \mu)$ and $\Fmf' = (E',\Lmc, \mu')$ be  learning frameworks. If there exists a polynomial
time reduction from $\Fmf$ to  $\Fmf'$ and
 $\Fmf'$ is polynomial time exactly learnable then
$\Fmf$ is polynomial time exactly learnable.}
\smallskip
\noindent

\begin{proof}
Let $A'$ be a polynomial time learning algorithm for $(E',\Lmc,\mu')$. 
We construct a learning algorithm $A$ for  $(E,\Lmc, \mu)$,
using internally the learning algorithm $A'$, as follows. 
As learning $(E,\Lmc, \mu)$ polynomial time reduces to
learning $(E',\Lmc, \mu')$, we have that: 
\begin{itemize} 
\item there are functions $\f:\Lmc\times E'\to \{\text{ `yes'},\text{ `no'}\}$ and $\g:
\Lmc\times\Lmc\times E\to E'$ such that $\f$ maps   
$l \in \Lmc$ and `$e'\in E'$' into `yes' or `no' (depending on whether 
$e'\in \mu'(l)$); and $\g$ transforms a  
counterexample `$e \in E$' into a counterexample `$e' \in E'$'. 
\end{itemize}

So, whenever a membership query with $e' \in
E'$ as input is called by $A'$ we compute $\f(l,e')$ by making calls to the ${\sf
MEM}_{l,\Fmf}$ oracle. We return `yes' to $A'$ if $\f(l,e') = \text{ `yes'}$ and `no' otherwise.  
Whenever an equivalence query with $h \in \Lmc$ as input is called by $A'$ we
pass it on to the ${\sf EQ}_{l,\Fmf}$ oracle.  If it returns `yes' then the
learner succeeded. Otherwise the oracle returns `no'  and provides a 
counterexample $e\in E$. 
Then, we compute $e' = g(l,h,e)$ and
return it to $A'$. Notice that computing $\g(l,h,e)$ may also require posing 
additional membership queries  (recall that $l$ can only be accessed via 
queries to the oracle ${\sf MEM}_{l,\Fmf}$). 

By definition of $\f$ and $\g$, all the answers provided to $A'$ are 
consistent with answers the oracles ${\sf MEM}_{l,\Fmf'}$ and ${\sf EQ}_{l,\Fmf'}$
would provide to $A'$. 
Clearly, if algorithm $A$ terminates then it learns $l$.

It remains to 
prove the polynomial time bound for $A$.
Let $p_1(\cdot)$, $p_2(\cdot)$ and $p_3(\cdot,\cdot)$ 
be the polynomials of Definition~\ref{def:reduction2}, that is,
\begin{itemize}
\item $p_1(|l|)$ is the polynomial bound on $|h|$;
\item $p_2(|l|,|e'|)$ is the polynomial time bound for computing $\f(l,e')$;
\item $p_3(|l|,|e|)$ is the polynomial time bound for computing $\g(l,h,e)$.
\end{itemize}

Let $p(\cdot,\cdot)$ be  a polynomial such that in every run of $A'$, the time 
 used by
$A'$ up to each step of computation is bounded by  $p(|l|,|y'|)$,
where $|l|$ is the size of the target $l \in \Lmc$ and $|y'|$ is the size of
the largest counterexample $y'\in E'$ seen by $A'$ up to that
point of computation. 
As $y'$ is the result of transforming with function $\g$  some
counterexample $y\in E$ given by the ${\sf EQ}_{l,\Fmf}$ oracle to algorithm $A$,
its size $|y'|$ is bounded by $p_3(|l|,|y|)$.
Notice that  $y$ is also the largest counterexample seen so far by $A$.
Thus, at every step of computation the time used by 
$A'$ up to that step is
bounded by a polynomial $p'(|l|,|y|) = p(|l|,p_3(|l|, |y|))$. 

For every membership query with $e'\in E'$ asked by $A'$, the size of $e'$  does not exceed
the polynomial time bound   of $A'$ up to that point, that is, $|e'| \leq p'(|l|,|y|)$. 
Then, the time  needed to transform membership queries and answers to equivalence
queries is bounded by $p'_2(|l|,|y|) = p_2(|l|,p'(|l|, |y|))$ and $p_3(|l|,|y|)$, respectively.
All in all, 
 at every step of computation the time  used by $A$ up to that step is bounded by 
$p'(|l|,|y|)\cdot (p'_2(|l|,|y|)+ p_3(|l|,|y|))$, which is
polynomial in $|l|$ and $|y|$, as required.     
\end{proof}

\section{Reductions among Learning Problems}

We now explain the reducibility of the learning problems presented in Figure \ref{fig:reductions}. 
For convenience, in Figure~\ref{fig:reductions2},  we enumerate the reductions 
\footnote{Note that our reduction in Point (1) of Figure~\ref{fig:reductions2}
is non-proper. Though, in this case one can avoid this by translating
 the hypothesis to Horn whenever the algorithm poses an equivalence query (see Remark~\ref{rem:defs}).  }. 
Points (1) and (6) follow from the fact that one can express any Horn formula with a 
polynomial size MVDF (see Remark~\ref{rem:defs} below). 
Point (2) is given in Subsection~\ref{subsec:2-quasi-red}. 
We then have Point (3), where have the $\text{MVD}_\Rmc\rightarrow \text{MVDF}_\Imc$  
direction proved in Subsection~\ref{subsec:mvd-red} (note that this also gives Point~(8)). 
The other direction, $\text{MVDF}_\Imc \rightarrow \text{MVD}_\Rmc$, can be 
proved with similar arguments. 
 Point (4) follows from the fact that CRFMVF is a restriction of MVDF. 
 We show Point~(5) in Subsection~\ref{subsec:horn-red}. Finally, we show Point (7) in 
 Subsection~\ref{subsec:mvdf-quasi-red}. 
  
 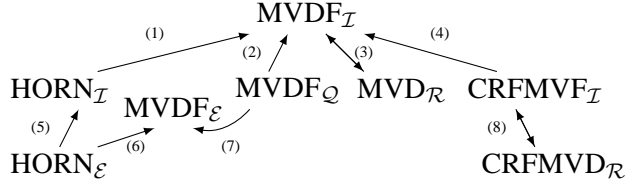
\begin{figure}[h]
\begin{center}
\begin{tikzpicture}
	\begin{pgfonlayer}{nodelayer}
		\node [style=none] (0) at (-5.25, 4) {};
		\node [style=none] (1) at (-3, 3.5) {};
		\node [style=none] (2) at (-8.25, 3.5) {};
		\node [style=none] (3) at (-6, 3.5) {};
		\node [style=none] (4) at (-4.75, 3.5) {};
		\node [style=none] (5) at (-5.5, 4.25) {MVDF$_\Imc$};
		\node [style=none] (6) at (-5.75, 4) {};
		\node [style=none] (7) at (-4.75, 4) {};
		\node [style=none] (8) at (-6.25, 4) {};
		\node [style=none] (9) at (-8.75, 3.25) {HORN$_{\Imc}$};
		\node [style=none] (10) at (-2.5, 3.25) {CRFMVF$_{\Imc}$};
		\node [style=none] (11) at (-8.5, 3) {};
		\node [style=none] (12) at (-2.75, 3) {};
		\node [style=none] (13) at (-8.75, 2.5) {};
		\node [style=none] (14) at (-2.5, 2.5) {};
		\node [style=none] (15) at (-4.25, 3.25) {MVD$_{\Rmc}$};
		\node [style=none] (16) at (-5.75, 3.25) {MVDF$_{\Qmc}$};
		\node [style=none] (17) at (-8.75, 2.25) {HORN$_{\Emc}$};
		\node [style=none] (18) at (-2.25, 2.25) {CRFMVD$_{\Rmc}$};
		\node [style=none] (19) at (-7.25, 3) {MVDF$_\Emc$};
		\node [style=none] (20) at (-6, 4) {};
		\node [style=none] (21) at (-7, 3.25) {};
		\node [style=none] (22) at (-8.25, 2.5) {};
		\node [style=none] (23) at (-7.5, 2.75) {};
		\node [style=none] (24) at (-6.25, 3) {};
		\node [style=none] (25) at (-7, 2.75) {};
		\node [style=none] (26) at (-7, 2.5) {};
		\node [style=none] (27) at (-6, 3) {};
		\node [style=none] (28) at (-7.5, 4) {\tiny{(1)}};
		\node [style=none] (29) at (-3.75, 4) {\tiny{(4)}};
		\node [style=none] (30) at (-6.25, 3.75) {\tiny{ (2)}};
		\node [style=none] (31) at (-4.75, 3.75) {\tiny{(3)}};
		\node [style=none] (32) at (-9, 2.75) {\tiny{(5)}};
		\node [style=none] (33) at (-7.75, 2.5) {\tiny{(6)}};
		\node [style=none] (34) at (-6.5, 2.5) {\tiny{(7)}};
		\node [style=none] (35) at (-3, 2.75) {\tiny{(8)}};
	\end{pgfonlayer}
	\begin{pgfonlayer}{edgelayer}
		\draw [style=newstyle2] (4.center) to (0.center);
		\draw [style=newstyle2] (3.center) to (6.center);
		\draw [style=newstyle2] (2.center) to (8.center);
		\draw [style=newstyle2] (13.center) to (11.center);
		\draw [style=newstyle2] (14.center) to (12.center);
		\draw [style=newstyle2] (12.center) to (14.center);
		\draw [style=newstyle2] (22.center) to (23.center);
		\draw [style=newstyle2, bend left, looseness=1.00] (24.center) to (25.center);
		\draw [style=newstyle2] (0.center) to (4.center);
		\draw [style=newstyle2] (1.center) to (7.center);
	\end{pgfonlayer}
\end{tikzpicture}
\caption{Reductions among learning problems}\label{fig:reductions2}
\end{center}
\end{figure}

We write ${\sf ant}(c)$ (the antecedent) for the set of variables that occur negated in a clause $c$ 
(this set contains $\mathbf{T}$ if no variable occurs negated).

\subsection{Propositional Horn: from Entailments to Interpretations}
\label{subsec:horn-red}

 The learning 
framework $\Fmf\text{(HORN$_\Imc$)}$, studied by 
 \cite{DBLP:journals/ml/AngluinFP92}, is defined as 
 $(E_\Imc,\Lmc_{\sf H},\mu_\Imc)$, where  $\Lmc_{\sf H}$ 
 is the set of all Horn sentences which can be formulated in a set of variables $V$, 
  $E_\Imc$ is the set of interpretations over variables in $V$ and, 
  for a Horn sentence $\Tmc \in \Lmc_{\sf H}$, 
  $\mu_\Imc(\Tmc)$ is defined as $\{\Imc \in E_\Imc \mid \Imc \models \Tmc\}$. 
We also define the  learning 
framework $\Fmf\text{(HORN$_\Emc$)}$, 
studied by \cite{DBLP:conf/icml/FrazierP93}, 
as 
 $(E_\Emc,\Lmc_{\sf H},\mu_\Emc)$, where  $\Lmc_{\sf H}$ 
 is the set of all Horn sentences which can be formulated in a set of variables $V$, 
  $E_\Emc$ is the set of all Horn clauses over variables in $V$ and, 
  for a Horn sentence $\Tmc \in \Lmc_{\sf H}$, 
  $\mu_\Emc(\Tmc)$ is defined as $\{c \in E_\Emc \mid \Tmc \models c\}$.

An algorithm to learn Horn sentences from entailments is presented by 
\cite{DBLP:conf/icml/FrazierP93}, where the authors mention that 
their solution is in fact an application of the learning from interpretations 
algorithm presented by \cite{DBLP:journals/ml/AngluinFP92} with some twists. 
Here we give an alternative proof, based on Theorem \ref{th:reduction}, 
 which shows that 
learning Horn sentences from entailments can be reduced in polynomial time 
to learning Horn sentences from interpretations. 
To give our proof by reduction we use Angluin's \cite{DBLP:journals/ml/AngluinFP92} algorithm as a `black box' and:
(1) transform 
counterexamples given by equivalence queries, which come as entailments into interpretations; and
(2) transform the membership queries, which come as interpretations
into entailments. 
Let \Tmc be the target Horn sentence and \Hmc the learner's hypothesis.    
The following lemma shows how one can simulate an equivalence query by transforming 
a counterexample in the learning from entailments scenario into a 
 counterexample in the learning from interpretations scenario.

\begin{lemma}\label{lem:horn-mem}
Let $\Fmf\text{(HORN$_\Emc$)} = (E_\Emc,\Lmc_{\sf H},\mu_\Emc)$ be the 
learning Horn  from entailments framework and 
$\Fmf\text{(HORN$_\Imc$)} = (E_\Imc,\Lmc_{\sf H},\mu_\Imc)$ 
be the learning Horn from interpretations framework. 
Assume that the target \Tmc and the hypothesis \Hmc are in variables $V$ and 
$|\Hmc|$ is polynomial in $|\Tmc|$.   
If 
$c \in \mu_\Emc(\Tmc) \oplus \mu_\Emc(\Hmc)$ then  
one can construct in  time polynomial in $|\Tmc|$ an interpretation $\Imc$ 
such that $\Imc \in \mu_\Imc(\Tmc) \oplus \mu_\Imc(\Hmc)$. 
\end{lemma}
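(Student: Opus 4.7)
The plan is to case-split on whether $\Hmc \models c$ and construct $\Imc$ by forward chaining in the appropriate Horn theory. Since $\Hmc$ is given explicitly and Horn entailment is polynomial-time decidable via unit propagation, I would first test $\Hmc \models c$ in polynomial time to determine which side of $\mu_\Emc(\Tmc) \oplus \mu_\Emc(\Hmc)$ contains $c$. Write $c = A \rightarrow b$, where $b$ is either a variable in $V$ or the constant $\mathbf{F}$.

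If $\Hmc \not\models c$ (and hence $\Tmc \models c$), I would set $\Imc := \Imc_{\mathrm{cl}_\Hmc(A)}$, where $\mathrm{cl}_\Hmc(A)$ denotes the forward-chaining closure of $A$ under the known Horn clauses of $\Hmc$. This is computable in time polynomial in $|\Hmc|$, hence in $|\Tmc|$ by the polynomial bound on $|\Hmc|$. By construction $\Imc \models \Hmc$; since $\Hmc \not\models c$ we have $b \notin \mathrm{cl}_\Hmc(A)$ when $b$ is a variable (with the analogous statement for $b = \mathbf{F}$), so $\Imc$ violates $c$. Combined with $\Tmc \models c$, this forces $\Imc \not\models \Tmc$, placing $\Imc \in \mu_\Imc(\Hmc) \setminus \mu_\Imc(\Tmc)$.

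If $\Hmc \models c$ (and hence $\Tmc \not\models c$), the key observation is that the available membership oracle ${\sf MEM}_{\Tmc,\Fmf}$ of $\Fmf\text{(HORN}_\Emc\text{)}$, on input a Horn clause $A' \rightarrow v$, answers exactly whether $\Tmc \models A' \rightarrow v$. Using this, I would compute $M := \{v \in V \mid \Tmc \models A \rightarrow v\}$ with $|V|$ membership queries and set $\Imc := \Imc_M$. Correctness relies on Horn's closure-under-intersection property: the intersection of all models of $\Tmc$ containing $A$ is itself such a model and coincides with $M$, so $\Imc \models \Tmc$. Since $\Tmc \not\models c$ we have $b \notin M$, so $\Imc$ violates $c$ and, because $\Hmc \models c$, $\Imc \not\models \Hmc$. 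The main conceptual point, and the step I expect to need the most care, is recognizing that the oracle is an entailment oracle (per the definition of $\mu_\Emc$) rather than a model-checking one, which makes the closure computation immediate rather than requiring a simulation via model-check queries. Total cost: $O(|V|)$ membership queries plus polynomial overhead, meeting the required time bound in $|\Tmc|$.
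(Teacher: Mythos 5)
Your proposal is correct and follows essentially the same route as the paper's proof: in the case where the target side must be checked you compute the closure of ${\sf ant}(c)$ under $\Tmc$ via entailment-type membership queries, and in the other case you forward-chain in the explicitly known hypothesis $\Hmc$, exactly as the paper does. The only difference is cosmetic: you build the $\Tmc$-closure in one pass as $\{v \in V \mid \Tmc \models {\sf ant}(c) \rightarrow v\}$ (using the Horn intersection-of-models property) instead of the paper's iterative saturation, which saves a few queries but changes nothing essential.
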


\begin{proof} 
We show how one can transform a Horn clause $c$ that is a \emph{positive} counterexample (in $\Fmf\text{(HORN$_\Emc$)}$)
into a \emph{negative} counterexample (in $\Fmf\text{(HORN$_\Imc$)}$) and vice-versa. 
If $\Tmc \not\models c$ and $\Hmc \models c$ then 
we construct an interpretation \Imc as the result of initially setting 
${\sf true}(\Imc) = {\sf ant}(c)$ and then exhaustively applying the following rule:
\begin{itemize}
\item if $\Tmc \models \bigwedge_{v \in {\sf true}(\Imc)} \rightarrow w$ (checked with membership query to ${\sf MEM}_{\Tmc,\Fmf\text{(HORN$_\Emc$)}}$), where $w \in V \setminus {\sf true}(\Imc)$, 
then add $w$ to ${\sf true}(\Imc)$.
\end{itemize}
The resulting \Imc is model of $\Tmc$. 
As $\Tmc \not\models c$ we know that the consequent of $c$ is not in ${\sf true}(\Imc)$. 
Then, since ${\sf ant}(c) \subseteq {\sf true}(\Imc)$, we have that \Imc does not satisfy \Hmc. 
That is, $\Imc \in \mu_\Imc(\Tmc) \oplus \mu_\Imc(\Hmc)$. 
Notice that in this case we made $|V|$ membership queries to the 
oracle ${\sf MEM}_{\Tmc,\Fmf\text{(HORN$_\Emc$)}}$. 
When $\Tmc \models c$ and $\Hmc \not\models c$ the argument is similar 
but  we need to check whether 
$\Hmc \models \bigwedge_{v \in {\sf true}(\Imc)} \rightarrow w$, 
where $w \in V \setminus {\sf true}(\Imc)$.  
Since in this case we evaluate the hypothesis, 
no membership query is necessary to produce a negative counterexample.   
\end{proof}
 
To simulate membership queries we transform an interpretation \Imc into polynomially 
many entailment queries which together decide whether \Imc satisfies \Tmc or not. 

\begin{lemma}\label{lem:horn-eq}
Let $\Fmf\text{(HORN$_\Emc$)} = (E_\Emc,\Lmc_{\sf H},\mu_\Emc)$ be the 
learning Horn  from entailments framework and 
$\Fmf\text{(HORN$_\Imc$)} = (E_\Imc,\Lmc_{\sf H},\mu_\Imc)$ 
be the learning Horn from interpretations framework. 
For any interpretation \Imc of a target concept representation $\Tmc\in \Lmc_{\sf H}$, one can decide in polynomial time  in $|\Tmc|$ whether 
$\Imc \in \mu_\Imc(\Tmc)$. 
\end{lemma}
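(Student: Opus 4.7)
The plan is to reduce the semantic check ``$\Imc \models \Tmc$'' to a small number of calls to the entailment oracle ${\sf MEM}_{\Tmc,\Fmf\text{(HORN$_\Emc$)}}$, using the fact that every Horn clause has a single atom (or $\mathbf{F}$) as consequent. Concretely, I would iterate over each candidate consequent $w \in (V \setminus {\sf true}(\Imc)) \cup \{\mathbf{F}\}$ and, for each such $w$, pose the single entailment query
\[
\Tmc \;\models\; \bigwedge_{u \in {\sf true}(\Imc)} u \;\rightarrow\; w \quad ?
\]
I would declare $\Imc \not\models \Tmc$ if and only if at least one of these queries returns `yes'.

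The correctness argument has two directions. For the ``if'' direction, suppose some query for consequent $w$ returns `yes'. Then $\Tmc$ entails the Horn clause $c = \bigwedge_{u \in {\sf true}(\Imc)} u \rightarrow w$, and by construction ${\sf ant}(c) = {\sf true}(\Imc)$ while $w \notin {\sf true}(\Imc)$ (or $w = \mathbf{F}$), so $\Imc$ violates $c$; since $c$ is a logical consequence of $\Tmc$, we conclude $\Imc \not\models \Tmc$. For the ``only if'' direction, suppose $\Imc \not\models \Tmc$. Then there exists some Horn clause $c' = X' \rightarrow w' \in \Tmc$ with $X' \subseteq {\sf true}(\Imc)$ and $w' \notin {\sf true}(\Imc)$ (including the case $w' = \mathbf{F}$). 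Since $X' \subseteq {\sf true}(\Imc)$, the clause $\bigwedge_{u \in {\sf true}(\Imc)} u \rightarrow w'$ is subsumed by $c'$ and hence entailed by $\Tmc$, so the query for $w'$ answers `yes' and the witness is detected.

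For the complexity bound, the procedure poses at most $|V \setminus {\sf true}(\Imc)| + 1 \leq |V| + 1$ entailment queries, each of size $O(|V|)$, and the total running time outside the oracle is linear in $|V| \cdot |\Imc|$. Since $|V|$ is bounded by the size of $\Tmc$ (the variables we consider are exactly those occurring in the target), this is polynomial in $|\Tmc|$ and $|\Imc|$, matching the polynomial budget demanded by Definition~\ref{def:reduction2}.

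The only subtlety I anticipate is the careful treatment of the $\mathbf{F}$ consequent and the corner case in which ${\sf true}(\Imc) = V$ (so the antecedent is a conjunction over all variables and the only interesting query is for $w = \mathbf{F}$); both are handled uniformly by including $\mathbf{F}$ in the list of candidate consequents. Once this reduction is in place, together with Lemma~\ref{lem:horn-mem} it supplies the functions $\f$ and $\g$ required by Definition~\ref{def:reduction2}, so Theorem~\ref{th:reduction} combined with the polynomial-time learnability of $\Fmf\text{(HORN$_\Imc$)}$ due to~\cite{DBLP:journals/ml/AngluinFP92} yields polynomial-time learnability of $\Fmf\text{(HORN$_\Emc$)}$ as an alternative derivation of the result of~\cite{DBLP:conf/icml/FrazierP93}.
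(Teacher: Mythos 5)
Your proposal is correct and follows essentially the same route as the paper: query the entailment oracle with each Horn clause whose antecedent is $\bigwedge_{u \in {\sf true}(\Imc)} u$ and whose consequent ranges over ${\sf false}(\Imc)$, answering ``not a model'' iff some query returns `yes'. The only difference is that you also include $\mathbf{F}$ among the candidate consequents, which handles negative Horn clauses (and the ${\sf true}(\Imc)=V$ case) slightly more explicitly than the paper's proof does.
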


\begin{proof}
A very straightforward algorithm to decide whether \Imc satisfies \Tmc is described as follows. 
Let $C = \{\bigwedge_{v \in {\sf true}(\Imc)} \rightarrow z \mid z \in {\sf false}(\Imc)\}$. For every $c \in C$ the algorithm  
calls ${\sf MEM}_{\Tmc,\Fmf\text{(HORN$_\Emc$)}}$ asking whether $\Tmc \models c$. 
If the answer to any of these queries is `yes' then return `no'. That is, \Imc does not satisfy \Tmc. 
Otherwise, return `yes', \Imc satisfies \Tmc. 
\end{proof}

Lemmas \ref{lem:horn-mem} and \ref{lem:horn-eq} show 
how one can compute, respectively, $\g$   
and $\f$ described in Definition \ref{def:reduction2}. 
Then, using Theorem \ref{th:reduction}, we obtain an alternative proof for  the result   
presented by \cite{DBLP:conf/icml/FrazierP93}. 

\begin{theorem}[\cite{DBLP:conf/icml/FrazierP93}]
The problem of learning propositional Horn from entailments, more precisely, the 
learning   framework $\Fmf\text{(HORN$_\Emc$)}$, is polynomial time exactly learnable. 
\end{theorem}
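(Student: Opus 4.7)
The plan is to invoke Theorem~\ref{th:reduction} to reduce $\Fmf\text{(HORN$_\Emc$)}$ to $\Fmf\text{(HORN$_\Imc$)}$, since the latter is polynomial time exactly learnable by the Angluin--Frazier--Pitt algorithm. All that is required is to exhibit translation functions $\f$ and $\g$ satisfying Definition~\ref{def:reduction2}; these are essentially read off from Lemmas~\ref{lem:horn-mem} and~\ref{lem:horn-eq} above.

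For $\f(\Tmc,\Imc)$, given an interpretation $\Imc$ coming from the interpretation-framework's membership oracle, I would use Lemma~\ref{lem:horn-eq}: for each $z\in{\sf false}(\Imc)$ ask the entailment oracle whether $\Tmc\models \bigwedge_{v\in{\sf true}(\Imc)}\to z$, and answer ``no'' iff some such query returns ``yes''. This correctly decides $\Imc\in\mu_\Imc(\Tmc)$ using at most $|V|$ calls to ${\sf MEM}_{\Tmc,\Fmf\text{(HORN$_\Emc$)}}$, so $p_2$ is linear. For $\g(\Tmc,\Hmc,c)$, given a Horn-clause counterexample $c\in\mu_\Emc(\Tmc)\oplus\mu_\Emc(\Hmc)$, I would follow Lemma~\ref{lem:horn-mem}: initialise ${\sf true}(\Imc):={\sf ant}(c)$ and saturate upward by repeatedly adding any $w$ implied by the current ${\sf true}(\Imc)$ under the Horn formula that \emph{does} entail $c$ (for $\Tmc\not\models c$ we saturate under $\Tmc$, using entailment queries; for $\Hmc\not\models c$ we saturate under $\Hmc$ without any queries). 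The resulting $\Imc$ is the minimal model of that formula containing ${\sf ant}(c)$, hence fails the consequent of $c$ on the ``not entailed'' side and satisfies it on the other, giving $\Imc\in\mu_\Imc(\Tmc)\oplus\mu_\Imc(\Hmc)$. Each saturation step needs at most $|V|$ queries, so $p_3$ is polynomial.

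The last piece required by Definition~\ref{def:reduction2} is the hypothesis-size bound $p_1$: the learner for $\Fmf\text{(HORN$_\Imc$)}$ is only ever asked $\g(\Tmc,\Hmc,\cdot)$ for $\Hmc$ of size polynomial in $|\Tmc|$. This follows from the known analysis of the Angluin et al.\ algorithm, whose hypotheses remain polynomially bounded throughout the run, so such a $p_1$ exists. Applying Theorem~\ref{th:reduction} with these $\f$, $\g$, $p_1$, $p_2$, $p_3$ then yields polynomial time exact learnability of $\Fmf\text{(HORN$_\Emc$)}$. The only mildly delicate point is confirming this hypothesis-size bound and checking that the translated answers are \emph{consistent} with what the interpretation-framework oracles would return; but Lemmas~\ref{lem:horn-mem} and~\ref{lem:horn-eq} already package precisely these consistency guarantees, so no further work is needed.
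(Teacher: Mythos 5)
Your proposal is correct and is essentially the paper's own proof: the paper likewise obtains the result by applying Theorem~\ref{th:reduction} to reduce $\Fmf\text{(HORN$_\Emc$)}$ to $\Fmf\text{(HORN$_\Imc$)}$, with \f{} computed as in Lemma~\ref{lem:horn-eq} and \g{} as in Lemma~\ref{lem:horn-mem}, using the Angluin--Frazier--Pitt learner as a black box. One small wording slip: you say you saturate under the Horn formula that \emph{does} entail $c$, whereas the construction (as your own parenthetical and Lemma~\ref{lem:horn-mem} make clear) saturates under the formula that does \emph{not} entail $c$, so that the resulting minimal model fails the consequent and separates the two formulas.
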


 \subsection{Multivalued Dependency Formulas: from $2$-quasi-Horn to Interpretations}
\label{subsec:2-quasi-red}

 The learning  
framework $\Fmf\text{(MVDF$_\Qmc$)}$, studied by the authors of 
 \cite{hermo2015exact}, is formally defined as 
 $(E_\Qmc,\Lmc_{\sf M},\mu_\Qmc)$, where  $\Lmc_{\sf M}$ 
 is the set of all MVDFs which can be formulated in a set of variables $V$, 
  $E_\Qmc$ is the set of $2$-quasi-Horn clauses over variables in $V$ and, 
  for a MVDF $\Tmc \in \Lmc_{\sf M}$, 
  $\mu_\Qmc(\Tmc)$ is defined as $\{e \in E_\Qmc \mid \Tmc \models e\}$. 

We show that learning MVDF from $2$-quasi-Horn clauses is reducible to 
learning MVDF from interpretations. 
More precisely, $\Fmf\text{(MVDF$_\Qmc$)}$ polynomial time reduces to 
$\Fmf\text{(MVDF$_\Imc$)}$. 
 To give our proof by reduction we use the algorithm presented in Section~\ref{sec:mvdf-interpretation} 
 as a `black box' and:
(1) transform the membership queries, which come as interpretations
into $2$-quasi-Horn clauses; and 
(2) transform 
counterexamples given by equivalence queries, which come as $2$-quasi-Horn clauses into interpretations. 
Let \Tmc be the target MVDF and \Hmc the learner's hypothesis.  
To simulate membership queries we transform an interpretation \Imc into polynomially 
many $2$-quasi-Horn queries which together decide whether \Imc satisfies \Tmc or not. 

\begin{lemma}\label{lem:mvdf-q-eq}
Let $\Fmf\text{(MVDF$_\Qmc$)} = (E_\Qmc,\Lmc_{\sf M},\mu_\Qmc)$ be the 
learning MVDF  from $2$-quasi-Horn framework and 
$\Fmf\text{(MVDF$_\Imc$)} = (E_\Imc,\Lmc_{\sf M},\mu_\Imc)$ 
be the learning MVDF from interpretations framework. 
For any interpretation \Imc of a target concept 
representation $\Tmc\in \Lmc_{\sf M}$, one can decide in polynomial time  in $|\Tmc|$ whether 
$\Imc \in \mu_\Imc(\Tmc)$. 
\end{lemma}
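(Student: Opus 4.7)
The plan is to decide whether $\Imc \models \Tmc$ by posing a polynomial number of $2$-quasi-Horn entailment queries to the oracle ${\sf MEM}_{\Tmc,\Fmf\text{(MVDF}_\Qmc\text{)}}$. Let $X = {\sf true}(\Imc)$. For every pair $v,w \in {\sf false}(\Imc)$ (allowing $v=w$), form the $2$-quasi-Horn clause
\[
c_{v,w} \;=\; \bigwedge_{u \in X} u \;\rightarrow\; v \vee w,
\]
which collapses to the Horn clause $\bigwedge_{u\in X} u \rightarrow v$ when $v=w$. In addition, to cover the corner case ${\sf false}(\Imc)=\emptyset$, pose the separate query $\Tmc \models V \rightarrow \mathbf{F}$. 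The simulation returns `yes' (i.e.\ $\Imc \in \mu_\Imc(\Tmc)$) iff every one of these queries receives the answer `no'.

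For correctness, one direction is immediate: if $\Tmc \models c_{v,w}$ (or $\Tmc \models V \rightarrow \mathbf{F}$), then $\Imc$ itself falsifies that clause, so $\Imc \not\models \Tmc$. For the converse, assume $\Imc \not\models \Tmc$ and let $X' \rightarrow Y \vee Z \in \Tmc$ be a violated mvd clause; I would split according to the three violation cases (a), (b), (c) of the Preliminaries. In case (a), pick witnesses $v \in Y$, $w \in Z$ with $v,w \in {\sf false}(\Imc)$; the distribution identity of Remark~\ref{rem:defs}(2) shows that $X' \rightarrow Y \vee Z$ logically entails the $2$-quasi-Horn clause $X' \rightarrow v \vee w$, and since $X' \subseteq X$ this entails $c_{v,w}$, so the query for $c_{v,w}$ answers `yes'. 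In case (b), the unique false variable $v$ yields the diagonal query $c_{v,v}$, which matches the Horn-like consequence $V \setminus \{v\} \rightarrow v$ of the violated clause. In case (c), the violated clause is literally $V \rightarrow \mathbf{F}$, so the dedicated standalone query succeeds.

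For complexity, the number of queries is at most $|{\sf false}(\Imc)|^2 + 1 = O(|V|^2)$ and each clause has size $O(|V|)$, so the simulation runs in time polynomial in $|V|$, and hence polynomial in $|\Tmc|$. The only genuine subtlety is the accommodation of edge cases (b) and (c) of the violation definition, which the diagonal $v=w$ queries and the separate $V\rightarrow\mathbf{F}$ query respectively take care of; otherwise the construction is a direct two-literal analogue of the Horn-from-entailments simulation in Lemma~\ref{lem:horn-eq}.
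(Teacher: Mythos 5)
Your construction and its analysis coincide with the paper's own proof: the paper queries exactly the set $C=\{\bigwedge_{v\in{\sf true}(\Imc)} v \rightarrow w\vee z \mid w,z\in{\sf false}(\Imc)\}\cup\{V\rightarrow\mathbf{F}\mid {\sf true}(\Imc)=V\}$ and declares $\Imc\in\mu_\Imc(\Tmc)$ iff every answer is `no', so this is essentially the same approach, with your case analysis (a)--(c) merely spelling out the correctness the paper leaves implicit. One small point to tighten: the query $\Tmc\models V\rightarrow\mathbf{F}$ must be posed only when ${\sf false}(\Imc)=\emptyset$ (the paper's side condition ${\sf true}(\Imc)=V$); if posed unconditionally, as your wording could be read, the simulation would wrongly answer `no' for, e.g., $\Tmc=\{V\rightarrow\mathbf{F}\}$ and any $\Imc$ other than the all-true interpretation, since such an $\Imc$ satisfies $\Tmc$ while that query answers `yes' -- your correctness sentence ``$\Imc$ itself falsifies that clause'' already presupposes the restricted reading.
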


\begin{proof}
A very straightforward algorithm to decide whether \Imc satisfies \Tmc is described as follows. 
Let $C = \{\bigwedge_{v \in {\sf true}(\Imc)} \rightarrow w\vee z \mid w,z \in {\sf false}(\Imc)\}
\cup \{V\rightarrow \mathbf{F}\mid {\sf true}(\Imc)=V \}$. For every $c \in C$ the algorithm  
calls ${\sf MEM}_{\Tmc,\Fmf\text{(MVDF$_\Qmc$)}}$ asking whether $\Tmc \models c$. 
If the answer to any of these queries is `yes' then return `no'. That is, \Imc does not satisfy \Tmc. 
Otherwise, return `yes', \Imc satisfies \Tmc. 
\end{proof}   
We note that in the learning framework $\Fmf\text{(MVDF$_\Qmc$)}$ 
one can use the membership oracle 
to ensure that at all times $\Tmc\models\Hmc$. Then, we can assume w.l.o.g. that all 
counterexamples given by the oracle are positive.
To transform 
 positive counterexamples, 
 we employ the following result from \cite{hermo2015exact}.

\begin{lemma}(Direct Adaptation from \cite{hermo2015exact})\label{lem:mvdf-q-mem}
Let $\Fmf\text{(MVDF$_\Qmc$)} = (E_\Qmc,\Lmc_{\sf M},\mu_\Qmc)$ be the 
learning MVDF  from $2$-quasi-Horn framework and 
$\Fmf\text{(MVDF$_\Imc$)} = (E_\Imc,\Lmc_{\sf M},\mu_\Imc)$ 
be the learning MVDF from interpretations framework. 
Assume that the target \Tmc and the hypothesis \Hmc are in variables $V$ and 
$|\Hmc|$ is polynomial in $|\Tmc|$.   
If 
$c \in \mu_\Qmc(\Tmc) \oplus \mu_\Qmc(\Hmc)$ is a positive counterexample then  
one can construct in  time polynomial in $|\Tmc|$ an interpretation $\Imc$ 
such that $\Imc \in \mu_\Imc(\Tmc) \oplus \mu_\Imc(\Hmc)$ is a negative counterexample. 
\end{lemma}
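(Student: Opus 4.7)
The plan is to construct the required negative counterexample $\Imc$ directly from $c$ and the learner's hypothesis $\Hmc$, which is fully known and bounded by $|\Hmc| \leq p_1(|\Tmc|)$. Write $c$ as $X_c \to F_c$, where $X_c \subseteq V$ is its antecedent and $F_c \subseteq V$ is the set of at most two positive literals of its consequent. Any interpretation $\Imc$ with $X_c \subseteq \textsf{true}(\Imc)$ and $F_c \cap \textsf{true}(\Imc) = \emptyset$ violates $c$ and hence, because $\Tmc \models c$, also violates $\Tmc$. Since $\Hmc \not\models c$, such an $\Imc$ additionally satisfying $\Hmc$ exists, so the task reduces to producing one in polynomial time.

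I would initialize the maximal candidate $\Imc^{\star}$ by setting $\textsf{true}(\Imc^{\star}) := V \setminus F_c$, which trivially satisfies the two forbidden-literal constraints. If $\Imc^{\star} \models \Hmc$, return $\Imc^{\star}$. Otherwise, pick any mvd clause $X' \to Y' \vee Z' \in \Hmc$ that $\Imc^{\star}$ violates, choose a variable $u \in X' \setminus X_c$, and remove $u$ from $\textsf{true}(\Imc^{\star})$. Existence of such a $u$ is the crucial structural observation: a case analysis over the three violation patterns (a)--(c) in the preliminaries shows that $X' \subseteq X_c$ would, via the triggered mvd clause, force every model of $\Hmc$ covering $X_c$ to include some element of $F_c$ in its true set, yielding $\Hmc \models c$ and contradicting the hypothesis. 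Iterate until $\Imc^{\star} \models \Hmc$; the loop terminates within $|V|$ rounds because $\textsf{true}(\Imc^{\star})$ strictly shrinks while always containing $X_c$.

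The main obstacle is ensuring that an arbitrary choice of $u \in X' \setminus X_c$ does not drive the process into a state from which no model of $\Hmc$ avoiding $F_c$ is reachable. I would address this by maintaining the loop invariant that there exists a witness model $\Imc_w$ of $\Hmc$ with $X_c \subseteq \textsf{true}(\Imc_w)$, $F_c \cap \textsf{true}(\Imc_w) = \emptyset$, and $\textsf{true}(\Imc_w) \subseteq \textsf{true}(\Imc^{\star})$; such a witness exists initially by $\Hmc \not\models c$, and the containment holds because $\textsf{true}(\Imc_w) \subseteq V \setminus F_c = \textsf{true}(\Imc^{\star})$ at the start. Under this invariant every violated clause has some variable of $X'$ outside $\textsf{true}(\Imc_w)$, so a safe $u$ exists; a safe choice can be made constructively by tentatively removing each candidate in $X' \setminus X_c$ and propagating the resulting constraints through the polynomial-size $\Hmc$, which is tractable owing to the partition structure of its mvd clauses. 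Since no queries to $\Tmc$ are required beyond those already used to obtain $c$, the construction runs in polynomial time in $|\Tmc|$, providing the function $\g$ of Definition~\ref{def:reduction2} and, together with Lemma~\ref{lem:mvdf-q-eq}, completing the reduction.
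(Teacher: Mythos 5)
There is a real gap at the decisive step of your construction. Your stated justification for the existence of $u \in X' \setminus X_c$ (that $X' \subseteq X_c$ would force $\Hmc \models c$) is only valid at the first iteration, when ${\sf false}(\Imc^{\star})$ is exactly $F_c$: once extra variables have been removed, a violated clause may be witnessed by a false pair outside $F_c$, and $X' \subseteq X_c$ then only yields $\Hmc \models X_c \rightarrow v' \vee w'$ for some currently-false $v',w'$, which is no contradiction. You patch this with the witness-model invariant, but that invariant is preserved only if $u$ is chosen outside ${\sf true}(\Imc_w)$, and the subroutine you invoke to find such a $u$ --- ``tentatively removing each candidate and propagating the resulting constraints through $\Hmc$'' --- is exactly the nontrivial algorithmic content of the lemma and is neither specified nor shown to run in polynomial time. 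Deciding whether a candidate removal still extends to a model of $\Hmc$ containing $X_c$ in its true set and avoiding the current false set is a constrained satisfiability question for MVDF; nothing in your argument establishes that it is tractable, so the proof is incomplete, and with an arbitrary choice of $u$ it is simply wrong. Concretely, take $V=\{x,v,w,a,b\}$, $c = x \rightarrow v \vee w$, $\Tmc = \{x \rightarrow v \vee wab\}$ and $\Hmc = \{ab \rightarrow xv \vee w,\ x \rightarrow vwa \vee b\}$ (so $\Tmc \models c$, and ${\sf true}(\Imc)=\{x,b\}$ witnesses $\Hmc \not\models c$). Starting from ${\sf true}(\Imc^{\star})=\{x,a,b\}$ the only violated clause is $ab \rightarrow xv \vee w$, with $X'\setminus X_c=\{a,b\}$; removing $b$ leads to ${\sf true}(\Imc^{\star})=\{x,a\}$, where $x \rightarrow vwa \vee b$ is violated but its antecedent is $\{x\}=X_c$, so the process is stuck at an interpretation that falsifies both $\Hmc$ and $\Tmc$ and hence lies in neither side of the symmetric difference. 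So the choice of $u$ genuinely matters already at the first step, and your existence claim fails at the stuck state.

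For comparison, the paper does not reprove this lemma: it imports the construction of \cite{hermo2015exact}, which builds a polynomial-size semantic tree for the hypothesis $\Hmc$. An equivalent, self-contained route is the dependency-basis style partition refinement applied to $\Hmc$: start from the single block $V \setminus X_c$ and split a block $B$ whenever some clause $X' \rightarrow Y' \vee Z' \in \Hmc$ with $X' \cap B = \emptyset$ meets both $Y'$ and $Z'$ inside $B$; at the stable partition the false set of any model of $\Hmc$ whose true set contains $X_c$ lies inside a single block, so $v,w$ share a block $B$ (because $\Hmc \not\models c$), and the interpretation with ${\sf false}(\Imc)=B$ satisfies $\Hmc$ by stability while violating $c$ and hence $\Tmc$. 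This gives the canonical, order-independent object that your iterative removal is implicitly groping for; without it (or a proved polynomial safe-choice test), your argument does not go through.
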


The proof of Lemma \ref{lem:mvdf-q-mem} in \cite{hermo2015exact}
involves the construction of a polynomial size 
semantic tree for the hypothesis \Hmc. 
The transformation of negative $2$-quasi-Horn counterexamples  is also possible. In this case, 
we would require additional (polynomially many) membership queries to construct a semantic tree.
Lemmas \ref{lem:mvdf-q-eq} and \ref{lem:mvdf-q-mem} show 
how one can compute, respectively, $\f$   
and $\g$ described in Definition \ref{def:reduction2}. 
Then, using Theorem \ref{th:reduction}, we obtain an alternative proof for  the result   
presented by \cite{hermo2015exact}. 

\begin{theorem}[\cite{hermo2015exact}]
The problem of learning MVDF from $2$-quasi-Horn clauses, more precisely, the 
learning   framework $\Fmf\text{(MVDF$_\Qmc$)}$, is polynomial time exactly learnable. 
\end{theorem}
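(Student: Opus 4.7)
The plan is to derive this theorem as an immediate corollary of Theorem \ref{th:reduction}, using $\Fmf\text{(MVDF$_\Imc$)}$ (which is polynomial time exactly learnable by Theorem \ref{th:interpretations}) as the target framework of the reduction. What I need to show is that $\Fmf\text{(MVDF$_\Qmc$)}$ polynomial time reduces to $\Fmf\text{(MVDF$_\Imc$)}$ in the sense of Definition \ref{def:reduction2}, so I must exhibit the functions $\f$ and $\g$ satisfying the three bullet conditions and a polynomial $p_1$ bounding the hypothesis size.

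First I would take $p_1(|\Tmc|) = |V|\cdot|\Tmc|$, which by the analysis in Section~\ref{sec:mvdf-interpretation} bounds the size of the hypothesis produced by Algorithm~\ref{alg:learner}. Second, I would define $\f$ directly from Lemma~\ref{lem:mvdf-q-eq}: given an interpretation $\Imc$, transform it into the polynomial-size set $C$ of $2$-quasi-Horn clauses appearing in the proof of that lemma, call ${\sf MEM}_{\Tmc,\Fmf\text{(MVDF$_\Qmc$)}}$ once on each clause, and return \emph{yes} iff no clause in $C$ is entailed. This correctly decides whether $\Imc \in \mu_\Imc(\Tmc)$ and runs in time polynomial in $|\Tmc|$ and $|\Imc|$.

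Third, for $\g$ I would use Lemma~\ref{lem:mvdf-q-mem}: since a membership query can always verify before asking an equivalence query that $\Tmc \models \Hmc$ (equivalently, that every mvd clause of $\Hmc$ is a consequence of $\Tmc$), we can assume without loss of generality that every counterexample is positive, and then apply the lemma to transform it into a negative interpretation-counterexample. Here I would invoke the bound $|\Hmc| \leq p_1(|\Tmc|)$ to guarantee that the semantic tree construction of~\cite{hermo2015exact} used in Lemma~\ref{lem:mvdf-q-mem} runs in polynomial time. Together, Lemmas~\ref{lem:mvdf-q-eq} and~\ref{lem:mvdf-q-mem} verify all three conditions of Definition~\ref{def:reduction2}.

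The main subtlety, and the step I expect to require the most care, is the handling of negative $2$-quasi-Horn counterexamples: if one does not eliminate them via the ${\sf MEM}_{\Tmc,\Fmf\text{(MVDF$_\Qmc$)}}$ safeguard, then $\g$ must also translate them, which (as remarked after Lemma~\ref{lem:mvdf-q-mem}) needs additional membership queries to build the semantic tree of the hypothesis. Once this technicality is resolved, applying Theorem~\ref{th:reduction} to the reduction $\Fmf\text{(MVDF$_\Qmc$)} \to \Fmf\text{(MVDF$_\Imc$)}$ together with Theorem~\ref{th:interpretations} immediately yields the desired polynomial time exact learnability of $\Fmf\text{(MVDF$_\Qmc$)}$.
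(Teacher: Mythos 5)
Your proposal is correct and follows essentially the same route as the paper: it establishes a polynomial time reduction from $\Fmf\text{(MVDF$_\Qmc$)}$ to $\Fmf\text{(MVDF$_\Imc$)}$ by computing $\f$ via the membership-query simulation of Lemma~\ref{lem:mvdf-q-eq} and $\g$ via the counterexample transformation of Lemma~\ref{lem:mvdf-q-mem} (using the membership oracle to keep $\Tmc\models\Hmc$ so counterexamples are positive w.l.o.g.), and then applies Theorem~\ref{th:reduction} together with Theorem~\ref{th:interpretations}. The handling of the hypothesis-size bound $p_1$ and of negative counterexamples matches the paper's treatment, so there is nothing to add.
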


The  difficulty in showing a reduction in the other direction, 
from $\Fmf\text{(MVDF$_\Imc$)}$ to $\Fmf\text{(MVDF$_\Qmc$)}$, 
is 
to decide whether the target  entails   
a $2$-quasi-Horn clause  using polynomially many 
membership queries with interpretations as input.

\subsection{Multivalued Dependency Formulas: from $2$-quasi-Horn to Entailments (mvd clauses)}
\label{subsec:mvdf-quasi-red}

 The learning  
framework $\Fmf\text{(MVDF$_\Emc$)}$  is   defined as 
 $(E_\Emc,\Lmc_{\sf M},\mu_\Emc)$, where  $\Lmc_{\sf M}$ 
 is the set of all MVDFs which can be formulated in a set of variables $V$, 
  $E_\Emc$ is the set of mvd clauses over variables in $V$ and, 
  for a MVDF $\Tmc \in \Lmc_{\sf M}$, 
  $\mu_\Emc(\Tmc)$ is defined as $\{e \in E_\Emc \mid \Tmc \models e\}$. 

We show that learning MVDF from $2$-quasi-Horn clauses is reducible to 
learning MVDF from entailments. 
More precisely, $\Fmf\text{(MVDF$_\Qmc$)}$ polynomial time reduces to 
$\Fmf\text{(MVDF$_\Emc$)}$. 
 To reduce the problem we:
(1) transform the membership queries, which come as mvd clauses 
into $2$-quasi-Horn clauses; and  
(2) transform 
counterexamples given by equivalence queries, which come as $2$-quasi-Horn clauses into mvd clauses. 
Let \Tmc be the target MVDF and \Hmc the learner's hypothesis.    
The next lemma is immediate, it follows from the fact that any mvd clause 
is equivalent to polynomially many $2$-quasi-Horn clauses 
(see Remark~\ref{rem:defs}). 

\begin{lemma}\label{lem:mvdf-e-eq}
Let $\Fmf\text{(MVDF$_\Qmc$)} = (E_\Qmc,\Lmc_{\sf M},\mu_\Qmc)$ be the 
learning MVDF  from $2$-quasi-Horn framework and 
$\Fmf\text{(MVDF$_\Emc$)} = (E_\Emc,\Lmc_{\sf M},\mu_\Emc)$ 
be the learning MVDF from entailments framework. 
For any mvd clause $c$ of a target concept 
representation $\Tmc\in \Lmc_{\sf M}$, one can decide in polynomial time  in $|\Tmc|$ whether 
$c \in \mu_\Emc(\Tmc)$. 
\end{lemma}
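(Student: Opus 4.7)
The plan is to reduce an mvd-clause entailment query to polynomially many $2$-quasi-Horn entailment queries by using the decomposition of mvd clauses into $2$-quasi-Horn clauses already noted in Remark~\ref{rem:defs}, part 2. Recall that the membership oracle of $\Fmf\text{(MVDF$_\Qmc$)}$ accepts a $2$-quasi-Horn clause $d$ and returns ``yes'' iff $\Tmc \models d$; our goal is to decide, for any mvd clause $c$ over $V$, whether $\Tmc \models c$ by posing only polynomially many such queries.

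Concretely, given an input $c = X \rightarrow Y \vee Z$, first I would handle the degenerate cases: if $Y \cup Z = \emptyset$ then $c$ is $V \rightarrow \mathbf{F}$, which itself is a $2$-quasi-Horn clause and a single membership query to $\text{MEM}_{\Tmc,\Fmf\text{(MVDF$_\Qmc$)}}$ suffices; if exactly one of $Y,Z$ is empty, say $Z$, then $c$ is the conjunction of the Horn clauses $\{X \rightarrow y : y \in Y\}$, each of which is a $2$-quasi-Horn clause. In the main case where both $Y = \{y_1,\ldots,y_m\}$ and $Z = \{z_1,\ldots,z_n\}$ are non-empty, I would apply distributivity exactly as in the example of Remark~\ref{rem:defs}.2 to obtain
\[
c \;\equiv\; \bigwedge_{i=1}^{m}\bigwedge_{j=1}^{n} \bigl(X \rightarrow y_i \vee z_j\bigr),
\]
where each conjunct is a $2$-quasi-Horn clause and the total number of conjuncts is $mn \leq |V|^2$.

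The decision procedure then poses to $\text{MEM}_{\Tmc,\Fmf\text{(MVDF$_\Qmc$)}}$ one query per conjunct of this decomposition; it returns ``yes'' iff every query returns ``yes''. Correctness is immediate from the logical equivalence above together with the fact that $\Tmc \models \varphi_1 \wedge \varphi_2$ iff $\Tmc \models \varphi_i$ for each $i$. The number of queries and the time to construct them are bounded polynomially in $|V|$, and hence polynomially in $|\Tmc|$, yielding the claimed bound. There is no real obstacle here: the only thing to be careful about is to explicitly cover the cases where $Y$ or $Z$ is empty, since the distributivity step as stated requires both sides to contain at least one variable; once those small cases are dispatched, the argument is a direct application of Remark~\ref{rem:defs}.2.
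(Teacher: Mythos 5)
Your main case (both $Y$ and $Z$ non-empty) is exactly the paper's argument: the paper proves this lemma by a one-line appeal to Remark~\ref{rem:defs}.2, i.e.\ decompose the mvd clause into the $2$-quasi-Horn clauses $X \rightarrow y_i \vee z_j$ and ask one membership query per conjunct. That part is correct, as is the case $Y\cup Z=\emptyset$, where $V\rightarrow \mathbf{F}$ is itself queried directly (the paper does the same in Lemma~\ref{lem:mvdf-q-eq}).

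The gap is in your treatment of the case where exactly one of $Y,Z$ is empty. Under the paper's semantics, clause (b) of the violation definition says that $X\rightarrow Y\vee \emptyset$ is violated by $\Imc$ only when ${\sf false}(\Imc)$ is a \emph{singleton} $\{v\}$ with $v\in Y$; it is \emph{not} the Horn formula $\bigwedge_{y\in Y}(X\rightarrow y)$ once $|Y|\geq 2$. Concretely, take $V=\{1,2,3\}$, let $c$ be $3\rightarrow 12\vee\emptyset$ (so $X=\{3\}$, $Y=\{1,2\}$, $Z=\emptyset$) and $\Tmc=\{c\}$. Then $\Tmc\models c$ trivially, but the interpretation $\Imc$ with ${\sf true}(\Imc)=\{3\}$ satisfies $\Tmc$ (its two false variables mean clause (b) is not triggered) while falsifying both $3\rightarrow 1$ and $3\rightarrow 2$; hence both of your membership queries to ${\sf MEM}_{\Tmc,\Fmf\text{(MVDF$_\Qmc$)}}$ return `no' and your procedure wrongly outputs $c\notin\mu_\Emc(\Tmc)$. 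The correct decomposition for this case follows from the observation that the only interpretations violating $X\rightarrow Y\vee\emptyset$ are those with ${\sf false}(\Imc)=\{y\}$, $y\in Y$: the clause is equivalent to $\bigwedge_{y\in Y}\bigl(V\setminus\{y\}\rightarrow y\bigr)$, each conjunct being a Horn (hence $2$-quasi-Horn) clause, so you should query these instead of $X\rightarrow y$ (these are precisely the clauses the paper itself uses when constructing $\Hmc_0$ and in the proof of Lemma~\ref{lem:mvdf-q-eq}). With that replacement (note that when $|Y|=1$ your clause coincides with $V\setminus\{y\}\rightarrow y$, which is why the error only shows up for $|Y|\geq 2$), your argument goes through and still uses at most $|V|^2$ queries.
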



 Lemma~\ref{lem:mvdf-e-mem} shows how one can transform the counterexamples. 
To show Lemma~\ref{lem:mvdf-e-mem}, we use 
the following technical lemma, proved by \cite{hermo2015exact}.

\begin{lemma}[\cite{hermo2015exact}]\label{lem:2-quasi-horn-mvd}
Let \Tmc be a set of mvd clauses formulated in $V$. 
If $\Tmc \models V_1 \rightarrow V_2 \vee V_3$ then 
either $\Tmc \models V_1 \rightarrow (V_2\cup\{v\}) \vee V_3$ or 
$\Tmc \models V_1 \rightarrow V_2 \vee (V_3\cup\{v\})$, where $V_1,V_2,V_3,\{v\} \subseteq V$ and  
 $V_2,V_3$ are non-empty.
 \end{lemma}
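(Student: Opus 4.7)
The plan is to argue by contradiction using the intersection-closure property for MVDF stated in Lemma~\ref{lem:technical}. The trivial cases where $v \in V_1 \cup V_2 \cup V_3$ are immediate (the extended clause is then propositionally equivalent to the given one, so entailment is preserved), so I will assume $v \in V \setminus (V_1 \cup V_2 \cup V_3)$.

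Suppose for contradiction that $\Tmc \not\models V_1 \rightarrow (V_2 \cup \{v\}) \vee V_3$ and $\Tmc \not\models V_1 \rightarrow V_2 \vee (V_3 \cup \{v\})$. I would extract two models $\Imc_1,\Imc_2$ of $\Tmc$ that respectively witness these non-entailments. Unpacking the violations together with the hypothesis $\Tmc \models V_1 \rightarrow V_2 \vee V_3$, I obtain: $V_1 \subseteq {\sf true}(\Imc_1) \cap {\sf true}(\Imc_2)$; $V_2 \subseteq {\sf true}(\Imc_1)$ but $V_2 \not\subseteq {\sf true}(\Imc_2)$; $V_3 \subseteq {\sf true}(\Imc_2)$ but $V_3 \not\subseteq {\sf true}(\Imc_1)$; and crucially $v \in {\sf false}(\Imc_1) \cap {\sf false}(\Imc_2)$ (forced because in each $\Imc_i$ the extended disjunct fails while $V_2 \vee V_3$ still has to hold via the other disjunct).

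The crux is now immediate: since $v$ is false in both interpretations, ${\sf true}(\Imc_1) \cup {\sf true}(\Imc_2) \subsetneq V$, so the contrapositive of Lemma~\ref{lem:technical} yields $\Imc_1 \cap \Imc_2 \models \Tmc$. But $\Imc_1 \cap \Imc_2$ covers $V_1$ while containing neither $V_2$ (as $V_2 \not\subseteq {\sf true}(\Imc_2)$) nor $V_3$ (as $V_3 \not\subseteq {\sf true}(\Imc_1)$); hence it violates $V_1 \rightarrow V_2 \vee V_3$, contradicting $\Tmc \models V_1 \rightarrow V_2 \vee V_3$. The only subtle step I expect is recognising the double role of $v$: the two failed entailments force $v$ to be false in both witnesses, and this joint falsity is precisely what triggers Lemma~\ref{lem:technical} and lets the intersection be used as the contradicting interpretation.
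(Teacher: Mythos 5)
Your argument is correct. Note first that this lemma is one the paper itself does not prove: it is imported verbatim from \cite{hermo2015exact}, so there is no in-paper proof to compare against; your write-up effectively supplies a self-contained derivation from Lemma~\ref{lem:technical}, which is also quoted from the same source and is stated independently here, so there is no circularity within this paper. The chain of deductions checks out: if both extended entailments fail, the two witnessing models $\Imc_1,\Imc_2$ of $\Tmc$ cover $V_1$, and since each must still satisfy $V_1 \rightarrow V_2 \vee V_3$, you correctly force $V_2 \subseteq {\sf true}(\Imc_1)$, $V_3 \subseteq {\sf true}(\Imc_2)$, $V_3 \not\subseteq {\sf true}(\Imc_1)$, $V_2 \not\subseteq {\sf true}(\Imc_2)$, and $v \in {\sf false}(\Imc_1) \cap {\sf false}(\Imc_2)$; then ${\sf true}(\Imc_1) \cup {\sf true}(\Imc_2) \neq V$, the contrapositive of Lemma~\ref{lem:technical} gives $\Imc_1 \cap \Imc_2 \models \Tmc$, and the intersection covers $V_1$ while satisfying neither $V_2$ nor $V_3$, contradicting $\Tmc \models V_1 \rightarrow V_2 \vee V_3$. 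The trivial cases $v \in V_1 \cup V_2 \cup V_3$ are disposed of correctly. One small imprecision: for $\Imc_1$ the clause $V_2 \vee V_3$ is saved by $V_2$ itself (the disjunct being extended), not by ``the other disjunct'' as your parenthetical suggests; this is purely a matter of phrasing, since the facts you actually list and use are the right ones.
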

 
\begin{lemma}\label{lem:mvdf-e-mem}
Let $\Fmf\text{(MVDF$_\Qmc$)} = (E_\Qmc,\Lmc_{\sf M},\mu_\Qmc)$ be the 
learning MVDF  from $2$-quasi-Horn framework and 
$\Fmf\text{(MVDF$_\Emc$)} = (E_\Emc,\Lmc_{\sf M},\mu_\Emc)$ 
be the learning MVDF from entailments framework. 
Assume that the target \Tmc and the hypothesis \Hmc are in variables $V$ and 
$|\Hmc|$ is polynomial in $|\Tmc|$.   
If   
$c \in \mu_\Qmc(\Tmc) \oplus \mu_\Qmc(\Hmc)$ then  
one can construct in  time polynomial in $|\Tmc|$ an mvd clause $c'$ 
such that $c' \in \mu_\Emc(\Tmc) \oplus \mu_\Emc(\Hmc)$. 
\end{lemma}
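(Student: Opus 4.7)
The plan is to transform the given $2$-quasi-Horn counterexample $c$ into an mvd counterexample $c'$, dispatching on whether $c$ is a positive ($\Tmc\models c$, $\Hmc\not\models c$) or negative ($\Tmc\not\models c$, $\Hmc\models c$) counterexample. The main case is $c = X \rightarrow v \vee w$ with two unnegated literals; cases with zero or one unnegated literal (where $c$ is Horn) are handled by Remark~\ref{rem:defs}, which rewrites any such Horn-style clause as an explicit conjunction of two mvd clauses, after which one can return whichever of the two preserves the counterexample property by direct inspection of the (known) hypothesis $\Hmc$.

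The key observation for the main case is that every mvd clause $c' = X \rightarrow Y \vee Z$ with $v \in Y$, $w \in Z$, pairwise disjoint $X, Y, Z$, and $X \cup Y \cup Z = V$ satisfies $c' \models c$ propositionally (because $\bigwedge Y \models v$ and $\bigwedge Z \models w$). Hence $\Hmc \not\models c$ forces $\Hmc \not\models c'$ and $\Tmc \not\models c$ forces $\Tmc \not\models c'$, so it suffices to produce any such $c'$ with the correct entailment on the remaining side. For the positive case, I would build $c'$ incrementally starting from $Y_0 := \{v\}$ and $Z_0 := \{w\}$, which satisfies $\Tmc \models X \rightarrow Y_0 \vee Z_0$ (propositionally) by hypothesis. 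At iteration $i + 1$, I pick any $u \in V \setminus (X \cup Y_i \cup Z_i)$ and test whether $\Tmc$ propositionally entails $X \rightarrow (Y_i \cup \{u\}) \vee Z_i$. This entailment is equivalent to the conjunction $\bigwedge_{v' \in Y_i \cup \{u\},\, w' \in Z_i}(X \rightarrow v' \vee w')$, so it can be tested with at most $(|Y_i|+1)\cdot|Z_i|$ queries to the $2$-quasi-Horn membership oracle ${\sf MEM}_{\Tmc, \Fmf\text{(MVDF$_\Qmc$)}}$. If all queries answer \emph{yes} I set $Y_{i+1} := Y_i \cup \{u\}$, $Z_{i+1} := Z_i$; otherwise, by Lemma~\ref{lem:2-quasi-horn-mvd} applied with $V_2 := Y_i$ and $V_3 := Z_i$ (both non-empty as they contain $v$ and $w$), I am guaranteed that $\Tmc \models X \rightarrow Y_i \vee (Z_i \cup \{u\})$, so I set $Y_{i+1} := Y_i$, $Z_{i+1} := Z_i \cup \{u\}$. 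After at most $|V|$ iterations, $Y_k \cup Z_k = V \setminus X$ and $c' := X \rightarrow Y_k \vee Z_k$ is a full mvd clause with $\Tmc \models c'$.

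For the negative case, the same incremental procedure works with $\Hmc$ in the role of $\Tmc$; since $\Hmc$ is given to us explicitly as an MVDF of size polynomial in $|\Tmc|$, each entailment test $\Hmc \models X \rightarrow v' \vee w'$ can be performed directly in polynomial time (using the standard polynomial-time algorithms for mvd reasoning), without any oracle queries. The resulting $c'$ satisfies $\Hmc \models c'$, and $\Tmc \not\models c'$ follows from $c' \models c$ together with $\Tmc \not\models c$.

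The main obstacle I anticipate is ensuring that the greedy iteration is always well-defined: a priori, \emph{both} candidate extensions at some step could fail, forcing an expensive search over partitions of the remaining variables. Lemma~\ref{lem:2-quasi-horn-mvd} is precisely what rules this out, guaranteeing that whenever the first extension fails the second must succeed. Together with the bound $|V|$ on the number of iterations and the $O(|V|^2)$ cost per iteration, this yields the required polynomial time bound in $|\Tmc|$.
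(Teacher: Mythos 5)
Your proposal is correct and follows essentially the same route as the paper's proof: starting from $Y=\{v\}$, $Z=\{w\}$, greedily assigning each remaining variable to $Y$ or $Z$ using Lemma~\ref{lem:2-quasi-horn-mvd} to guarantee one side always works, testing $\Tmc$-entailment through $2$-quasi-Horn membership queries (via the distribution in Remark~\ref{rem:defs}) in the positive case and evaluating $\Hmc$ directly in the negative case, and transferring the failing side by the observation that the constructed mvd clause entails $X\rightarrow v\vee w$. The only difference is cosmetic: you additionally spell out the degenerate Horn-shaped counterexamples and the per-step query count, which the paper leaves implicit.
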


\begin{proof}
We show how one can transform a $2$-quasi-Horn clause $X\rightarrow v \vee w$ that is a
  positive  counterexample (in $\Fmf\text{(MVDF$_\Qmc$)}$)
into a  positive  counterexample (in $\Fmf\text{(MVDF$_\Emc$)}$). 
If $\Tmc  \models X\rightarrow v \vee w$ and $\Hmc \not\models X\rightarrow v \vee w$ then 
we construct an mvd clause as the result of initially setting 
$W  = V \setminus (X\cup \{v,w\})$,  $Y = \{v\}$ and $Z = \{w\}$ and 
then   applying the following rule until $X\cup Y \cup Z=V$:
\begin{itemize}
\item if $\Tmc \models X \rightarrow (Y\cup\{w'\}) \vee Z$, where $w' \in W$, (checked by posing 
 membership queries
 to ${\sf MEM}_{\Tmc,\Fmf\text{(MVDF$_\Qmc$)}}$, as in Remark~\ref{rem:defs})   
then add $w'$ to $Y$. Otherwise, add $w'$ to $Z$.
\end{itemize}
By Lemma~\ref{lem:2-quasi-horn-mvd} either $\Tmc \models X \rightarrow (Y\cup\{w'\}) \vee Z$ 
or   $\Tmc \models X \rightarrow Y \vee (Z\cup\{w'\})$ must hold. 
Then, $\Tmc\models X\rightarrow Y \vee Z $. 
As $\{X\rightarrow Y \vee Z\}\models X\rightarrow v \vee w$, we  have
that $\Hmc\not\models X\rightarrow Y \vee Z $. 
That is, $X\rightarrow Y \vee Z \in \mu_\Emc(\Tmc) \oplus \mu_\Emc(\Hmc)$. 
When $\Tmc \not\models X\rightarrow v \vee w$ and $\Hmc \models X\rightarrow v \vee w$ 
the argument is similar 
but  we need to check whether 
$\Hmc \models X \rightarrow (Y\cup\{w'\}) \vee Z$, 
where $w' \in W$.  
Since in this case we evaluate the hypothesis, 
no membership query is necessary to produce a negative counterexample.   
\end{proof}

Lemmas   \ref{lem:mvdf-e-eq} and \ref{lem:mvdf-e-mem}  show 
how one can compute, respectively, $\f$   
and $\g$ described in Definition \ref{def:reduction2}, and, so,
   $\Fmf\text{(MVDF$_\Qmc$)}$ polynomial time reduces to 
$\Fmf\text{(MVDF$_\Emc$)}$. 
The  difficulty in showing a reduction in the other direction, 
from $\Fmf\text{(MVDF$_\Emc$)}$ to $\Fmf\text{(MVDF$_\Qmc$)}$, 
is  
to decide whether the target  entails   
a $2$-quasi-Horn clause  using polynomially many 
membership queries with mvd clauses as input.

\end{document}